\newcommand{\vs}{\emph{vs.}\xspace}
\newcommand{\eg}{e.g.\xspace}
\newcommand{\ours}{\textbf{BokehDepth }}
\newcommand{\oursbf}{\textit{\textbf{BokehDepth }}}
\theoremstyle{plain}
\theoremstyle{definition}
\theoremstyle{remark}
\icmltitlerunning{Boosting Monocular Metric Depth Estimation via Bokeh Rendering}
\begin{document}

\twocolumn[
  \icmltitle{Boosting Monocular Metric Depth Estimation via Bokeh Rendering}



  \icmlsetsymbol{equal}{*}

  \begin{icmlauthorlist}
    \icmlauthor{Hangwei Zhang}{slab,buaa}
    \icmlauthor{Armando Fortes}{slab}
    \icmlauthor{Tianyi Wei}{slab}
    \icmlauthor{Xingang Pan}{slab}
  \end{icmlauthorlist}

  \icmlaffiliation{slab}{S-Lab, Nanyang Technological University}
  \icmlaffiliation{buaa}{Beihang University}

  \icmlcorrespondingauthor{Xingang Pan}{xingang.pan@ntu.edu.sg}

  \icmlkeywords{Machine Learning, ICML}

  \vskip 0.3in
  {
\begin{center}
    \captionsetup{type=figure}
    \vspace{-15pt}
    \includegraphics[width=\textwidth, trim={0 0 0 0}, clip]{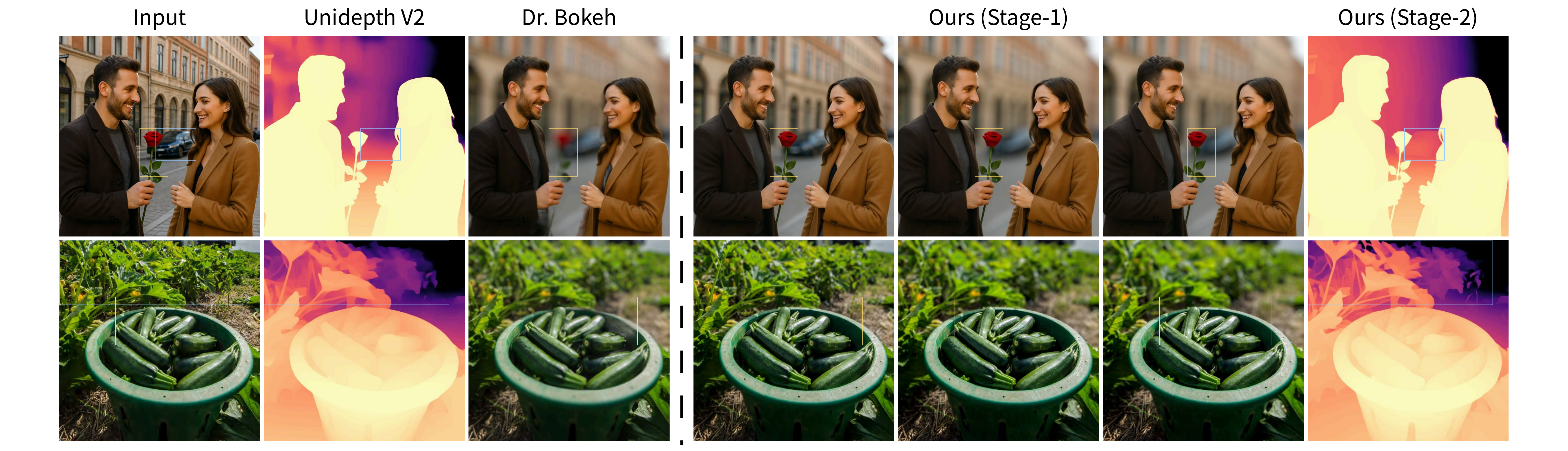}
    \vspace{-15pt}
    \caption{\oursbf decouples bokeh synthesis from depth prediction and uses lens-aware defocus as a supervision-free geometric cue to improve the accuracy and physical consistency of monocular depth estimation. \emph{Left:} conventional pipelines predict depth from a single sharp image and render bokeh from the noisy depth map. \emph{Right:} our two-stage framework, where Stage-1 generates a calibrated bokeh stack
    from a single image and Stage-2 built on UniDepthV2~\citep{piccinelli2025unidepthv2} fuses defocus cues to produce sharper and more reliable metric depth.}
    \label{fig:teaser}
\end{center}

}
]



\printAffiliationsAndNotice{}  

\begin{abstract}

Bokeh rendering and depth estimation share a fundamental optical connection, yet existing methods fail to fully exploit this reciprocity. Conventional bokeh pipelines rely heavily on noisy depth maps that inevitably introduce visual artifacts. Conversely, existing monocular depth models typically follow two flawed paradigms. Generative diffusion-based frameworks often lack consistent metric scale. Meanwhile, feed-forward metric depth models frequently fail in textureless or distant regions where defocus blur can provide geometric information. We propose \textbf{\textit{BokehDepth}}, a two-stage framework that treats synthetic defocus as a supervision-free geometric signal. In the first stage, a physically grounded generative model produces calibrated bokeh stacks from a single sharp input without requiring prior depth input. Subsequently, a lightweight defocus-aware aggregation module integrates these stacks into the encoder of a depth estimation framework. This mechanism allows the model to extract consistent geometric features from the defocus dimension while keeping the decoder architecture unchanged. Experiments demonstrate that \oursbf achieves superior visual bokeh fidelity compared to depth-dependent rendering baselines and consistently enhances the metric accuracy of state-of-the-art monocular depth models. Project page: \small{\url{https://fogradio.github.io/BokehDepth_Project/}}.

\end{abstract}    
\vspace{-10pt}
\section{Introduction}
\label{sec:intro}

\begin{figure*}[t]
    \includegraphics[width=\linewidth]{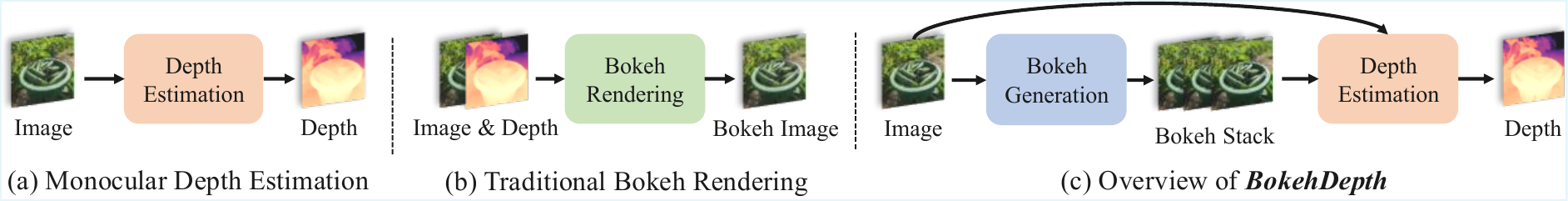}
    \vspace{-15pt}
    \caption{From monocular depth and depth-based bokeh to \textbf{\textit{BokehDepth}}. (a) Standard monocular depth estimation predicts a depth map from a single RGB image. (b) Classical bokeh rendering takes an image and its depth map as input to synthesize bokeh. (c) \ours first generates a calibrated bokeh stack from the input image, and then uses the induced defocus cues to enhance depth estimation.}
    \label{fig:intro}
    \vspace{-15pt}
\end{figure*}

Bokeh is a lens-originated optical effect that describes the aesthetic quality of out-of-focus regions, which often helps emphasize in-focus subjects by softly blurring distracting details~\cite{Mandl2024NeuralBokeh}. monocular metric depth estimation aims to predict the depth of each pixel relative to the camera from a single RGB image to recover the scene’s 3D geometry with scale~\citep{saxena2008make3d,eigen2014multiscal,ranftl2020midas}. monocular metric depth estimation and bokeh synthesis are intrinsically linked by the lens imaging geometry~\citep{pentland1989shape, subbarao1994depth}. Accurate depth maps enable physically consistent and controllable bokeh rendering~\citep{sheng2024dr,luo2023defocus}, and defocus cues from bokeh supply informative signals that help resolve geometric ambiguities in depth prediction~\citep{tang2017wild, wijayasingha2024cameraindep}. 

Most high-quality bokeh rendering pipelines still rely on a depth or disparity map to guide spatially varying blur~\cite{Peng2022BokehMe, peng2022mpib}. This requirement increases system complexity and makes the final visual quality tightly bounded by the depth estimator~\cite{seizinger2025bokehlicious}. Any local depth error is immediately translated into an incorrect blur radius or a broken occlusion edge~\citep{Zhu2025BokehDiff}. Classical depth-from-defocus methods rely on multi-aperture pairs that are hard to acquire and their models often lack robustness across cameras and scenes~\cite{ikoma2021depth, favaro2005geometric, favaro2008shape, nayar2002real}. Modern Monocular Metric Depth Estimation (MMDE) has made rapid progress on zero-shot indoor and outdoor scenes, powered by large-scale pre-training and vision transformers~\cite{huang2025systematic, bochkovskii2024depthpro,wang2025moge}. Even so, these models still struggle on weakly textured distant regions and on geometrically flat surfaces~\citep{guo2025depth, gasperini2023robust}. These are exactly the cases where defocus differences can supply an additional geometric signal that is independent of scene appearance~\cite{ens2002investigation, blayvas2007role, yang2022deep}. These limitations call for a unified, physically grounded mechanism that exploits defocus as a reliable geometric cue without tying bokeh quality to a single depth estimator.

Our core insight is to \emph{decouple bokeh synthesis from depth prediction and leverage defocus as a supervision-free geometric cue that enhances the accuracy and physical consistency of monocular metric depth estimation}. However, turning this idea into a practical system requires us to address several coupled challenges. We must enforce physically consistent and controllable depth-free bokeh~\citep{Wadhwa2018}, make defocus cues interpretable, calibratable and stable across domains~\citep{abuolaim2020defocus}, integrate these signals safely with strong monocular depth foundations~\citep{piccinelli2025unidepthv2,yang2024depthanythingv2}, and prevent noisy or spurious blur from corrupting depth predictions~\citep{lee2021iterative}. We propose \textbf{\textit{BokehDepth}}, a two stage framework shown in~\Cref{fig:intro} that first employs a physically grounded controllable depth-free bokeh generator to construct reliable bokeh stacks and then injects defocus cues into any monocular depth foundation model through a defocus-aware module inserted in the encoder.

In Stage-1, we build a physically guided yet depth-free bokeh generator on top of a strong pretrained image-editing backbone~\cite{blackforestlabsFLUXKontext2025}. We unify sparse real focus–defocus pairs, in-the-wild defocused photographs with lens metadata, and synthetic bokeh renderings by mapping their defocus level to a single thin-lens-derived control scalar that measures effective bokeh strength~\cite{fortes2025bokeh}. Conditioned on this scalar, Stage-1 produces from a single sharp input a compact bokeh stack with multiple calibrated defocus levels, without requiring any depth map. In Stage-2, we feed the Stage-1 bokeh stack and the original sharp frame into a discriminative monocular depth encoder. A lightweight bokeh-aware aggregation module is inserted into the encoder to fuse features along the calibrated bokeh-strength axis, exposing depth-sensitive defocus variations while leaving the downstream decoder and metric head unchanged. This design lets us plug \ours into strong monocular depth foundations and turn synthetic defocus cues into consistent gains in metric accuracy and physical consistency.
We summarize our contributions as follows:

\vspace{-8pt} 
\begin{itemize}
    \setlength{\itemsep}{0pt} 
    \setlength{\parsep}{0pt}
    \setlength{\parskip}{0pt}
    
    \item We design a 
    bokeh renderer on top of powerful pretrained image editing backbone. Through a unified real and synthetic data pipeline and bokeh-conditioned adapters, Stage-1 generates reliable multi-strength bokeh stacks without using any depth map.
    \item We introduce a defocus-aware 
    module that can be plugged into diverse monocular metric depth estimators. Given an input image and synthetic bokeh stack, it exposes stable defocus cues that enhance depth estimation.
    \item We show that combining Stage-1 and Stage-2 yields the \oursbf framework, which improves visual fidelity over depth-map-based bokeh pipelines and consistently boosts the metric performance of strong monocular depth models across challenging indoor and outdoor scenes.
\end{itemize}
\vspace{-8pt}

\section{Related Works}
\label{sec:Related work}

\subsection{Bokeh Synthesis}
Defocus has been modeled with physically grounded camera and aperture formulations and with light field integration, which motivate filtering and layered reconstruction \citep{Potmesil1981,Kraus2007,Lee2008,Lee2010,Yan2015}. Depth map based image space rendering uses pyramidal filtering or per pixel layered splatting but struggles near discontinuities due to occlusion and color leakage \citep{Kraus2007,Lee2008,Lee2010}. Computational photography estimates depth from stereo or dual pixel signals and then synthesizes shallow depth of field, yet remains sensitive to segmentation and disparity errors \citep{Barron2015,Wadhwa2018}. Learning based pipelines train neural renderers for controllable bokeh, and physics guided hybrids reduce artifacts while retaining user control \citep{Wang2018DeepLens,Xiao2018DeepFocus,Ignatov2020,Qian2020,Peng2022BokehMe}. Differentiable and occlusion-aware renderers improve quality around edges, and layered scene representations such as multiplane images better handle partial occlusion, with extensions to video and mixed reality that enforce consistent lens characteristics \citep{sheng2024dr,peng2022mpib,Mandl2024NeuralBokeh, seizinger2025bokehlicious}. Generative diffusion methods inject strong image priors and explicit aperture conditioning to stabilize synthesis under imperfect depth and segmentation while enabling flexible refocusing and editing \citep{fortes2025bokeh,Zhu2025BokehDiff,Wang2025DiffCamera, qin2025camedit, yang2025any}. These advances indicate that diffusion models that embed camera physics offer an artifact resistant and scalable path to scene consistent bokeh across imagery.

\subsection{Monocular Depth Estimation}
Recent advances in monocular depth estimation fall into two complementary streams, a discriminative feed-forward family and a generative diffusion family. The discriminative stream begins with end to end models that adopt a scale invariant log loss \cite{eigen2014multiscal} and then evolves to discretization and transformer based decoders such as DORN~\cite{fu2018dorn}, AdaBins~\cite{bhat2021adabins}, NeW\textendash CRFs~\cite{yuan2022newcrfs}, and iDisc \cite{piccinelli2023idisc}. Cross dataset transfer improves through large scale mixing in MegaDepth and MiDaS~\cite{li2018megadepth,ranftl2020midas}, and ZoeDepth~\cite{bhat2023zoedepth} connects relative training to metric prediction. Within this stream, camera aware modeling injects or normalizes intrinsics as in CAM\textendash Convs~\cite{facil2019camconvs}, canonicalization with geometry branches improves absolute scale in Metric3D~\cite{yin2023metric3d} and Metric3Dv2~\cite{hu2024metric3dv2}, and high resolution detail benefits from tile based inference in PatchFusion \cite{li2024patchfusion}. Data scaling and distillation further consolidate robustness, with Depth Anything~\cite{yang2024depthanything} providing a broad foundation and Depth Anything V2~\cite{yang2024depthanythingv2} advancing through synthetic replacement, stronger teachers, and large pseudo labeled real images. A current focus is universal monocular metric depth that targets absolute scale without test time camera metadata. UniDepth~\cite{piccinelli2024unidepth} and UniDepthV2~\cite{piccinelli2025unidepthv2} adopt compact designs with learned camera representations, and Depth Pro~\cite{bochkovskii2024depthpro} estimates field of view from image features to produce sharp metric maps at high resolution. The generative stream repurposes diffusion priors for depth, where Marigold adapts Stable Diffusion for affine invariant predictions with strong zero shot transfer \cite{ke2024marigold}, DiffusionDepth~\cite{duan2024diffusiondepth} formulates depth as iterative denoising conditioned on the image, and Pixel Perfect Depth~\cite{xu2025pixelperfectdepth} performs diffusion in pixel space with semantics prompted transformers to strengthen edges and global consistency. Our approach follows the discriminative feed-forward path for efficiency and reliability in universal metric depth while acknowledging the strengths of diffusion models in fine structure and appearance shifts.
\section{\emph{Stage-1}: Bokeh Generation}
\label{Stage1}

\begin{figure*}[t]
    \includegraphics[width=\textwidth]{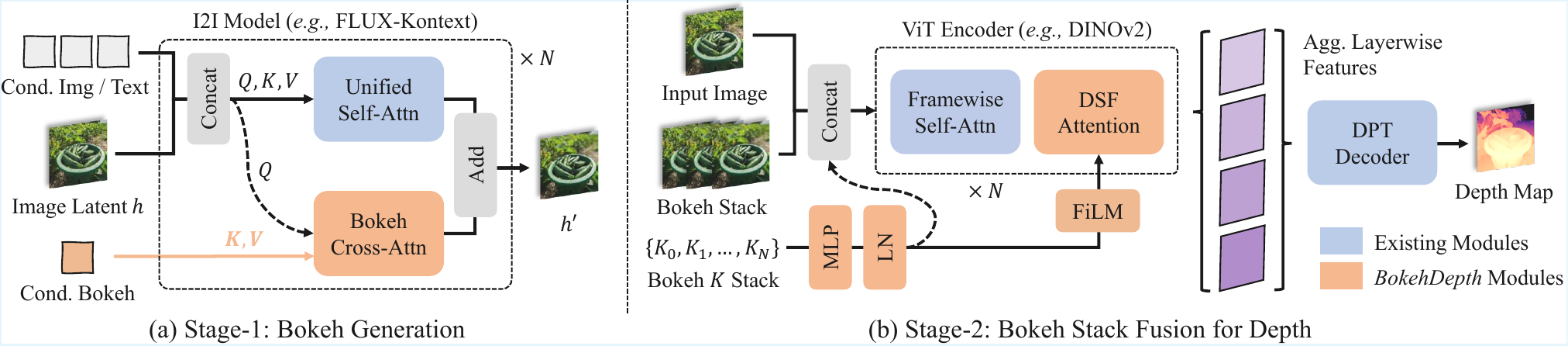}
    \vspace{-15pt}
    \caption{\oursbf architecture. (a) Stage-1 bokeh generation augments a pretrained I2I model, such as FLUX.1-Kontext, with a bokeh cross-attention adapter that takes a scalar bokeh strength K and produces a calibrated multi-strength bokeh stack from a single sharp image. (b) Stage-2 bokeh stack fusion inserts Divided Space Focus (DSF) Attention into a ViT encoder and uses FiLM conditioning to inject the bokeh stack along the defocus axis, then feeds the aggregated layerwise features to an unchanged DPT decoder to predict metric depth.}
    \label{fig:method}
    \vspace{-15pt}
\end{figure*}

\subsection{Physically Grounded Bokeh Generation}
\label{sec:stage1_method}

We build Stage-1 upon FLUX.1-Kontext~\cite{blackforestlabsFLUXKontext2025}, a rectified-flow transformer that unifies text-to-image generation and instruction-guided image editing within a single latent-space backbone. This architecture provides strong priors for preserving structure and identity during complex image manipulations. To achieve lens-consistent bokeh synthesis without training a separate generator, we ground our conditioning in the thin-lens circle-of-confusion (CoC) model~\cite{fortes2025bokeh}. We map diverse optical parameters to a single calibrated scalar $K$, which approximates the linear relationship between the blur radius $r$ and the disparity offset $\Delta\text{disp}$ as $r \approx K \cdot \Delta\text{disp}$. We instantiate this control signal as
\begin{equation}
\label{eq:K}
K(f,N,S_1) = \frac{f^2 S_1}{2 N (S_1 - f)} \cdot \text{pixel\_ratio},
\end{equation}
where $f$, $N$, and $S_1$ denote the focal length, aperture number, and focus distance, respectively. The term \text{pixel\_ratio} converts the physical CoC diameter into target pixel units. This formulation establishes $K$ as a unified and interpretable defocus-strength axis that aligns the optical properties of real cameras with synthetic rendering logic.

To learn this continuous control space despite the scarcity of paired focus data, we train on a hybrid corpus that projects three distinct data sources into the shared $K$ domain. First, we leverage abundant in-the-wild photographs exhibiting authentic optical defocus. For these samples, we preserve EXIF metadata and derive target $K$ values by estimating the focus distance using off-the-shelf estimators. Second, we augment sharp images with physically motivated renderings from BokehMe~\cite{Peng2022BokehMe} where the control parameters are explicitly known. Third, we integrate limited paired datasets, such as DPDD~\cite{abuolaim2020defocus} and BLB~\cite{Peng2022BokehMe}, by re-parameterizing their variable aperture settings into our unified scalar. This hybrid approach enables the model to learn photorealistic blur patterns from real data while retaining the precise controllability of synthetic rendering. During training, the model conditions exclusively on $K$ and alternates between text-conditional and image-conditional objectives. At inference, Stage-1 generates calibrated bokeh stacks from a single sharp image and a desired blur strength, effectively bypassing the need for explicit depth map prediction.

\subsection{Bokeh Conditioning in MMDiT Attention}\label{sec:bokeh}

Our I2I pipeline is built on a Multimodal Diffusion Transformer (MMDiT)~\cite{esserScalingRectifiedFlow2024}, in which both text tokens and latent image tokens are processed by a single unified self-attention block rather than separate self- and cross-attention modules as in the traditional U-Net architecture~\citep{liu2024towards, hua2025attention, rombachHighResolutionImageSynthesis2022a}. Concretely, let $Q_T, K_T, V_T$ be the query, key, and value projections of the text tokens, and let $Q_I, K_I, V_I$ be the projections of the current noisy latent image tokens after timestep dependent modulation. We first concatenate the text and image branches along the token dimension, apply rotary position embedding (RoPE) \cite{su2024roformer} to all queries and keys, and run one scaled dot-product attention over the joint sequence. We denote this unified attention mechanism as
\begin{equation}
h_{\text{MMDiT}} = \mathrm{Attn}\bigl(
    \operatorname{R}(Q_T \Vert Q_I),
    \operatorname{R}(K_T \Vert K_I),
    V_T \Vert V_I
\bigr),
\label{eq:attn_MMDiT}
\end{equation}
where $\Vert$ denotes concatenation along the token dimension, and $\operatorname{R}(\cdot)$ denotes the application of RoPE. Inspired by~\citet{fortes2025bokeh}, we control defocus by introducing a dedicated bokeh branch. A single scalar bokeh strength $K$, which specifies the desired blur magnitude per unit-disparity, is passed through a small multilayer perceptron to produce a compact conditioning vector $c_b$. Two lightweight linear projections, implemented as LoRA adapters in practice, map $c_b$ to a set of keys $K_b$ and values $V_b$~\cite{fortes2025bokeh}. We reuse the same query that drives the unified attention in \eqref{eq:attn_MMDiT} and obtain a defocus-conditioned response
\begin{equation}
h_{\text{bokeh}}
=
\mathrm{Attn}\bigl(
\operatorname{R}(Q_T \Vert Q_I),
K_b,
V_b
\bigr).
\label{eq:attn_bokeh}
\end{equation}
The final hidden representation after each instrumented attention block is the sum of the original multimodal interaction and the bokeh response
\begin{equation}
h_{\text{final}}
=
h_{\text{MMDiT}}
+
\lambda\, h_{\text{bokeh}}.
\label{eq:attn_final}
\end{equation}
This design enables the model to preserve the global scene layout and semantics from the image through \(h_{\text{MMDiT}}\), while injecting a precise defocus control signal via \(h_{\text{bokeh}}\). Remarkably, this elegant bokeh-attention mechanism works without any external depth map input, yet delivers effective bokeh rendering.
The architecture follows the adapter-style conditioning paradigm introduced by IP-Adapter \cite{yeIPAdapterTextCompatible2023a}, but here it is integrated into the unified attention backbone of the MMDiT-based model in the I2I editing setting.

\section{\emph{Stage-2}: Bokeh Stack Fusion for Depth}
\subsection{Background}
\label{sec:background}
\subsubsection{Depth from Defocus}
\label{sec:dfd}
Bokeh images intrinsically encode depth cues. 
Depth from defocus (DfD) is a long standing technique that recovers depth directly from defocus blur under a fixed viewpoint~\citep{pentland1989shape,suwajanakorn2015mobile,tang2017wild,hazirbas2018deep,maximov2020focus,si2023dered,fujimura2024ddfs,wijayasingha2024cameraindep,xu2025blurryedges}. 
Stage-1 of our pipeline synthesizes a bokeh stack with different blur strengths \(K\) while keeping the scene, camera pose, and focus distance fixed. In theory, this stack alone is sufficient to reconstruct metric depth. We formalize this claim in the following proposition.
\vspace{-2pt}
\begin{restatable}[Depth-from-Bokeh Sweep under Calibrated Bokeh Control]{proposition}{propdfb}\label{prop:dfb}
For a static scene observed by a thin-lens camera with fixed pose and focus distance, we record a bokeh stack by sweeping only the calibrated bokeh strength $K$. At every pixel, the measured bokeh radius is exactly proportional to that pixel's inverse-depth offset from the focal plane. The slope of this proportionality, obtained by regressing radius on $K$ across the stack, is an unbiased and consistent estimate of that offset and yields the pixel's metric depth up to the usual front/behind-focus sign.
\end{restatable}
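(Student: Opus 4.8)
The plan is to reduce the claim to two ingredients: the exact thin-lens circle-of-confusion (CoC) identity, which delivers the stated proportionality, and ordinary least squares through the origin, which delivers the ``unbiased and consistent'' clause. \textbf{Step 1 (CoC geometry).} Fix the lens parameters $f,N$ and the focus distance $S_1$, and let a scene point at depth $Z$ project to pixel $p$. The classical blur-diameter formula reads
\begin{equation}
c(Z) \;=\; \frac{f^{2}}{N(S_1-f)}\cdot\frac{|S_1-Z|}{Z} \;=\; \frac{f^{2}S_1}{N(S_1-f)}\,\Bigl|\tfrac{1}{S_1}-\tfrac{1}{Z}\Bigr|,
\end{equation}
where the second equality uses $\tfrac{|S_1-Z|}{Z}=S_1\bigl|\tfrac{1}{S_1}-\tfrac{1}{Z}\bigr|$. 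Halving and converting to pixel units, the measured bokeh radius is $r_p = K\cdot|\Delta\text{disp}_p|$ with $\Delta\text{disp}_p:=\tfrac{1}{S_1}-\tfrac{1}{Z_p}$ and $K$ exactly the scalar of \Cref{eq:K}. This is the first assertion: within the thin-lens model the radius is \emph{exactly} proportional to the pixel's inverse-depth offset from the focal plane, and the proportionality constant is $K$.

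\textbf{Step 2 (the sweep as a regression).} Holding scene, pose and $S_1$ fixed makes $\delta_p:=\Delta\text{disp}_p$ a per-pixel constant, while sweeping $K$ over the stack values $K_1,\dots,K_n$ (including the all-in-focus point $K=0$, which forces the line through the origin) produces observations $r_{p,i}=K_i|\delta_p|+\varepsilon_{p,i}$, where $\varepsilon_{p,i}$ is the residual error of the radius estimate, assumed zero-mean with finite variance $\sigma_p^{2}$. The origin-constrained least-squares slope is $\widehat{|\delta_p|}=\bigl(\sum_i K_i r_{p,i}\bigr)/\bigl(\sum_i K_i^{2}\bigr)$. \textbf{Step 3 (statistics and inversion).} Linearity of expectation gives $\mathbb{E}\bigl[\widehat{|\delta_p|}\bigr]=|\delta_p|$, so the estimator is unbiased; and $\operatorname{Var}\bigl(\widehat{|\delta_p|}\bigr)=\sigma_p^{2}/\sum_i K_i^{2}\to 0$ whenever the stack is enriched so that $\sum_i K_i^{2}\to\infty$, which is consistency---this is just the Gauss--Markov / weak-law statement for simple regression through the origin. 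Finally, inverting the relation yields $Z_p=\bigl(\tfrac{1}{S_1}\mp\widehat{|\delta_p|}\bigr)^{-1}$, metric depth; the two branches are precisely the unresolved front-/behind-focus sign, since the radius magnitude is invariant to the sign of $\delta_p$.

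The main obstacle is not the algebra but keeping the error model honest. ``Exactly proportional'' holds in the paraxial, disk-kernel regime; near depth discontinuities the defocus kernel is neither a centered disk nor summarized by a single radius, partial occlusion biases any radius estimate, and the Stage-1 generator only approximates physical bokeh, so $\varepsilon_{p,i}$ can acquire a pixel-dependent bias there. I would therefore state the proposition for pixels away from occlusion boundaries (equivalently, under a rendered-bokeh model that matches the thin-lens kernel), make explicit that the zero-mean, finite-variance hypothesis on the radius estimator is exactly what upgrades ``measured'' to ``unbiased and consistent,'' and remark that in practice a robust or variance-weighted fit over the stack absorbs outliers without altering the population-level conclusion.
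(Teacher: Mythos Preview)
Your proposal is correct and follows essentially the same route as the paper: derive the thin-lens CoC identity to obtain $r_i(x)=K_i\,|\Delta\mathrm{disp}(x)|$, regress the measured radii on $K_i$ through the origin, then use linearity of expectation for unbiasedness and the $\sigma^2/\sum_i K_i^2$ variance bound for consistency before inverting to metric depth with the front/behind sign. The paper states the uncorrelatedness of the per-frame radius errors explicitly when computing the variance, which you use implicitly; otherwise the arguments coincide, and your closing caveats about occlusion boundaries and kernel validity go slightly beyond what the paper's proof records.
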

\vspace{-5pt}
A complete mathematical proof of \Cref{prop:dfb} is given in supplementary material. The core advance over classical DfD is that \Cref{prop:dfb} turns the defocus-to-depth relation into a per-pixel linear model and shows that the ordinary least squares slope of bokeh intensity \(K\) is an unbiased and statistically consistent estimator of the true inverse depth offset, which directly recovers metric depth at that pixel. In contrast, classical DfD typically takes two frames with different focus settings, estimates a blur radius, and then solves a fragile global optimization problem that needs strong priors and is sensitive to noise, weak texture, and calibration error~\citep{schechner2000depth, rajagopalan2002variational,jin2002variational, ziou2001depth, zhou2009coded, persch2014introducing}.

\subsubsection{Discriminative monocular metric depth estimation}

Modern discriminative monocular metric depth estimation follows a feed-forward design in which a single transformer pass predicts dense metric depth from one RGB view, without multi-view optimization or iterative refinement~\cite{yang2024depthanythingv2,bochkovskii2024depthpro,piccinelli2025unidepthv2,wang2025vggt}. These systems use a large Vision Transformer encoder in the style of DINOv2 visual pretraining~\cite{oquab2023dinov2, darcet2023vitneedreg, jose2024dinov2meetstextunified} together with a DPT-style decoder for dense prediction~\cite{ranftl2021vision}. Let $I \in \mathbb{R}^{H \times W \times 3}$ be the input RGB image. The encoder $E_\theta$ produces a multi scale feature pyramid
\begin{equation}
\label{eq:encode}
\{F^{(s)}\}_{s=1}^{S} = E_\theta(I),
\end{equation}
where each $F^{(s)}$ retains global semantic context and local detail through self-attention over the entire image. A DPT-style decoder $D_\phi$ then fuses and upsamples these features to recover a full resolution depth related field
\begin{equation}
\hat{z} = D_\phi\!\left(\{F^{(s)}\}_{s=1}^{S}\right),
\end{equation}
where $\hat{z}(p)$ denotes the predicted depth quantity at pixel $p$.

To express absolute metric scale, current feed-forward models attach a lightweight camera-aware head $\Gamma_\psi$ that predicts viewing geometry from the same shared features. This head estimates camera parameters such as focal-length, per-ray direction, or full intrinsics and extrinsics. We write
\begin{equation}
\hat{D}_\text{metric}(p)
= \mathrm{Scale}\Bigl(\hat{z}(p), \Gamma_\psi\!\bigl(\{F^{(s)}\}_{s=1}^{S}\bigr)(p)\Bigr).
\label{eq:decode}
\end{equation}
where $\hat{\kappa}$ is the inferred camera representation and $\mathrm{Scale}(\cdot)$ converts $\hat{z}$ into metric depth $\hat{D}_\text{metric}$ in physical units. The key idea is that the network learns both scene and viewing geometry rather than applying scale afterward.

Training across this family of models emphasizes two goals. First, globally consistent metric scale across domains through camera-aware supervision and geometric consistency constraints; Second, sharp object boundaries and fine spatial detail through edge aware and multi-scale gradient losses, often distilled from high quality synthetic depth. As a result, these feed-forward estimators produce high resolution depth with crisp edges and reliable global scale from a single RGB frame in real time.

\subsection{Divided Space Focus Attention in the Encoder}

Our objective is to inject physically calibrated defocus cues into the discriminative monocular depth encoder while leaving the downstream decoder in \Cref{eq:decode} unchanged. We assume a sharp reference RGB frame $I_0 \in \mathbb{R}^{H \times W \times 3}$ with bokeh strength $K_0 = 0$ and a synthetic bokeh stack $\{I_n\}_{n=1}^{N}$ produced by Stage-1 from the same viewpoint. Each $I_n$ differs only in a controllable bokeh strength $K_n \in \mathbb{R}$. Following \Cref{eq:encode}, a shared vision transformer encoder $E_\theta$ processes every frame independently and returns per-frame patch tokens. We denote the tokens for frame $n$ by $P_n \in \mathbb{R}^{K \times D}$ and collect them as $X = \{P_f\}_{f=0}^{N} \in \mathbb{R}^{(N+1)\times K \times D}$ together with the known strengths $\mathbf{K} = [K_0,\dots,K_N] \in \mathbb{R}^{(N+1)\times 1}$. Divided Space Focus Attention (DSFA) is inserted into encoder layers as an in-place feature rewriting block with \textbf{two steps}. \textbf{Step-1} performs spatial attention inside each frame. \textbf{Step-2} performs focus attention across frames at aligned spatial locations. After DSFA we retain only the refined tokens of the reference frame. The rest of the depth head can run exactly as in a standard single frame estimator.
\paragraph{Step-1: spatial attention within each frame.}
For each frame index $f \in \{0,\dots,N\}$ we embed the bokeh scalar $K_f$ with a learnable multilayer perceptron (MLP) $g(\cdot)$:
\begin{equation}
t_f = g(K_f) \in \mathbb{R}^{D} .
\end{equation}
We prepend $t_f$ to that frame's $K$ patch tokens to obtain
\(
S_f = [t_f ; X_f] \in \mathbb{R}^{(1+K)\times D} .
\)
We forward $S_f$ through a transformer block and obtain
\begin{equation}
\tilde{S}_f = \mathrm{FFN}\!\bigl(\mathrm{MSA}(\mathrm{LN}(S_f))\bigr) \in \mathbb{R}^{(1+K)\times D} .
\end{equation}
Dropping the first token yields refined patch features
\begin{equation}
\tilde{X}_f = \tilde{S}_f[1:] \in \mathbb{R}^{K\times D}
\end{equation}
that encode how defocused the frame is.
\paragraph{Step-2: focus attention across frames.}
After spatial attention, we align patches across the stack. For each patch index $j$, we gather the same spatial location from all frames:
\begin{equation}
Y_j = [\tilde{X}_0[j], \tilde{X}_1[j], \dots, \tilde{X}_N[j]] \in \mathbb{R}^{(N+1)\times D}.
\end{equation}
We modulate every element of $Y_j$ with FiLM-style conditioning~\citep{perez2018film, dumoulin2018feature, strub2018visual} from the same control tokens $t_f$. A learned linear map $h(\cdot)$ predicts a per-frame scale and shift
\begin{equation}
[a_f, b_f] = h(t_f), \quad a_f, b_f \in \mathbb{R}^{D} ,
\end{equation}
and we apply channel wise affine modulation
\begin{equation}
\hat{Y}_{f,j} = (1 + \tanh(a_f)) \odot Y_{f,j} + b_f .
\end{equation}
The modulated sequence $\hat{Y}_j \in \mathbb{R}^{(N+1)\times D}$ is then processed by multi-head self-attention along the frame axis. This lets each spatial location directly compare how blur changes as $K$ varies, which is the physical depth-from-defocus cue. Accordingly, every frame receives refined tokens $\bar{X}_f \in \mathbb{R}^{K\times D}$. We keep the reference representation
\(
Z = \bar{X}_0 \in \mathbb{R}^{K\times D} .
\)
Finally, the dense prediction head from \Cref{eq:decode} upsamples $Z$ back to the pixel grid and outputs the metric depth map for the reference frame $I_0$. In effect, DSFA injects the calibrated bokeh stack into the encoder while preserving the external interface of a standard monocular metric depth estimator. Together, Stages 1 and 2 constitute the overall \oursbf pipeline, illustrated in \Cref{fig:method}.

\section{Experiments}
\label{sec:exp}

\begin{figure*}[t]
    \includegraphics[width=\textwidth]{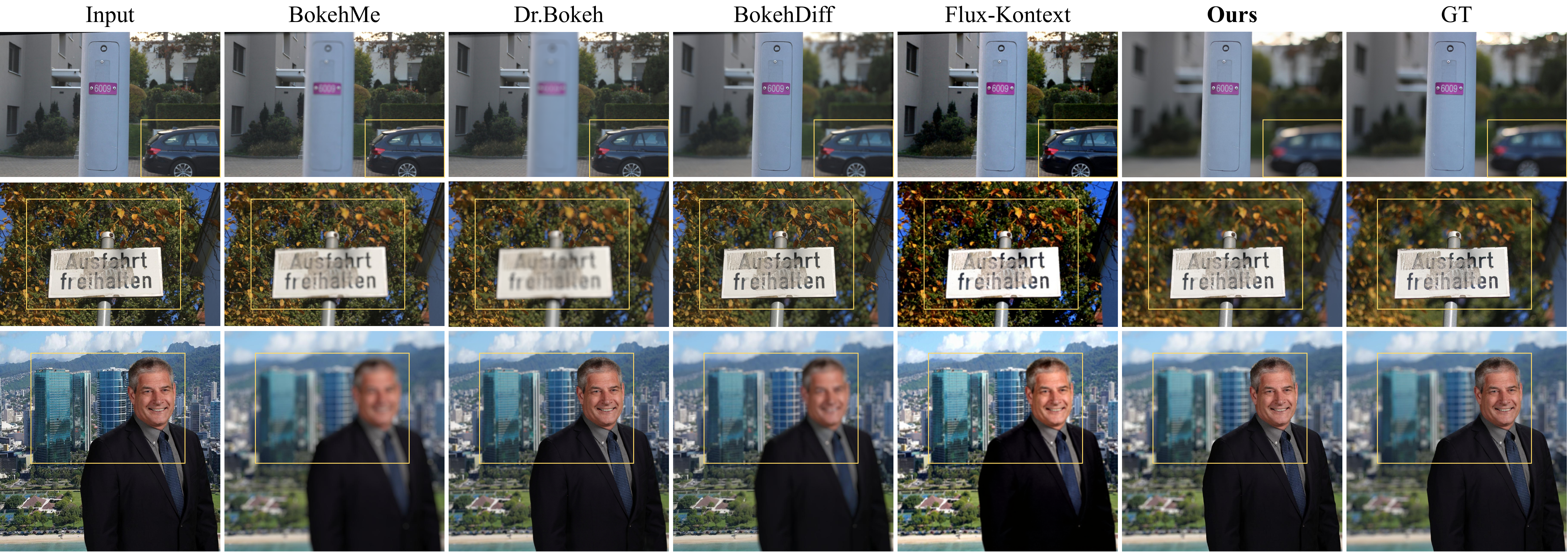}
    \vspace{-15pt}
    \caption{Qualitative comparisons between our Stage-1 model, BokehMe~\citep{Peng2022BokehMe}, Dr. Bokeh~\citep{sheng2024dr}, BokehDiff~\citep{Zhu2025BokehDiff}, FLUX.1-Kontext~\citep{blackforestlabsFLUXKontext2025}, and the ground truth. Our method more reliably preserves in-focus subjects while producing background blur that increases monotonically with depth. At depth discontinuities, it substantially reduces edge halos and color bleeding. 
    }
    \label{fig:stage1}
    \vspace{-5pt}
\end{figure*}

\subsection{Implementation Details}
\label{ssec:exp_settings}

\textbf{Stage-1.}
Following the unified training in~\Cref{sec:stage1_method}, we train our FLUX.1-Kontext-based bokeh generator for 40 epochs at a fixed resolution of $512\times512$, then run an additional 10 epochs using only I2I data at each dataset's native resolution to adapt to heterogeneous image sizes. Training takes 7 days on 4$\times$A6000 GPUs. We compare against classical and neural baselines BokehMe~\cite{Peng2022BokehMe}, DrBokeh~\cite{sheng2024dr}, BokehDiff~\cite{Zhu2025BokehDiff}, DiffCamera~\citep{Wang2025DiffCamera}, GenFocus~\citep{mu2025generative} and the FLUX.1-Kontext editing backbone~\cite{blackforestlabsFLUXKontext2025} on the exposure-aligned EBB! Val200 split~\cite{peng2023selective,Zhu2025BokehDiff, Ignatov2020}. To isolate the influence of depth prediction errors, we construct a synthetic SYNTHEBOKEH300 benchmark following prior protocols~\cite{Peng2022BokehMe,Zhu2025BokehDiff,sheng2024dr}, where foreground and background layers from~\citet{lin2021real} are rendered with accurate depth maps. We report PSNR and SSIM for pixel-level and structural fidelity, and LPIPS~\cite{zhang2018unreasonable} and DISTS~\cite{ding2020image} for perceptual similarity.

\textbf{Stage-2.}
For metric depth estimation, we integrate the DSFA module into Depth Anything V2-L (DAv2)~\cite{yang2024depthanythingv2} and UniDepthV2-L (UDv2)~\cite{piccinelli2025unidepthv2}. We initialize from the official L-version checkpoints and fine-tune using each base model's published training pipeline on bokeh stacks generated by our Stage-1. For cross-domain zero-shot MMDE, we fine-tune only on Hypersim~\cite{roberts2021hypersim}, where each sharp frame is paired with its synthetic bokeh stack, and evaluate alongside strong baselines on diverse indoor and outdoor benchmarks following standard $\delta_1$ and AbsRel protocols~\cite{eigen2014multiscal,piccinelli2025unidepthv2,bochkovskii2024depthpro,hu2024metric3dv2,pham2025sharpdepth,li2025benchdepth,obukhov2025fourth}. For in-domain evaluation, we use the NYUv2~\citep{silberman2012indoor} training split augmented with Stage-1 stacks and report results on the official test split.

\begin{figure*}[t]
    \includegraphics[width=\textwidth]{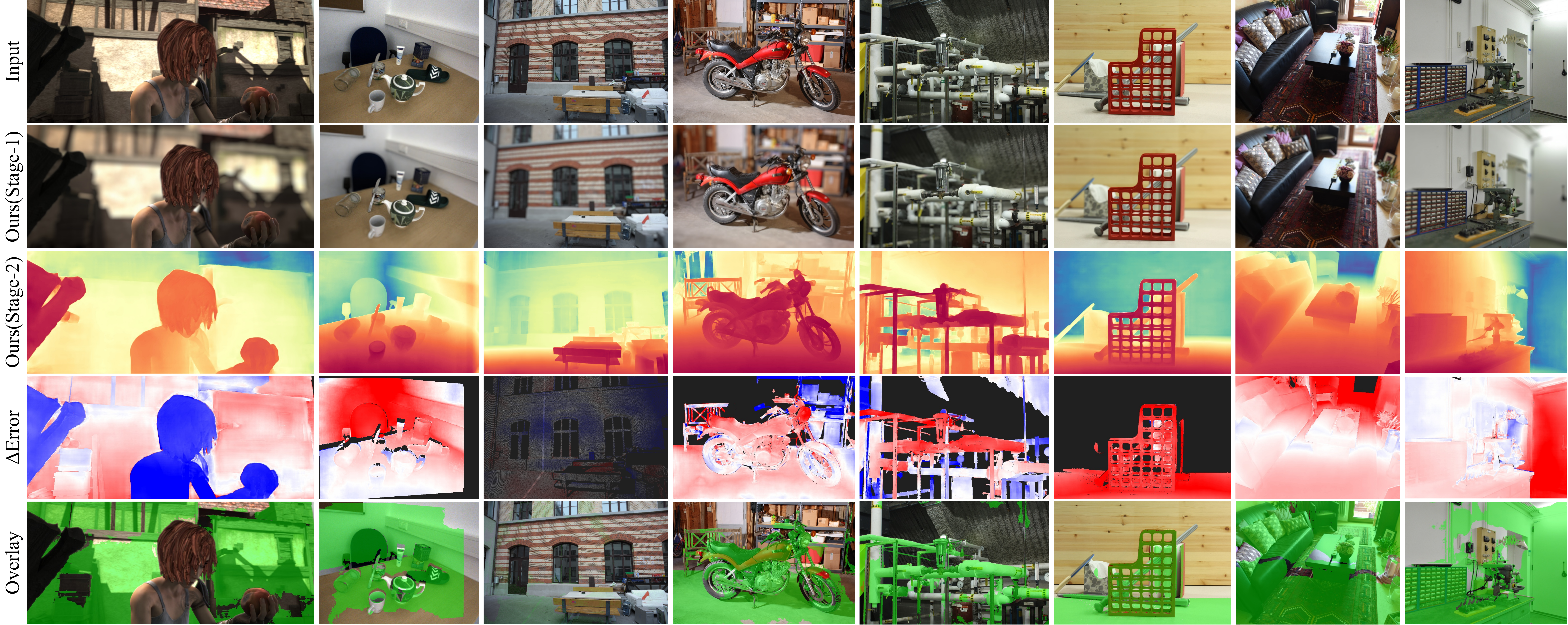}
    \vspace{-15pt}
    \caption{Qualitative results of \oursbf. \emph{Top to bottom:} input image, a frame from the Stage-1 bokeh stack, predicted depth by Stage-2, $\Delta$Error maps showing per-pixel reduction in absolute depth error of \ours relative to base model Depth Anything V2~\citep{yang2024depthanythingv2}, and RGB images overlaid with green highlights indicating improvement regions. \ours lowers depth errors on fine structures, weakly-textured walls and distant background regions, offering more distinct layer separation and steadier metric depth across varied scenes.}
    \label{fig:stage2}
    \vspace{-7pt}
\end{figure*}

\subsection{Experimental Results}
\subsubsection{Stage-1: Bokeh Rendering}
\begin{table*}[t]
\centering
\footnotesize
\setlength{\tabcolsep}{8pt}
\begin{tabular}{@{}lcccccccc@{}}
\toprule
\multirow{2}{*}{Method} &
\multicolumn{4}{c}{EBB! Val200 (real)} &
\multicolumn{4}{c}{SYNTHEBOKEH300 (synthetic)} \\
\cmidrule(lr){2-5} \cmidrule(lr){6-9}
& PSNR$\uparrow$ & SSIM$\uparrow$ & LPIPS$\downarrow$ & DISTS$\downarrow$
& PSNR$\uparrow$ & SSIM$\uparrow$ & LPIPS$\downarrow$ & DISTS$\downarrow$ \\
\midrule
BokehMe~\citep{Peng2022BokehMe}
& 24.13 & 0.751 & 0.390 & 0.143
& 22.83 & 0.809 & 0.279 & 0.188 \\
DrBokeh~\citep{sheng2024dr}
& 22.61 & 0.735 & 0.435 & 0.178
& \textbf{29.70} & \underline{0.895} & \underline{0.149} & \underline{0.087} \\
BokehDiff~\citep{Zhu2025BokehDiff}
& 24.35 & 0.802 & 0.279 & 0.112
& 26.21 & 0.807 & 0.288 & 0.142 \\
FLUX.1-Kontext~\citep{blackforestlabsFLUXKontext2025}
& 19.92 & 0.645 & \textbf{0.165} & 0.427
& 20.34 & 0.685 & 0.351 & 0.155 \\
DiffCamera~\citep{Wang2025DiffCamera}
& 25.07 & \underline{0.867} & 0.311 & 0.102
& 25.26 & 0.880 & 0.238 & 0.120 \\
GenFocus~\citep{mu2025generative}
& \underline{25.48} & \textbf{0.886} & 0.278 & \underline{0.081}
& 21.54 & \underline{0.895} & 0.317 & 0.153 \\
\midrule
\textbf{Ours}
& \textbf{25.91} & 0.856 & \underline{0.185} & \textbf{0.076}
& \underline{29.12} & \textbf{0.901} & \textbf{0.139} & \textbf{0.073} \\
\bottomrule
\end{tabular}
\caption{Quantitative comparison on EBB! Val200~\citep{peng2023selective, Ignatov2020} and SYNTHEBOKEH300. \textbf{Boldface} denotes the best and \underline{underline} the second best.}
\label{tab:bokeh_two_datasets}
\end{table*}

For real photographs, the quantitative results in~\Cref{tab:bokeh_two_datasets} show that our method delivers the highest overall fidelity, with consistent gains in PSNR~\citep{gonzalez2009digital} and SSIM~\citep{wang2004image}, and markedly lower LPIPS~\cite{zhang2018unreasonable} and DISTS~\cite{ding2020image}. Notably, although some of our training examples were synthesized with BokehMe~\citep{Peng2022BokehMe}, the quality of our bokeh generation is not bounded by that renderer. We leverage FLUX.1-Kontext's strong image-editing priors and the real bokeh datasets to produce synthesized defocus that faithfully matches lens-like optical behavior observed in real photography. 

For synthetic scenes with reliable ground-truth depth, DrBokeh, which operates with direct access to depth and performs layered rendering, achieves the strongest PSNR. Our Stage-1 generator closely approaches these physically supervised baselines and attains the best perceptual quality across SSIM, LPIPS and DISTS. These results indicate that the unified, lens-derived control scalar $K$ guides our model to reproduce the spatial structure and blur profiles prescribed by geometric optics, without requiring explicit depth maps at inference time.

Taken together, the real and synthetic evaluations in~\Cref{tab:bokeh_two_datasets} and~\Cref{fig:stage1} show that Stage-1 learns a robust and interpretable bokeh control space. The model retains the instruction-following flexibility of FLUX.1-Kontext while lifting it to a depth-map-free yet lens-consistent bokeh renderer that generalizes across diverse natural photographs and controlled layered scenes.

\subsubsection{Stage-2: Monocular Metric Depth Estimation}
\label{ssec:exp_s2}

\begin{table*}[t]
\centering
\footnotesize
\setlength{\tabcolsep}{7.5pt}
\resizebox{\textwidth}{!}{%
\begin{tabular}{lcccccccccccc}
\toprule
\multirow{2}{*}{Method} &
\multicolumn{2}{c}{HAMMER} &
\multicolumn{2}{c}{IBims-1} &
\multicolumn{2}{c}{Middlebury} &
\multicolumn{2}{c}{Make3D} &
\multicolumn{2}{c}{Sintel} &
\multicolumn{2}{c}{ETH3D} \\
\cmidrule(lr){2-3}\cmidrule(lr){4-5}\cmidrule(lr){6-7}\cmidrule(lr){8-9}\cmidrule(lr){10-11}\cmidrule(lr){12-13}
 & $\delta_{1}${\scriptsize$\uparrow$} & Abs{\scriptsize$\downarrow$}
 & $\delta_{1}${\scriptsize$\uparrow$} & Abs{\scriptsize$\downarrow$}
 & $\delta_{1}${\scriptsize$\uparrow$} & Abs{\scriptsize$\downarrow$}
 & $\delta_{1}${\scriptsize$\uparrow$} & Abs{\scriptsize$\downarrow$}
 & $\delta_{1}${\scriptsize$\uparrow$} & Abs{\scriptsize$\downarrow$}
 & $\delta_{1}${\scriptsize$\uparrow$} & Abs{\scriptsize$\downarrow$} \\
\midrule
ZoeDepth & 0.009 & 0.943 & 0.672 & 0.174 & 0.538 & \underline{0.214} & 0.649 & 0.221 & 0.078 & 0.946 & 0.338 & 0.500 \\
Metric3Dv2 & 0.653 & 0.357 & 0.941 & 0.100 & 0.299 & 0.450 & 0.512 & 0.394 & 0.383 & \textbf{0.370} & \textbf{0.987} & \textbf{0.042} \\
DepthPro & 0.630 & 0.391 & 0.823 & 0.159 & 0.605 & 0.251 & 0.728 & 0.254 & 0.400 & 0.508 & 0.415 & 0.327 \\
\midrule
DAv2 & 0.828 & 0.134 & 0.938 & 0.080 & 0.618 & 0.240 & 0.726 & \underline{0.217} & 0.538 & 0.569 & 0.910 & 0.091 \\
\rowcolor{cyan!10}
\textbf{+ Ours} & \underline{0.894} & \underline{0.105} & \underline{0.960} & \underline{0.064} & \underline{0.675} & 0.215 & \underline{0.740} & \textbf{0.206} & \underline{0.542} & 0.413 & 0.954 & 0.077 \\
\midrule
UDv2 & 0.645 & 0.293 & 0.945 & 0.082 & 0.535 & 0.288 & 0.739 & 0.263 & 0.344 & 0.496 & 0.852 & 0.160 \\
\rowcolor{cyan!10}
\textbf{+ Ours} & \textbf{0.895} & \textbf{0.094} & \textbf{0.978} & \textbf{0.039} & \textbf{0.716} & \textbf{0.205} & \textbf{0.786} & 0.228 & \textbf{0.671} & \underline{0.391} & \underline{0.963} & \underline{0.074} \\
\bottomrule
\end{tabular}%
}
\caption{Zero-shot metric-depth comparison across diverse indoor and outdoor datasets~\citep{jung2022my,koch2018evaluation,scharstein2014high,saxena2008make3d,butler2012naturalistic,schops2017multi}. \textbf{Boldface} denotes best, \underline{underline} second best. $\delta_1$: accuracy threshold; Abs: Abs\_Rel error.}
\label{tab:unified_depth_results}
\vspace{-15pt}
\end{table*}
\vspace{-5pt}

\begin{table}[t]
\centering
\footnotesize
\setlength{\tabcolsep}{2pt} 
\begin{adjustbox}{max width=\linewidth} 
\begin{tabular}{@{}lrrr@{}} 
\toprule
Method & Abs\_Rel$\downarrow$ & RMS$\downarrow$ & Log$_{10}\downarrow$ \\
\midrule
ZoeDepth~\citep{bhat2023zoedepth} & 0.077 & 0.278 & 0.033 \\
Metric3Dv2~\citep{hu2024metric3dv2} & \underline{0.047} & 0.183 & \underline{0.020} \\
DepthAnythingv2~\citep{yang2024depthanythingv2} & 0.056 & 0.206 & 0.024 \\
UniDepthV2~\citep{piccinelli2025unidepthv2} & \underline{0.047} & \underline{0.180} & \underline{0.020} \\
\midrule
\textbf{Ours} & \textbf{0.039} & \textbf{0.043} & \textbf{0.016} \\
\bottomrule
\end{tabular}
\end{adjustbox}
\caption{Metric depth comparison of \oursbf (UniDepthV2 backbone) against other monocular metric depth estimation methods on the NYUv2 validation set~\citep{silberman2012indoor}. All models were trained or fine-tuned on NYUv2.}
\label{tab:unidepthv2_dsfa_nyuv2}
\vspace{-20pt}
\end{table}

\begin{table}[t]
\centering
\footnotesize
\setlength{\tabcolsep}{2pt}
\begin{adjustbox}{max width=\linewidth}
\begin{tabular}{@{}lrrr@{}}
\toprule
Method & $\delta_1\!\uparrow$ & Abs\_Rel$\downarrow$ & RMS$\downarrow$ \\
\midrule
Deep-Optics~\citep{chang2019deep}      & 0.930          & 0.087          & 0.433          \\
DFF-DFV~\citep{yang2022deep}         & \underline{0.967}          & 0.445          & 0.232          \\
2HDED\text{:}Net~\citep{nazir2023depth}     & 0.914          & \underline{0.029}          & 0.244          \\
DAIF-Net~\citep{si2023dered}             & 0.950          & 0.170          & 0.325          \\
SDNet~\citep{zhang2025depth}                & 0.964 & \textbf{0.026} & \underline{0.201} \\
\midrule
\textbf{Ours}                            & \textbf{0.978} & 0.039 & \textbf{0.043} \\
\bottomrule
\end{tabular}
\end{adjustbox}
\caption{Metric depth comparison of \oursbf (UniDepthV2 backbone) against other Depth-from-Defocus (DfD) methods on the NYUv2 validation set~\citep{silberman2012indoor}. All models were trained or fine-tuned on NYUv2. Apart from ours, every DfD method listed above constructs its evaluation focal stack by rendering NYUv2 RGB-D pairs through a thin-lens or PSF forward model.}
\label{tab:dfd_nyuv2}
\vspace{-20pt}
\end{table}
We now examine how DSFA strengthens existing monocular metric depth models when they are guided by Stage-1 bokeh stacks. Each depth backbone processes the sharp input image alongside a three-frame stack generated by Stage-1, featuring varying bokeh strengths $K \in [10, 30]$. Our DSFA module then aggregates these defocus cues to refine depth estimation without requiring any modifications to the original decoder or loss functions.

For cross-domain zero-shot MMDE in~\Cref{tab:unified_depth_results}, the \ours variants consistently achieve better or comparable $\text{AbsRel}$ and $\delta_1$ than strong baselines, while broadly improving each underlying model. These gains arise because DSFA operates along a physically normalised blur axis $K$ rather than dataset-specific heuristics, which stabilises defocus cues across diverse scenes. On the in-domain NYUv2 benchmark, \Cref{tab:unidepthv2_dsfa_nyuv2} shows that \ours attains state-of-the-art error levels among monocular metric depth estimators. The gains are most pronounced on thin structures, reflective surfaces, and cluttered indoor layouts where single-frame cues are ambiguous: DSFA leverages structured bokeh variations to sharpen object boundaries and stabilise local metric scale while preserving the backbone's behaviour in well-conditioned regions. Compared with dedicated Depth-from-Defocus methods (\Cref{tab:dfd_nyuv2}), \ours achieves the highest $\delta_1$ and the lowest RMS by a large margin—despite a strictly harder setting. Every competing DfD method constructs its evaluation focal stack by rendering RGB-D pairs through a thin-lens or PSF forward model, implicitly encoding ground-truth depth, whereas \ours generates its bokeh stack from a single sharp image with no access to any depth map. In real-world deployment where ground-truth depth is unavailable, this advantage would widen further.

Qualitative comparisons in~\Cref{fig:stage2} further confirm that \ours produces depth maps with clearer layer separation, cleaner occlusion boundaries and more reliable far-range estimates than single-frame counterparts, especially in cases where textures are weak but defocus changes remain informative. These results indicate that calibrated Stage-1 bokeh stacks together with the plug-and-play DSFA integration provide an effective bridge between learned defocus and universal MMDE.

\subsection{Ablation Studies}
\label{ssec:ablation}

\textbf{Ablation on Stage-1 Bokeh Cross-Attention Design.}
By comparing FLUX.1-Kontext~\citep{blackforestlabsFLUXKontext2025} and our \ours results in \Cref{tab:bokeh_two_datasets}, we conclude that the observed improvement in bokeh rendering originates from our Stage-1 Bokeh Cross-Attention design rather than from FLUX.1-Kontext's text-driven image editing capabilities~\citep{fortes2025bokeh,yuan2025generative}. The FLUX.1-Kontext base model performs poorly on physically consistent, lens-like defocus rendering. In contrast, our cross-attention design yields more accurate and visually coherent bokeh that better matches real optical defocus.

\textbf{Ablation on Stage-2 Divided Space Focus Attention.}
\begin{table}[t]
\centering
\small
\setlength{\tabcolsep}{6pt}
\begin{tabular}{lrr}
\toprule
Method & $\delta_1\uparrow$ & Abs\_Rel$\downarrow$ \\
\midrule
Depth Anything v2~\citep{yang2024depthanythingv2} & 0.914 & 0.097 \\ 
\midrule
BokehDepth (w/o Focus) & 0.620 & 0.215 \\
BokehDepth (w/o Space) & 0.567 & 0.236 \\
BokehDepth (w/o FiLM) & 0.602 & 0.223 \\
BokehDepth (w/o Bokeh) & 0.855 & 0.119 \\
BokehDepth (w/ BokehMe) & 0.914 & 0.094 \\ 
BokehDepth (w/ Pred-BokehMe) & 0.918 & 0.094 \\
BokehDepth (w/ FLUX.1-Kontext) & 0.608 & 0.227 \\
\midrule
\textbf{Ours} & \textbf{0.943} & \textbf{0.084} \\
\bottomrule
\end{tabular}
\caption{Ablation study of Stage-2 DSFA. Trained on VKITTI2~\citep{cabon2020virtual}, evaluated on KITTI~\citep{geiger2012we} Eigen split.}
\label{tab:dsfa_ablation}
\vspace{-15pt}
\end{table}
\Cref{tab:dsfa_ablation} ablates the main components of the Stage-2 DSFA module.
\emph{BokehDepth (w/o Focus)} keeps only the Space branch and removes the Focus branch.
\emph{BokehDepth (w/o Space)} drops the Space branch so the Focus branch alone performs cross-frame attention.
\emph{BokehDepth (w/o FiLM)} keeps both branches but disables FiLM conditioning on the focus parameter~$k$.
\emph{BokehDepth (w/o Bokeh)} repeats the all-in-focus image across the stack with $K{=}0$, removing meaningful defocus variation.
\emph{BokehDepth (w/ BokehMe)} feeds DSFA with bokeh stacks rendered by BokehMe~\citep{Peng2022BokehMe} using ground-truth sparse LiDAR depth instead of Stage-1 outputs.
\emph{BokehDepth (w/ Pred-BokehMe)} replaces the ground-truth depth with the dense depth map predicted by the base DAv2 model itself, providing BokehMe with spatially complete input.
\emph{BokehDepth (w/ FLUX.1-Kontext)} bypasses Stage-1 entirely and feeds DSFA with raw bokeh renderings produced by the unmodified FLUX.1-Kontext backbone without our Bokeh Cross-Attention conditioning.
Two observations emerge from these results.
First, the bokeh source must be spatially dense and cross-frame consistent for DSFA to extract useful defocus cues.
Pred-BokehMe slightly outperforms BokehMe because dense predicted depth yields smoother blur boundaries than sparse LiDAR, yet both fall well short of Stage-1, which produces inherently dense and consistent stacks without relying on any depth prior.
Second, raw generative blur is not sufficient. FLUX-Kontext without our calibration mechanism collapses to $\delta_1{=}0.608$, far below even the no-bokeh baseline.
This confirms that DSFA does not exploit appearance-level artifacts of the generative backbone but instead requires a physically calibrated defocus sweep aligned along the $K$ axis, which is exactly what Stage-1 provides.

\textbf{Computational Cost Analysis.}
\begin{table}[t]
\centering
\small
\setlength{\tabcolsep}{4pt}
\begin{tabularx}{\columnwidth}{Xrrr}
\toprule
Component & Params & Lat.\,(s/img) & Mem\,(GiB) \\
\midrule
UniDepthV2 (baseline)                    & 353.83M & 0.045  & 2.48  \\
UniDepthV2 + DSFA     & 464.07M & 0.114  & 3.25  \\
Stage-1 bokeh rendering                 & 12.32B  & 48.663 & 33.52 \\
Full pipeline       & 12.78B  & 48.777 & 33.52 \\
\bottomrule
\end{tabularx}
\caption{Computational cost breakdown of BokehDepth (UniDepthV2 backbone). Stage-1 runs 30 diffusion steps and operates entirely offline. Once the bokeh stack is precomputed, DSFA adds minimal overhead to the baseline.}
\label{tab:cost}
\end{table}
Table~\ref{tab:cost} profiles the computational overhead of each component.
Stage-2 DSFA introduces minimal additional parameters and latency on top of the baseline, representing a marginal cost relative to the accuracy gains reported in Tables~\ref{tab:unified_depth_results}--\ref{tab:dfd_nyuv2}.
Stage-1 dominates the end-to-end runtime because it runs the full FLUX.1-Kontext backbone for 30 diffusion steps per bokeh stack.
Crucially, Stage-1 operates as a fully offline data-generation step analogous to synthetic augmentation, and its outputs can be precomputed, cached, and reused across arbitrary downstream depth models.
Once the bokeh stacks are cached, the online inference cost of BokehDepth remains nearly identical to that of the unmodified baseline.
We note that the current Stage-1 latency stems from the unoptimized multi-step diffusion schedule rather than from an inherent architectural bottleneck.
Applying one-step or few-step distillation~\citep{yin2024improved, sauer2024adversarial} to the Stage-1 generator offers a direct path toward real-time bokeh-stack synthesis.

\section{Conclusion}
\label{sec:conclusion}
\oursbf asks whether lens-aware defocus, grounded in thin-lens physics and powered by a pretrained image-editing prior, can act as a depth-map-free geometric cue for monocular metric depth. We propose a two-stage framework that first learns a controllable bokeh generator on FLUX.1-Kontext to synthesize calibrated bokeh stacks without depth maps, and then feeds these stacks into existing depth encoders through a plug-and-play Divided Space Focus Attention module. Across real and synthetic bokeh benchmarks and diverse indoor and outdoor depth datasets, this design improves visual fidelity over depth-based bokeh pipelines and consistently boosts strong monocular depth foundations in both in-domain and zero-shot settings. These results suggest that physically guided defocus can serve as a scalable and supervision-free signal that complements large image backbones and moves toward unified models that jointly learn image formation, defocus and depth.

\section*{Acknowledgement}
This research is supported by the National Research Foundation, Singapore, under its NRF Fellowship Award NRF-NRFF16-2024-0003 and NTU SUG-NAP. This research is also supported by cash and in-kind funding from NTU S-Lab and industry partner(s).

\section*{Impact Statement}
This paper aims to advance monocular metric depth estimation by leveraging controllable defocus cues synthesized as a bokeh stack and fusing them into a depth prediction backbone. Improved single-camera depth perception can benefit a wide range of applications such as robotics navigation, augmented reality, accessibility assistance, and content creation, especially in settings where multi-sensor hardware is impractical. At the same time, more accurate monocular metric depth estimation may be misused for intrusive 3D reconstruction or surveillance, and model failures under challenging conditions such as motion, exposure variation, rolling shutter, or focus drift could pose risks in safety-critical deployments. We encourage responsible use that respects privacy norms and legal constraints, and we emphasize that additional validation is necessary before using the method in high-stakes real-world systems.

\clearpage



\bibliography{example_paper}
\bibliographystyle{icml2026}

\clearpage
\appendix
\onecolumn

\begin{figure*}[htbp]
    \includegraphics[width=\textwidth]{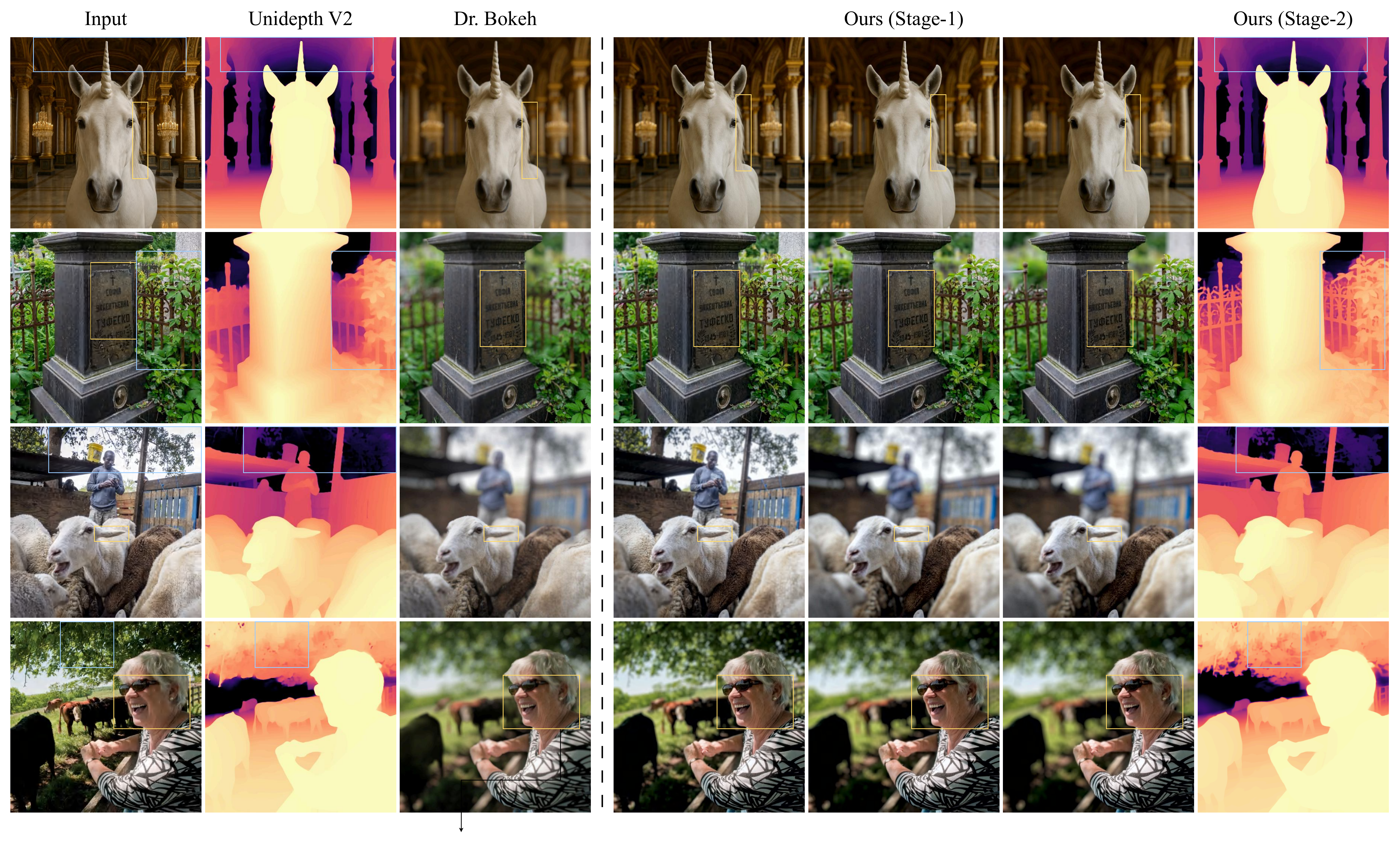}
    \caption{More qualitative results of \oursbf. \emph{Left:} conventional pipelines predict depth from a single sharp image and render bokeh from the noisy depth map. \emph{Right:} our two-stage framework, where Stage-1 generates a calibrated bokeh stack with multiple bokeh strengths from a single image and Stage-2 built on UniDepthV2~\citep{piccinelli2025unidepthv2} fuses defocus cues to produce sharper and more reliable metric depth.}
    \label{fig:teaser_supp}
\end{figure*}

\section{Stage-1 Background Details}
\label{sec:s1backdetails}

\subsection{FLUX.1-Kontext}
FLUX.1-Kontext is a rectified flow transformer that unifies text-to-image (T2I) synthesis and instruction guided image editing in a single model \cite{blackforestlabsFLUXKontext2025}. Earlier systems such as SDXL~\cite{rombachHighResolutionImageSynthesis2022a, podellSDXLImprovingLatent2023} or PixArt~\cite{chenPixArt$alpha$FastTraining2023, chen2024pixart} typically keep generation and editing in separate networks or attach task specific adapters to a base T2I model. In contrast, FLUX.1-Kontext uses one backbone for both tasks. The model operates in the latent space of a learned autoencoder \cite{rombachHighResolutionImageSynthesis2022a, blackforestlabsFLUXKontext2025}: an input RGB image is encoded into a spatial latent grid, and all generation and editing is performed in that latent space before decoding back to pixels \cite{ podellSDXLImprovingLatent2023}.

Instead of classical denoising diffusion, which learns to reverse a long noise corruption process through many discrete denoising steps \cite{sohl-dicksteinDeepUnsupervisedLearning2015a, hoDenoisingDiffusionProbabilistic2020a, rombachHighResolutionImageSynthesis2022a, podellSDXLImprovingLatent2023}, FLUX.1-Kontext follows the flow-matching formulation \cite{liuRectifiedFlow2022, esserScalingRectifiedFlow2024, blackforestlabsFLUXKontext2025}. Let $x_0 \sim q(x_0)$ be a clean latent sample from the data distribution and let $\varepsilon \sim \mathcal{N}(0,\mathbf{I})$ be Gaussian noise. We define a straight interpolation path between data and noise
\begin{align}
x_t &= (1 - t)\, x_0 + t\, \varepsilon,\quad t \in [0,1], \\
\frac{d x_t}{d t} &= v_\theta(x_t, t, c),
\end{align}
where $v_\theta$ is a time dependent velocity field predicted by a transformer under conditioning $c$. The model learns a velocity field that transports noise to data along this nearly linear path \cite{liuRectifiedFlow2022, esserScalingRectifiedFlow2024}. Because $x_t$ is a convex combination of $x_0$ and $\varepsilon$, the ideal instantaneous velocity is $(\varepsilon - x_0)$, which is constant in $t$. The training objective is
\begin{equation}
\min_{\theta}\ 
\mathbb{E}_{t, x_0, \varepsilon, c}
\left\Vert
(\varepsilon - x_0)
- v_\theta(x_t, t, c)
\right\Vert_2^2 .
\end{equation}
At inference time, sampling integrates the learned ODE from \(t = 1\) (noise) to \(t = 0\) (clean latent) using only a few solver steps \cite{liuRectifiedFlow2022, esserScalingRectifiedFlow2024, blackforestlabsFLUXKontext2025}. This yields fast image generation while preserving structural sharpness, readable text, and consistent identity.

For conditional generation and editing, FLUX.1-Kontext learns the conditional distribution $p(x_0 \mid y, c)$, where $y$ is an optional visual reference such as a style or identity exemplar and $c$ is a natural language instruction \cite{blackforestlabsFLUXKontext2025}. The current editable canvas $x_0$, all reference images $y$, and the instruction $c$ are encoded into tokens and concatenated into a single multimodal sequence processed by a large rectified-flow transformer related to the Multimodal Diffusion Transformer (MMDiT) that was introduced for high resolution text to image synthesis under rectified flow training \cite{esserScalingRectifiedFlow2024, blackforestlabsFLUXKontext2025}. Rotary Position Embedding (RoPE) \cite{su2024roformer} is generalized to three coordinates $(t,h,w)$, where $(h,w)$ are spatial indices and $t$ marks the source stream such as the editable canvas, each visual reference, or the text instruction. This positional encoding allows the transformer to attend across all sources in one pass while preserving both spatial layout and source identity \cite{blackforestlabsFLUXKontext2025}. A single backbone can therefore support pure generation, style transfer, identity preserving editing and iterative refinement in context across multiple user turns \cite{blackforestlabsFLUXKontext2025}.

Editing in FLUX.1-Kontext is formulated as conditional generation rather than masked inpainting \cite{blackforestlabsFLUXKontext2025}, which will be discussed in detail in \Cref{sec:bokeh}. The model receives the editable canvas together with optional visual references and the text instruction and aligns them in latent space using scaled dot product attention \cite{vaswaniAttentionIsAllYouNeed2017}. Let $Q$, $K$, and $V$ denote the query, key and value projections of the concatenated tokens. Attention is computed as
\begin{equation}
\label{eq:MHA}
\mathrm{Attention}(Q,K,V)
=
\mathrm{Softmax}\!\left(\frac{QK^\top}{\sqrt{d}}\right)V .
\end{equation}
During sampling, a rectified flow solver integrates the transformer’s learned velocity field to update the latent canvas under the fused conditioning signals, enabling precise edits, coherent global restyling, and stable subject identity across successive user-guided refinements \cite{liuRectifiedFlow2022, esserScalingRectifiedFlow2024, blackforestlabsFLUXKontext2025}.

\subsection{Thin Lens Bokeh \& Synthetic Bokeh Rendering}\label{sec:thinlens_rendering}
When a camera images a three dimensional scene onto a two dimensional sensor, only points at one specific focus distance appear perfectly sharp. Points in front of or behind that distance form small blurred disks on the sensor. These disks are known as circles of confusion (CoC) and they give rise to the familiar out of focus bokeh pattern in photography \cite{lagendijk2009basic, Wadhwa2018, yangVirtualDSLRHigh2016, Peng2022BokehMe, sheng2024dr}. Under the thin lens model, the diameter $d$ of the CoC depends on the lens geometry and the scene depth \cite{lagendijk2009basic}. For a lens of focal length $f$ and aperture $N$ (that is, f number $N = f/A$ with entrance pupil diameter $A$), focused at distance $S_1$ while observing a subject at distance $S_2$, the blur diameter is
\begin{equation}\label{eq:thin_lens}
    d = \frac{f^2}{\, N \left( S_1 - f \right) \,} \frac{\, \lvert S_2 - S_1 \rvert \,}{S_2} ,
\end{equation}
where $S_1$ is often called the focus distance and $S_2$ is the subject distance. Larger focal length $f$, wider aperture (smaller $N$), and greater separation between $S_1$ and $S_2$ all increase $d$, which produces stronger background blur and a more pronounced bokeh appearance \cite{lagendijk2009basic, yangVirtualDSLRHigh2016, Wadhwa2018}.

Classical synthetic defocus methods approximate this physical effect by spreading, or splatting, each pixel over a disk whose radius matches its local blur size \cite{yangVirtualDSLRHigh2016, pengInteractivePortraitBokeh2021}. Given per pixel disparity, the blur radius $r$ can be written as
\begin{equation}\label{eq:radius}
    r = K \cdot \Delta \text{disp}, 
    \quad
    \Delta \text{disp} = \left| \frac{1}{S_1} - \frac{1}{S_2} \right| ,
\end{equation}
where \(K\) is a camera-dependent scale that absorbs intrinsic parameters such as focal length and aperture \cite{yangVirtualDSLRHigh2016, Wadhwa2018}. Pixels near the current focus distance \(S_1\) receive a small blur radius and stay sharp, while pixels far from \(S_1\) receive a large radius and become strongly blurred \cite{yangVirtualDSLRHigh2016, pengInteractivePortraitBokeh2021}.

This splatting model produces convincing shallow depth of field in regions that are smooth in depth, but it often fails near strong occlusion boundaries. At such boundaries, foreground colors can leak into the blurred background or background colors can wash across the foreground edge, which creates unnatural halos or color bleeding artifacts \cite{Wadhwa2018, Peng2022BokehMe}. Modern bokeh renderers address this limitation by making the process explicitly aware of occlusion and layering. One line of work uses neural refinement modules to inpaint or correct contaminated regions after classical splatting \cite{Peng2022BokehMe}. Another line decomposes the scene into ordered depth layers or multiplane images and composites them from back to front with explicit handling of partial occlusion. This layered or occlusion aware rendering strategy suppresses color bleeding at depth discontinuities and yields more realistic subject boundaries and foreground isolation \cite{Peng2022BokehMe, sheng2024dr}.

\begin{figure*}[htbp]
    \centering
    \setlength{\tabcolsep}{1pt} 
    \renewcommand{\arraystretch}{0} 

    \begin{tabular}{cccccccc}
        \includegraphics[width=0.12\textwidth]{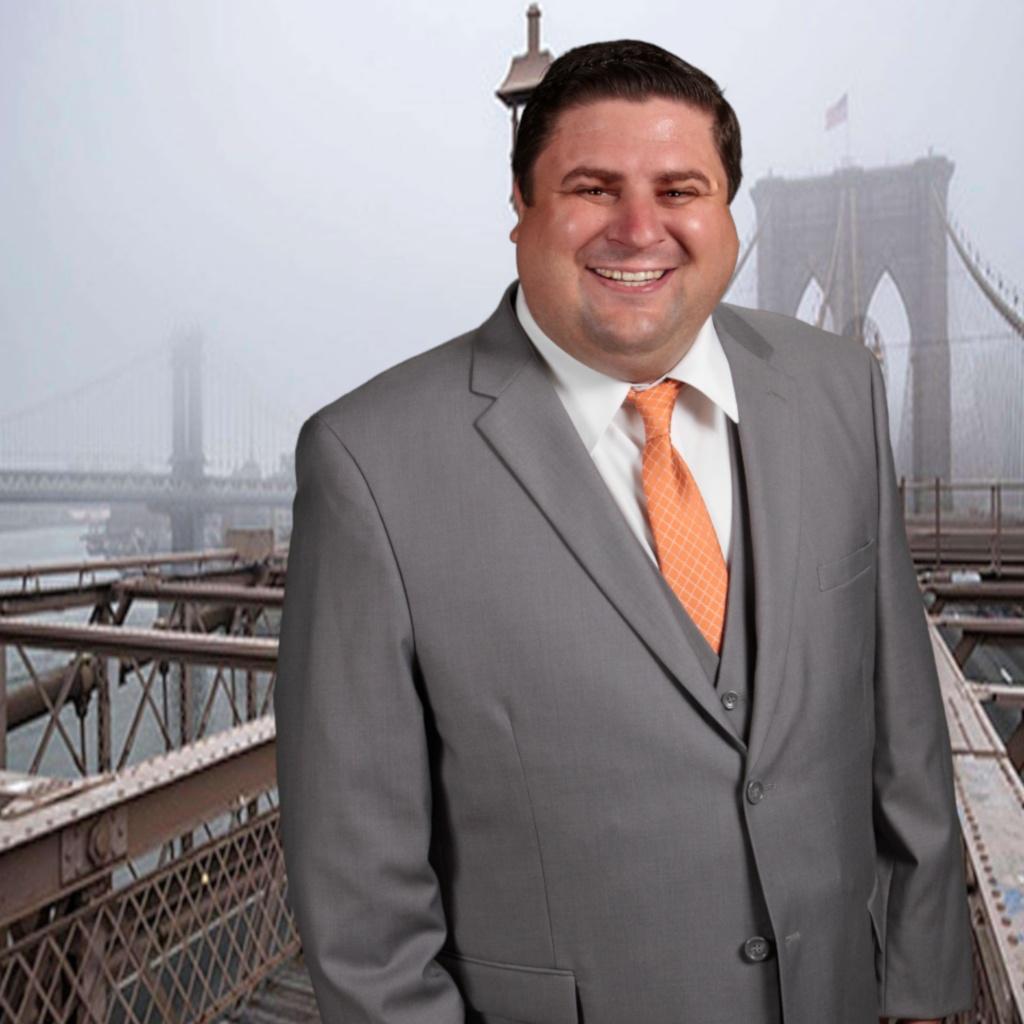} &
        \includegraphics[width=0.12\textwidth]{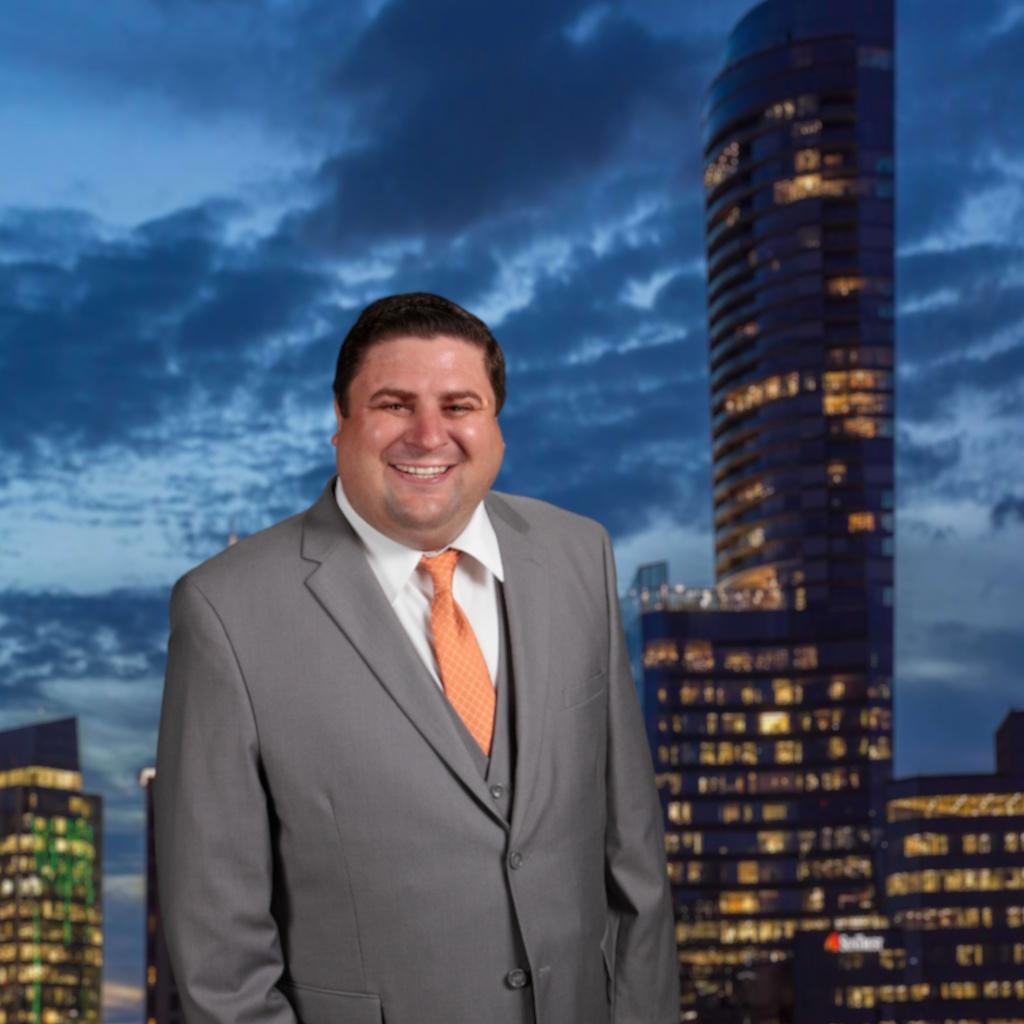} &
        \includegraphics[width=0.12\textwidth]{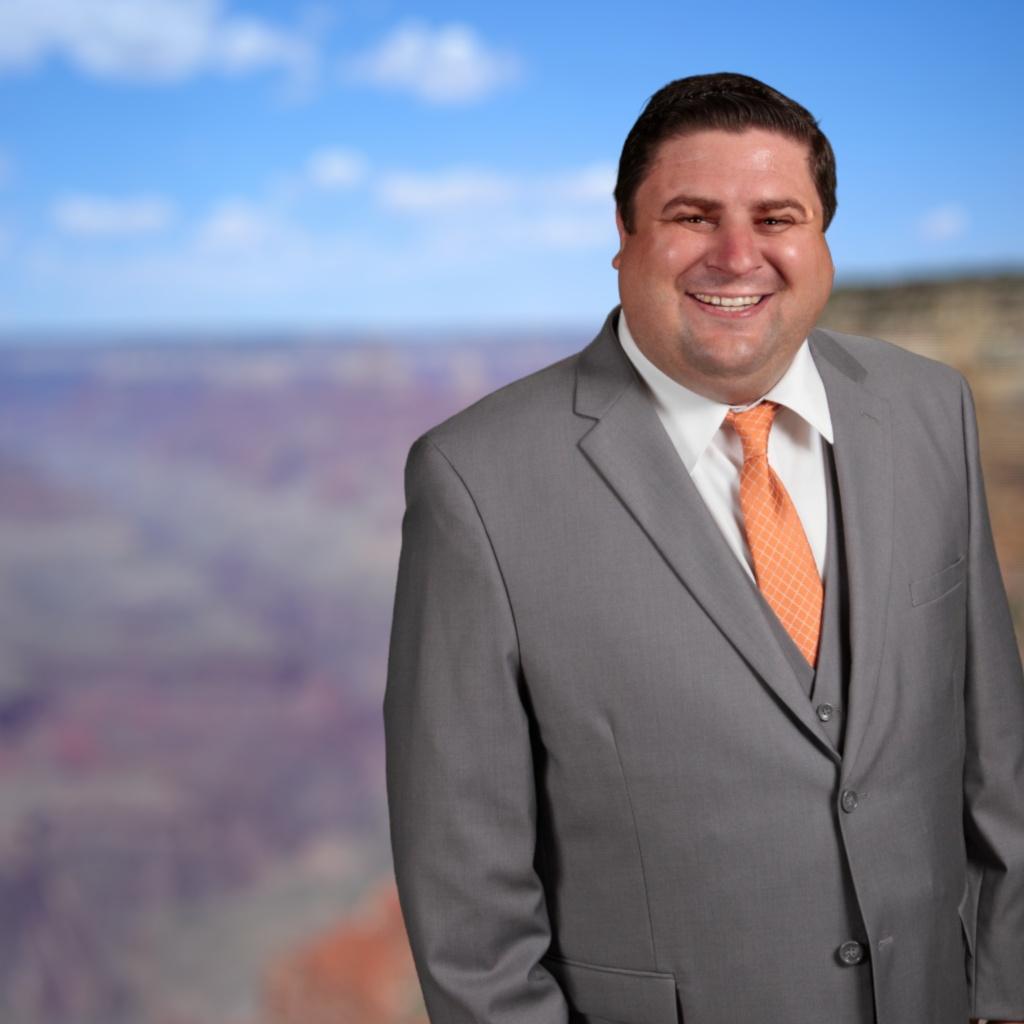} &
        \includegraphics[width=0.12\textwidth]{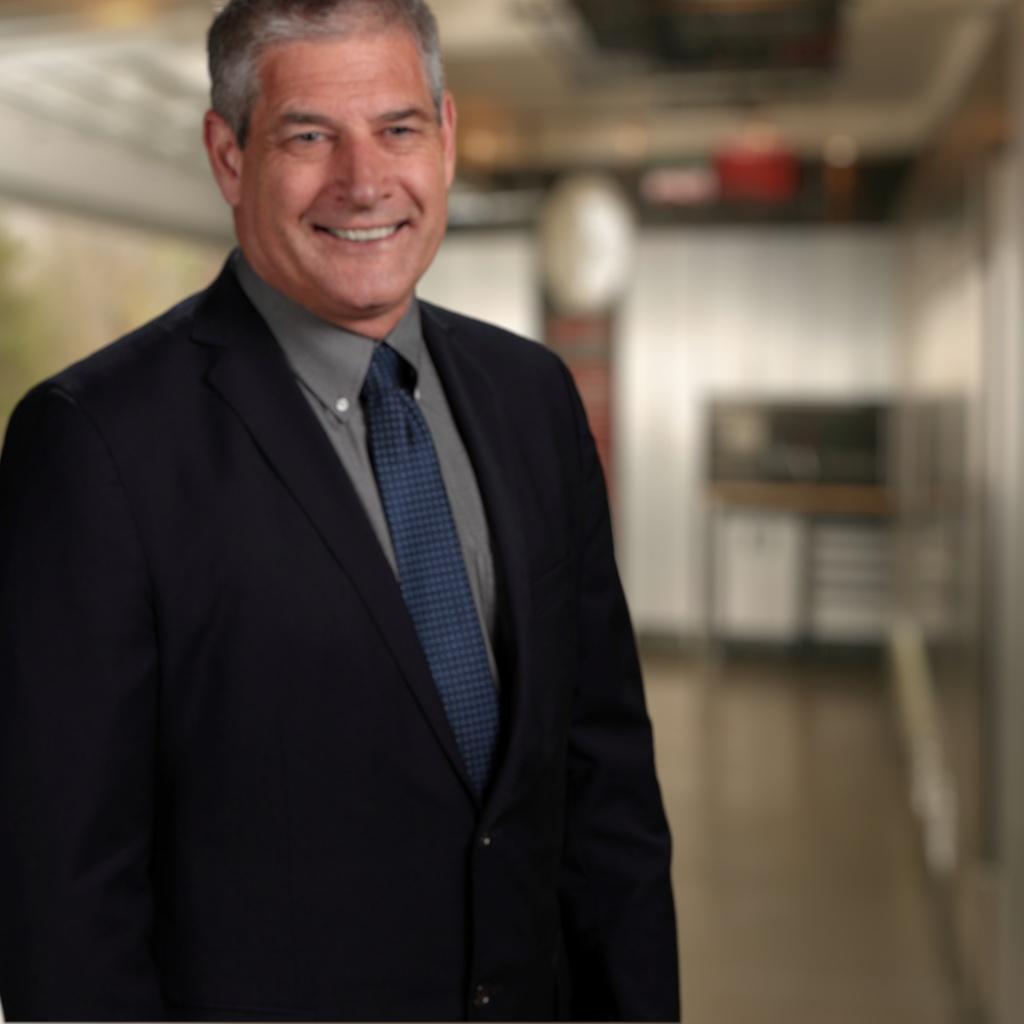} &
        \includegraphics[width=0.12\textwidth]{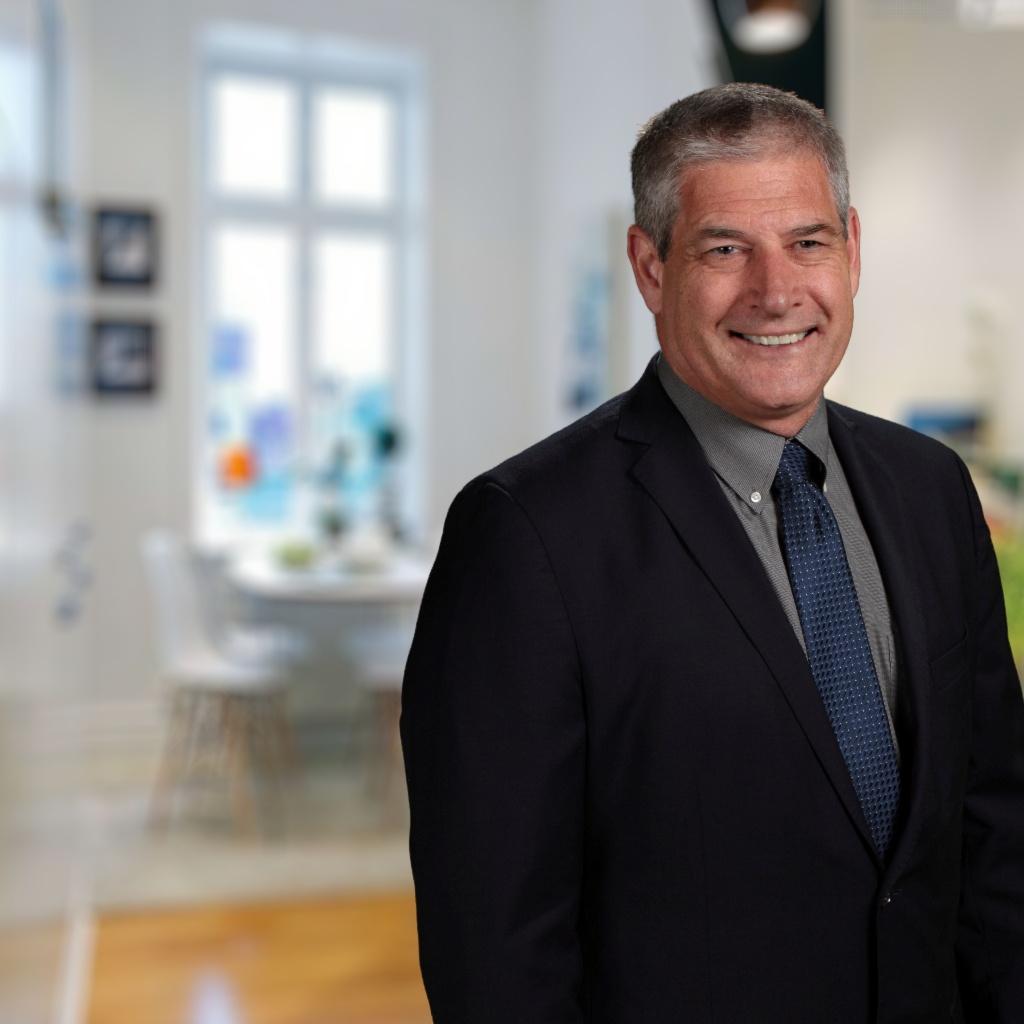} &
        \includegraphics[width=0.12\textwidth]{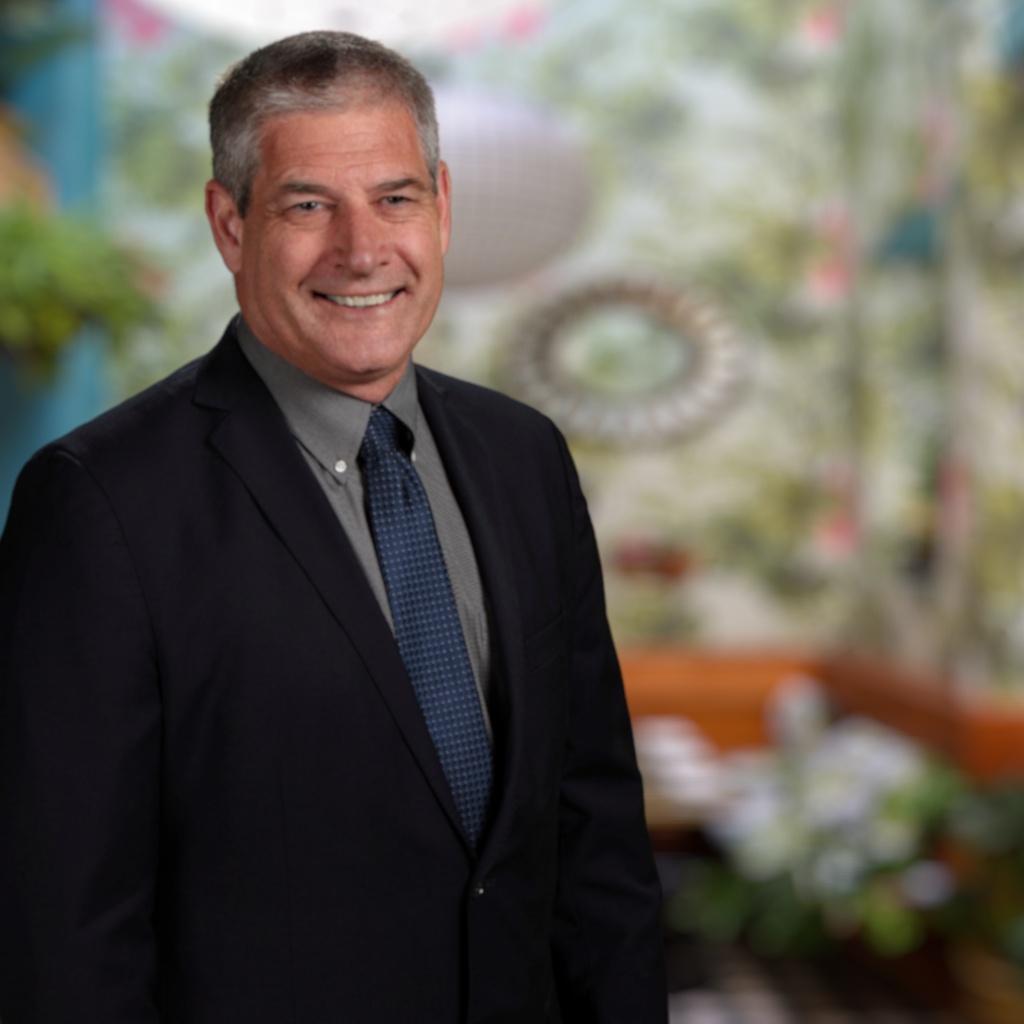} &
        \includegraphics[width=0.12\textwidth]{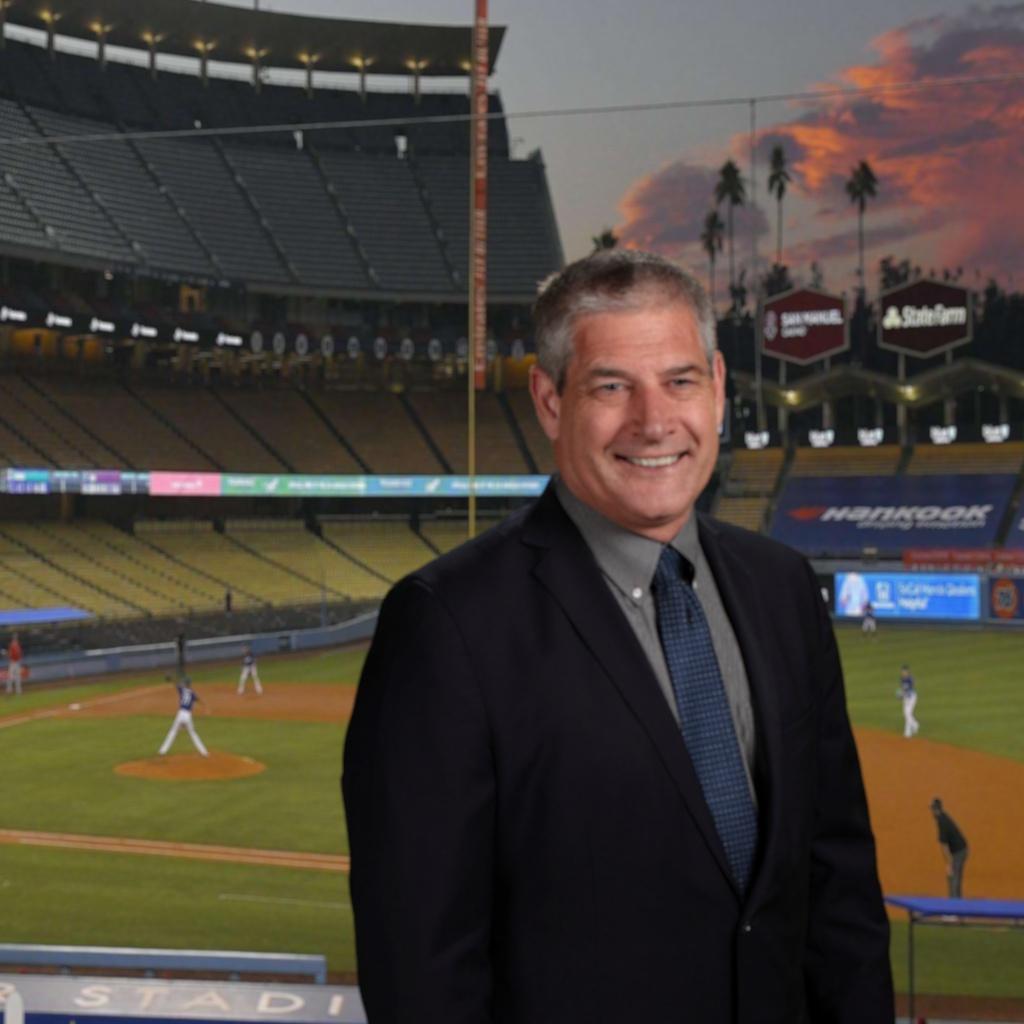} &
        \includegraphics[width=0.12\textwidth]{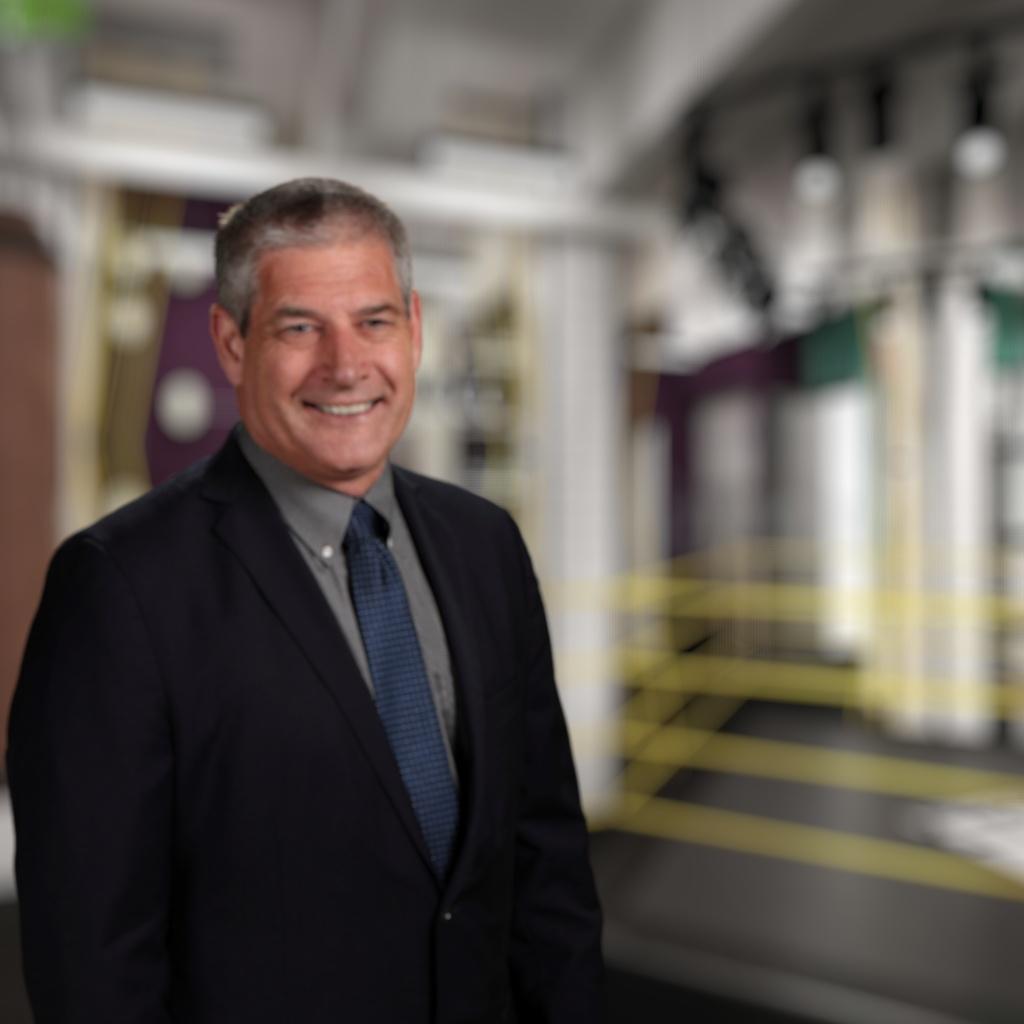} \\[2pt]

        \includegraphics[width=0.12\textwidth]{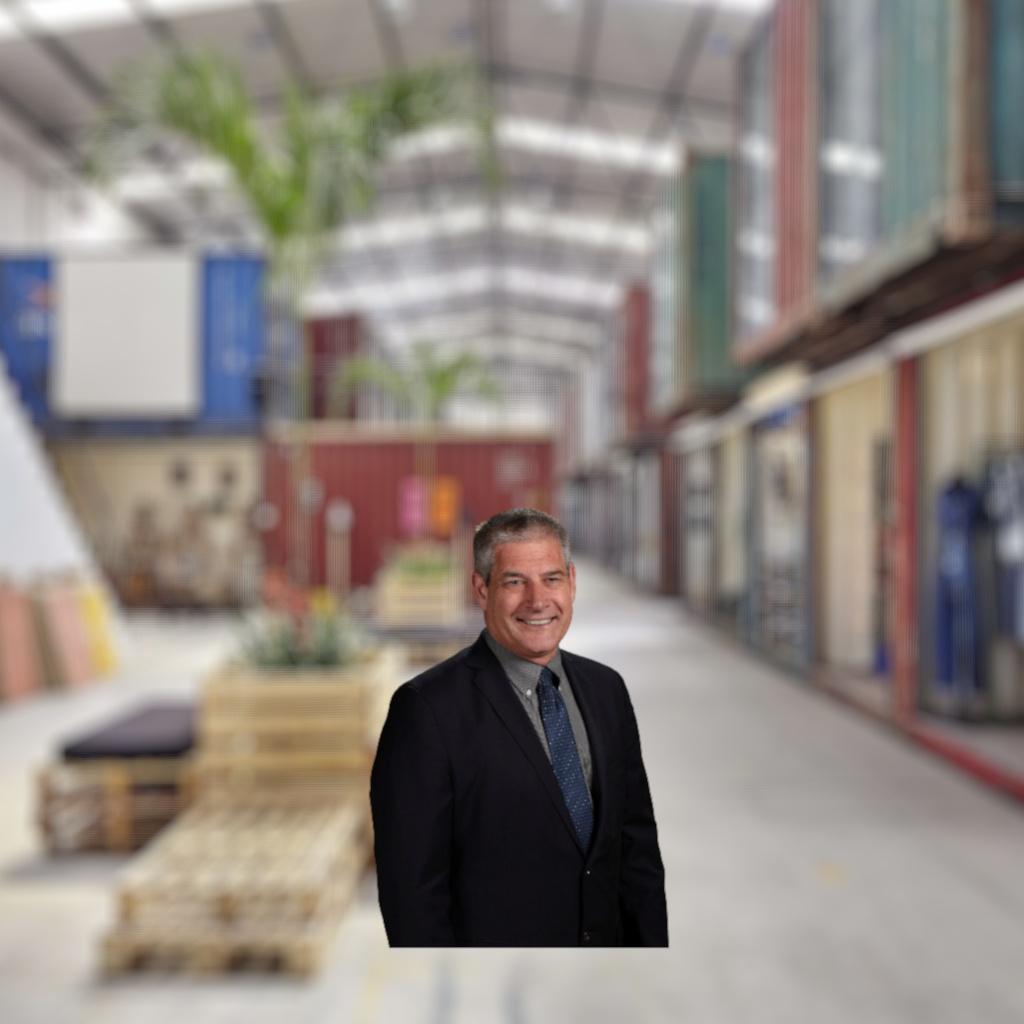} &
        \includegraphics[width=0.12\textwidth]{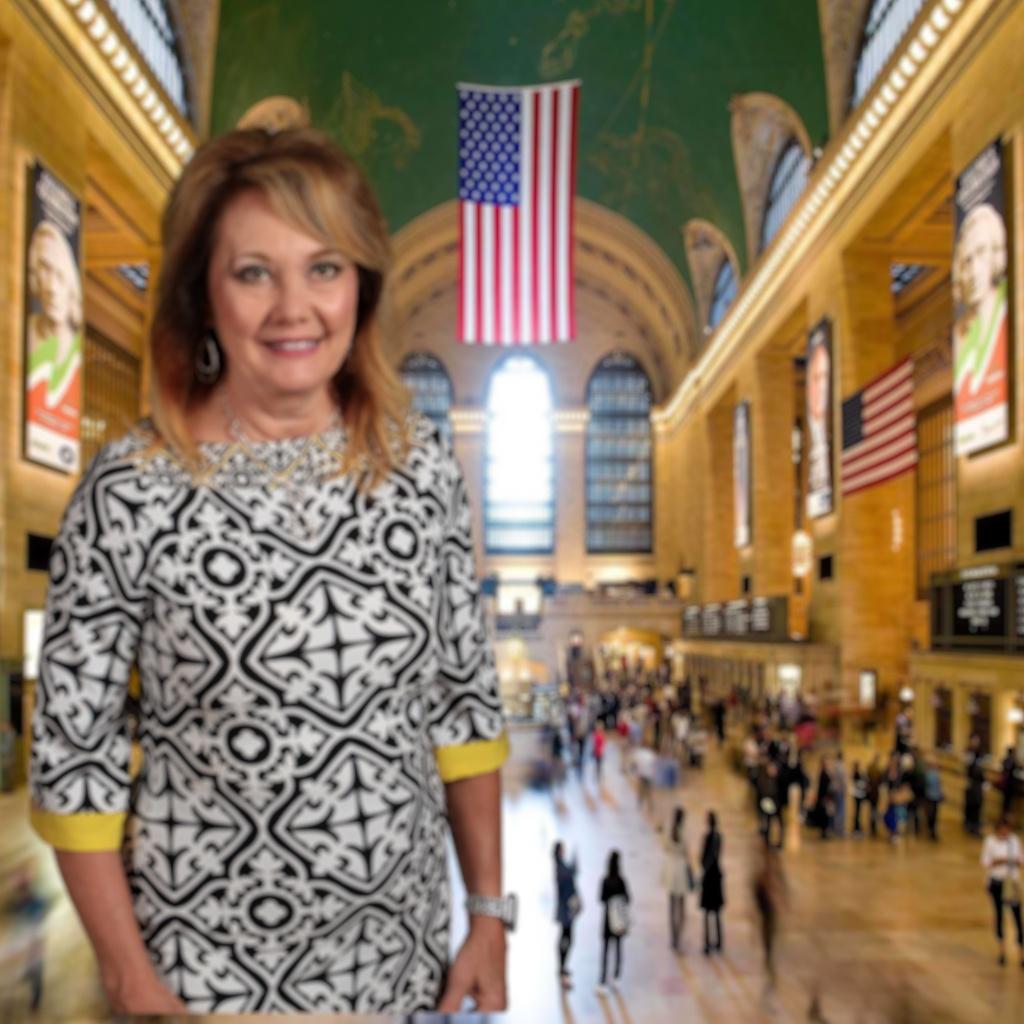} &
        \includegraphics[width=0.12\textwidth]{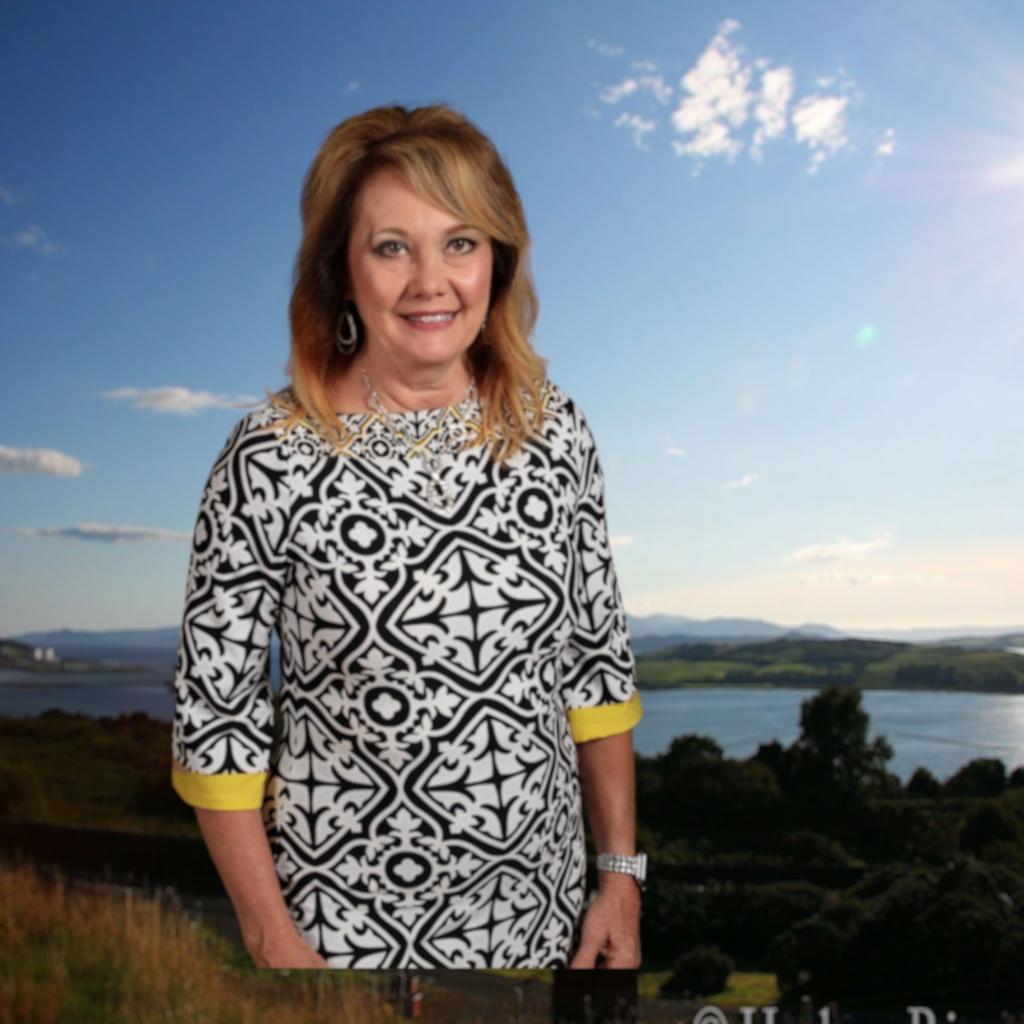} &
        \includegraphics[width=0.12\textwidth]{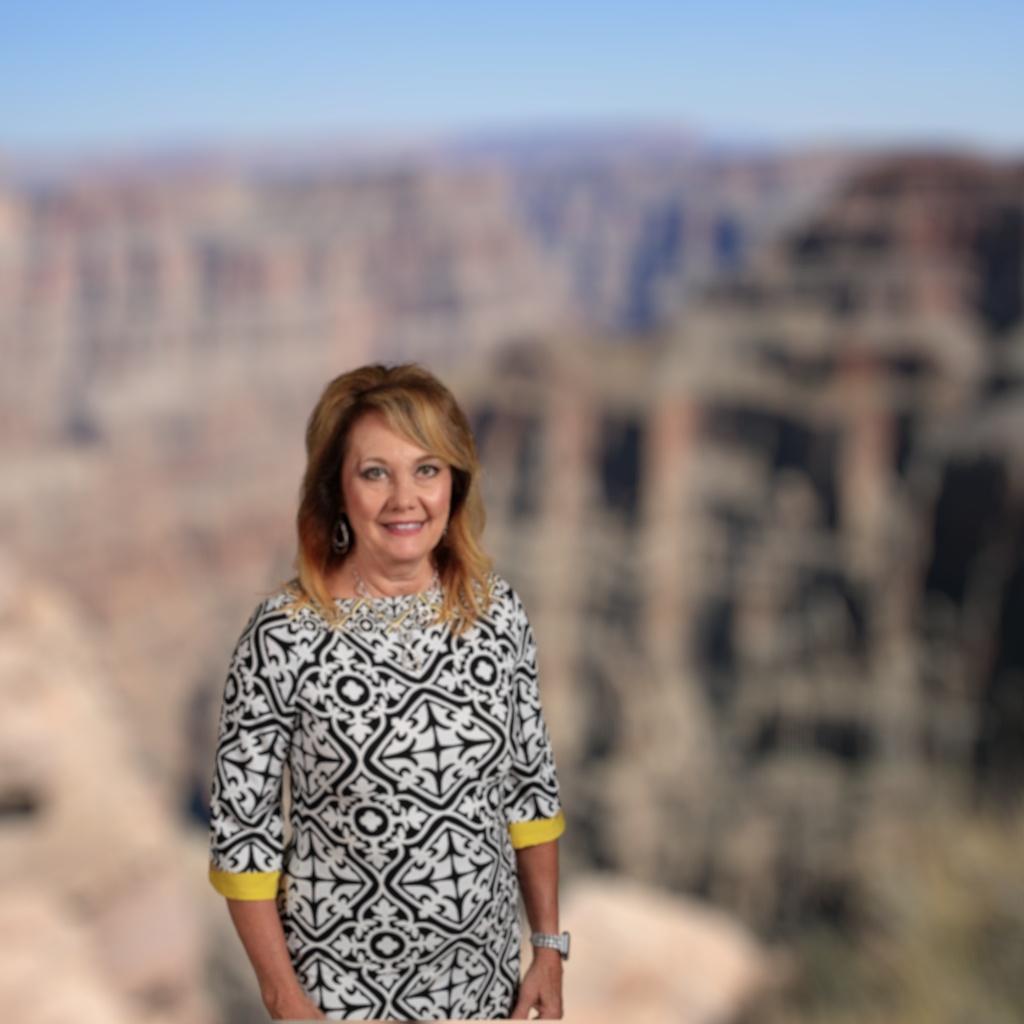} &
        \includegraphics[width=0.12\textwidth]{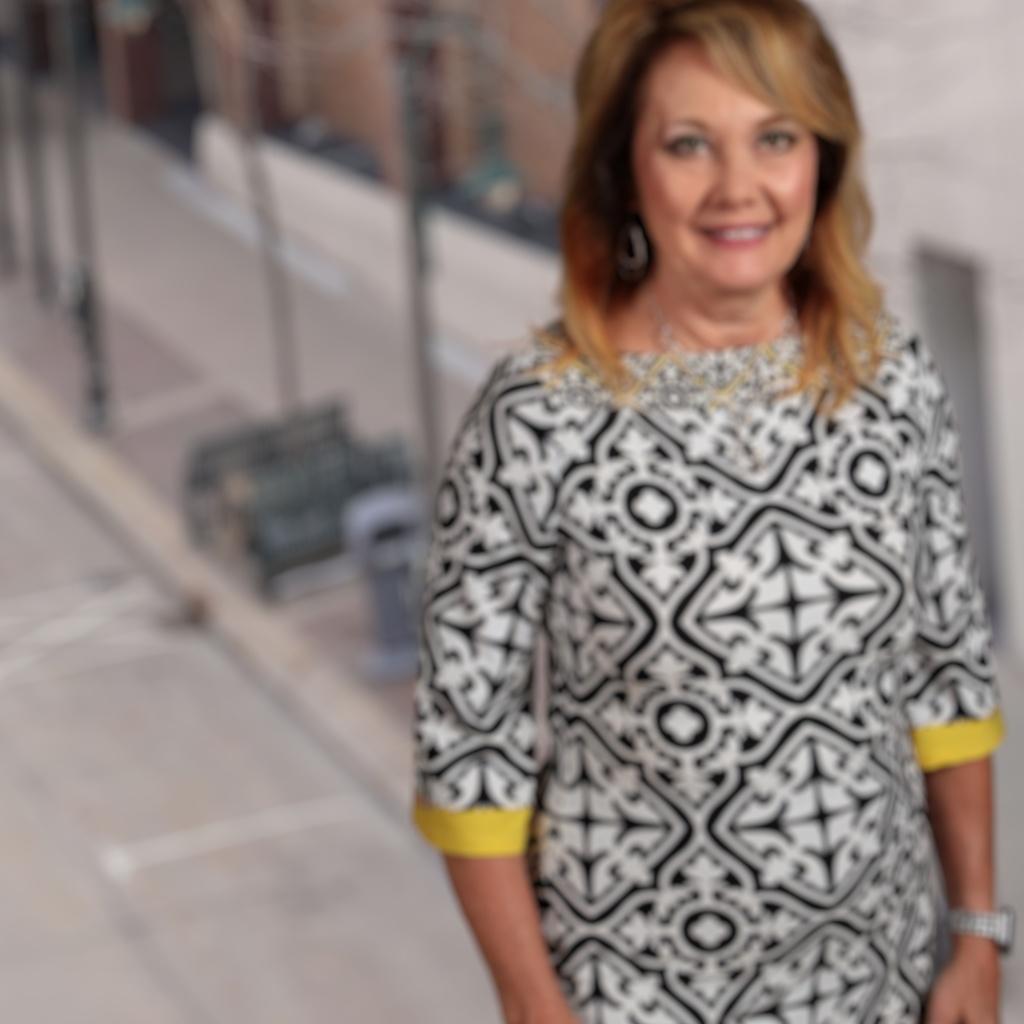} &
        \includegraphics[width=0.12\textwidth]{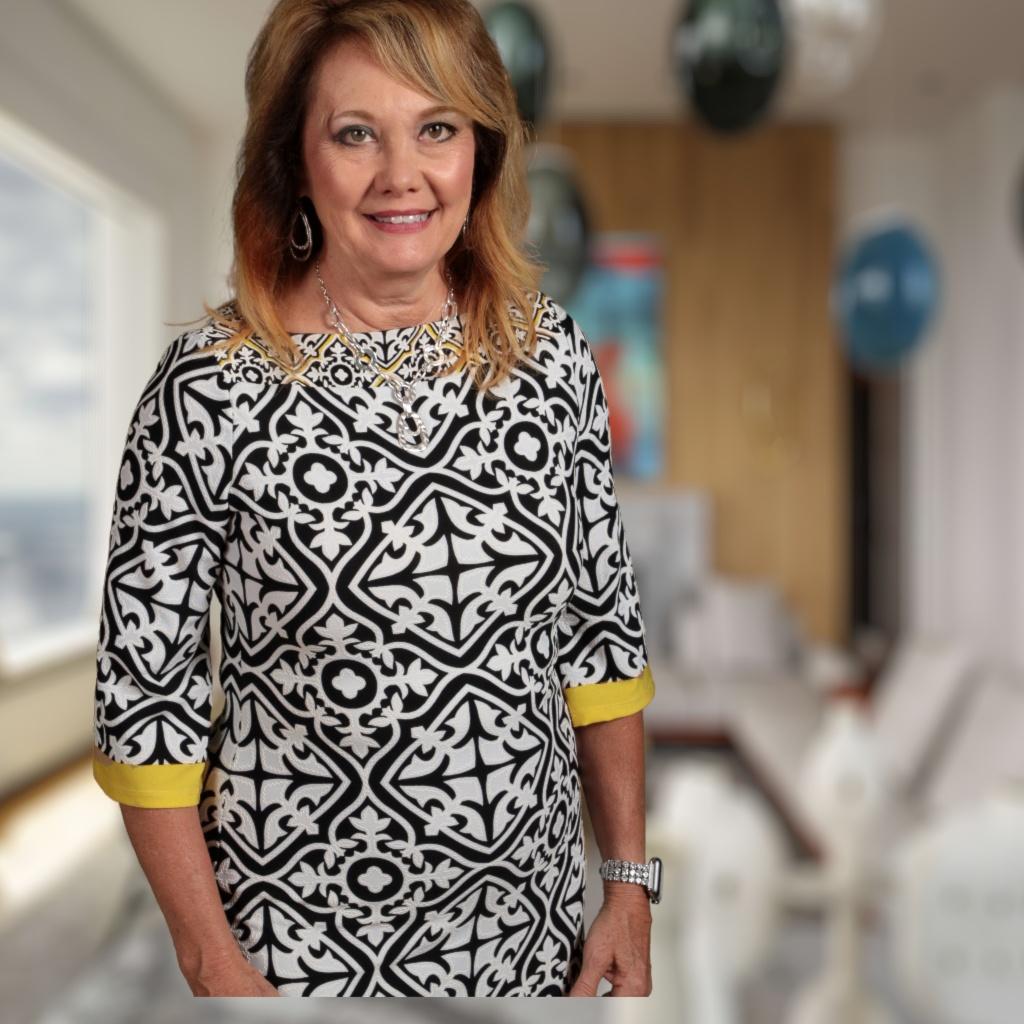} &
        \includegraphics[width=0.12\textwidth]{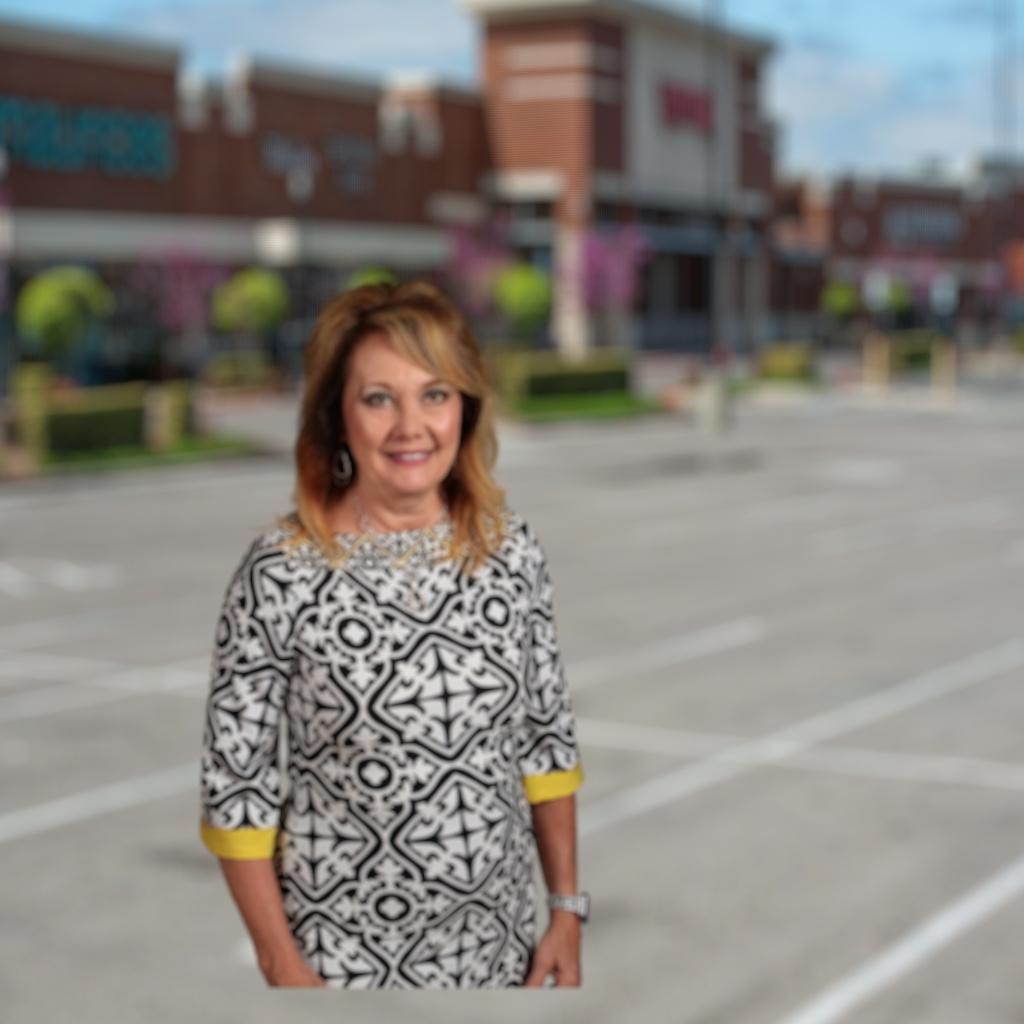} &
        \includegraphics[width=0.12\textwidth]{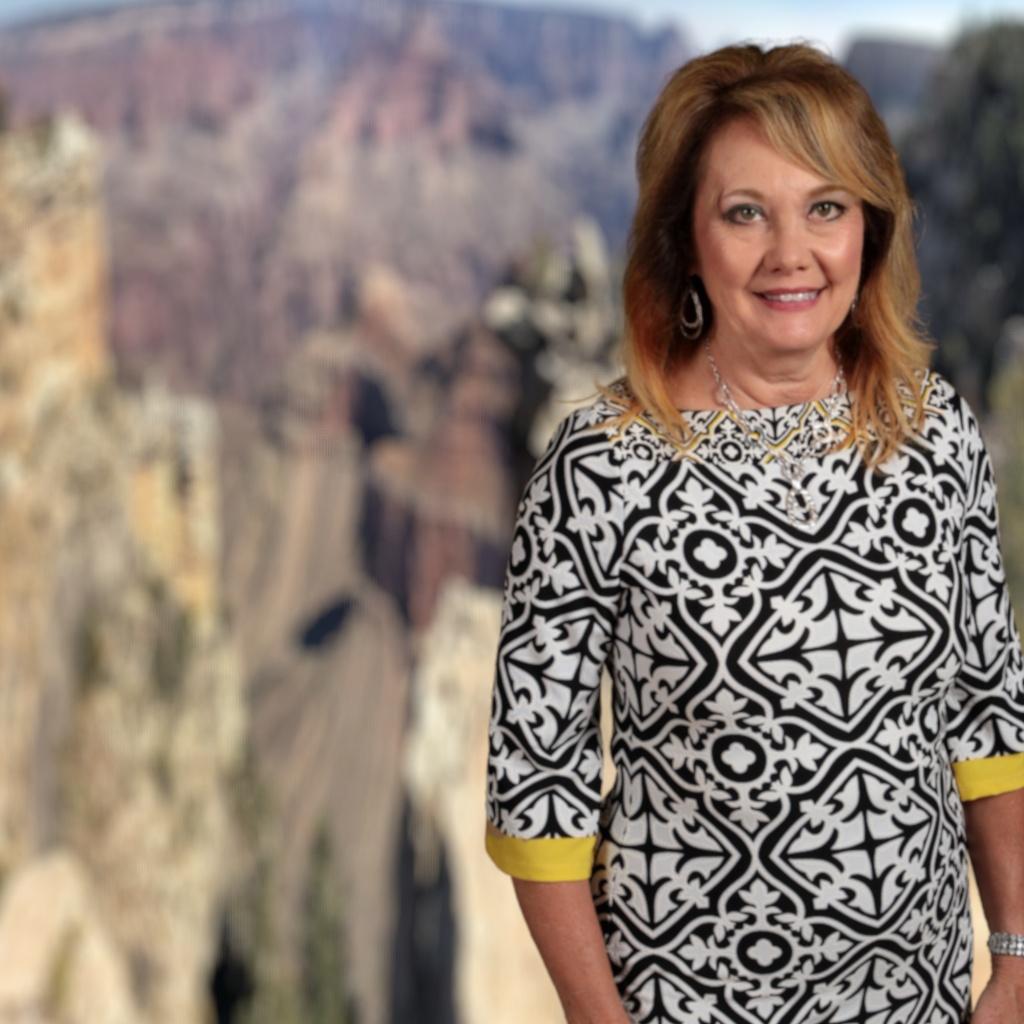} \\[2pt]

        \includegraphics[width=0.12\textwidth]{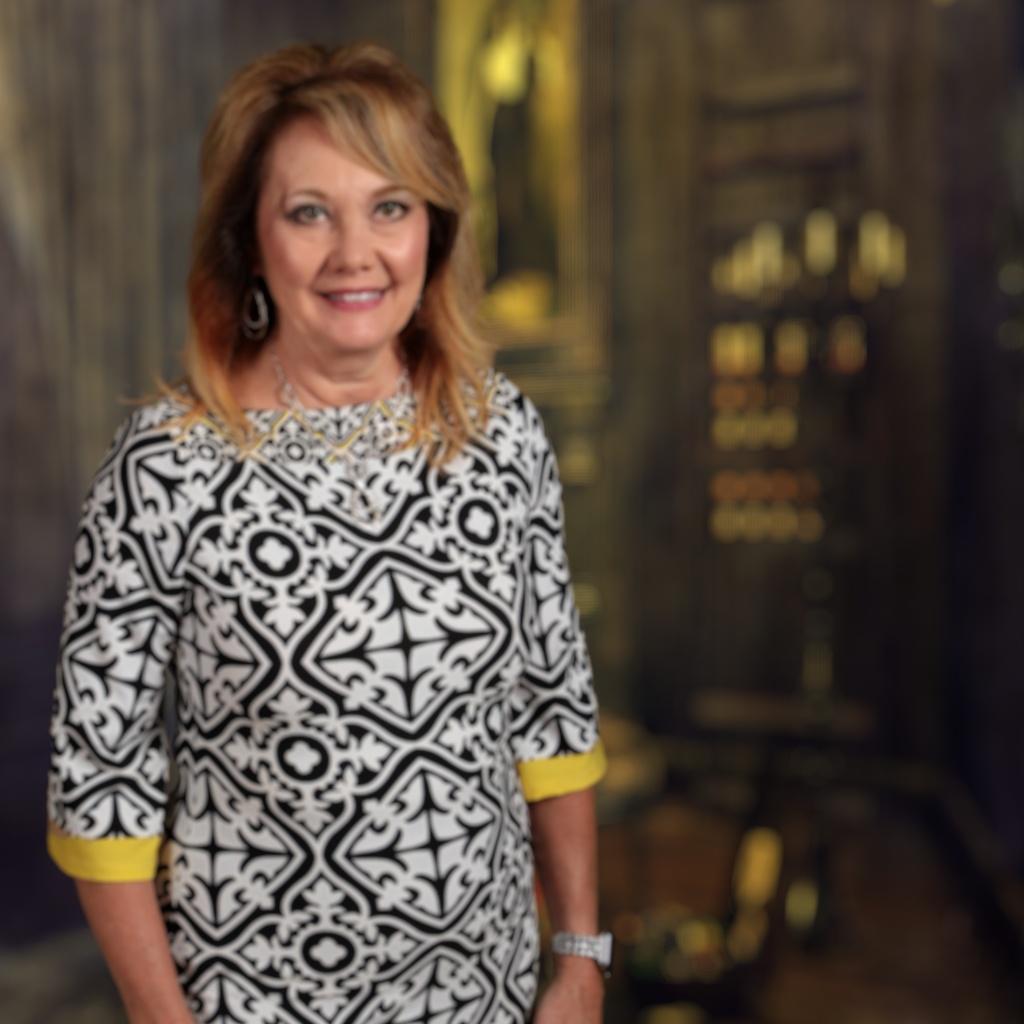} &
        \includegraphics[width=0.12\textwidth]{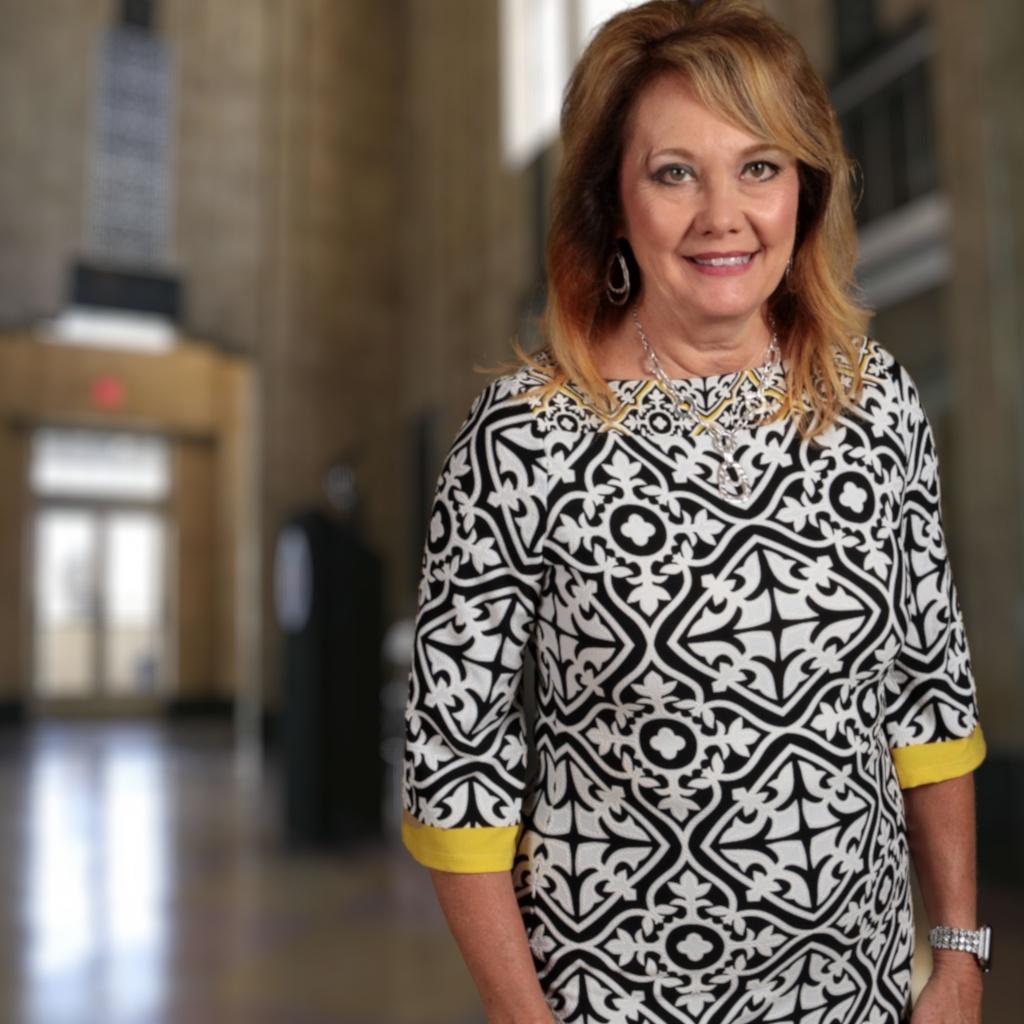} &
        \includegraphics[width=0.12\textwidth]{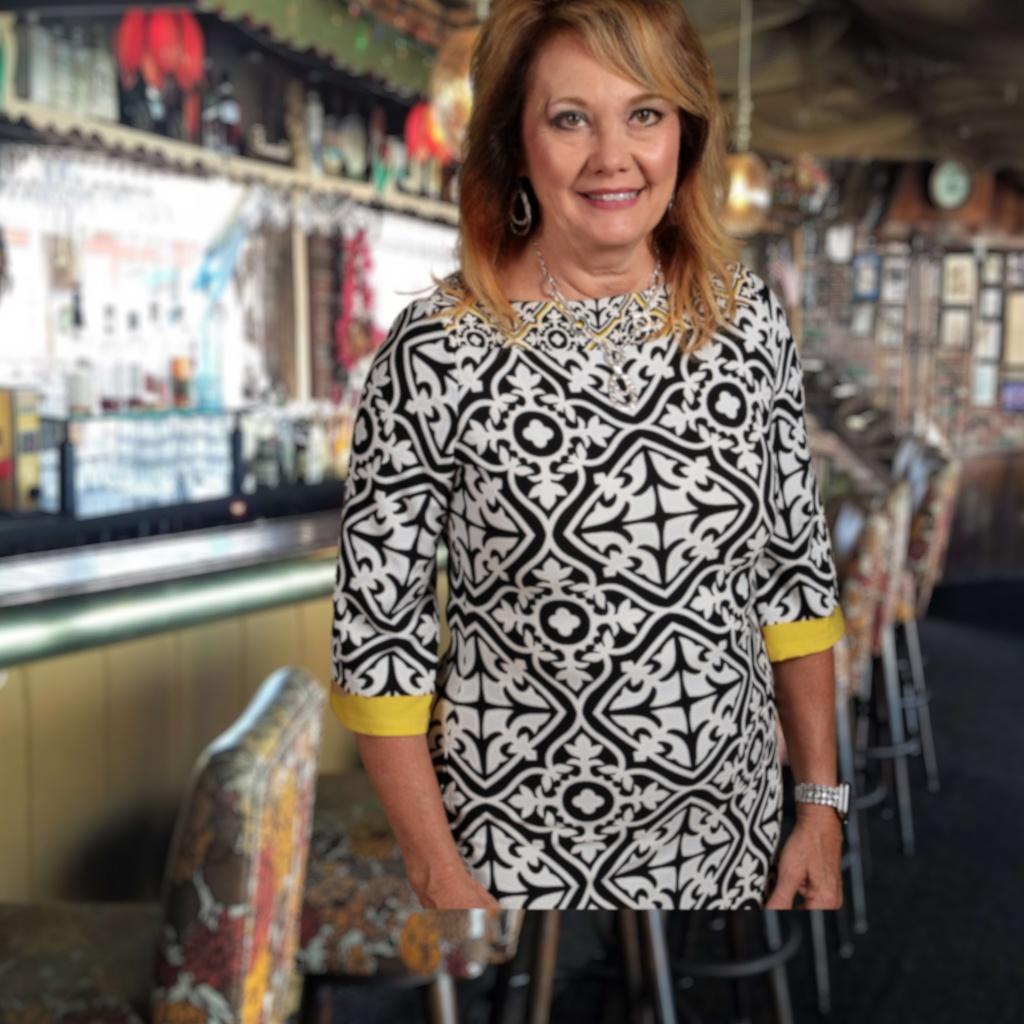} &
        \includegraphics[width=0.12\textwidth]{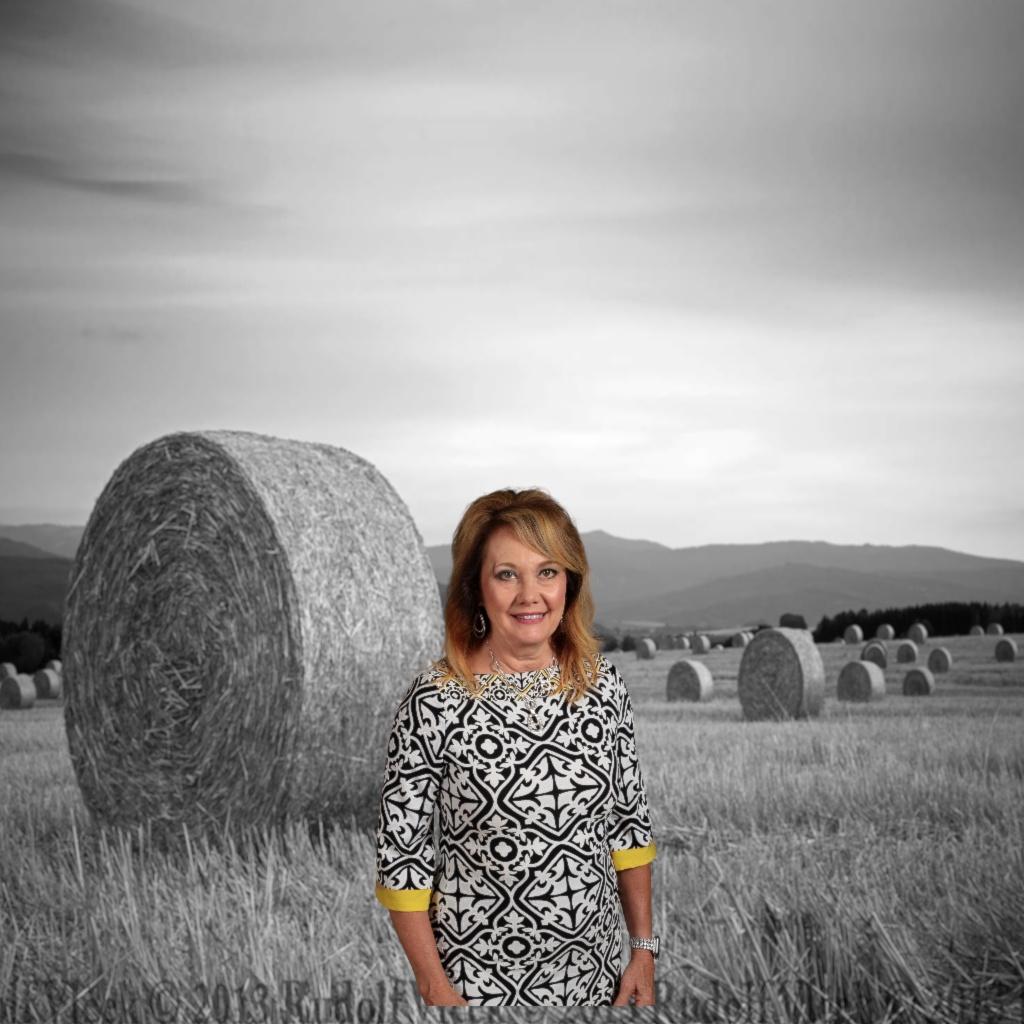} &
        \includegraphics[width=0.12\textwidth]{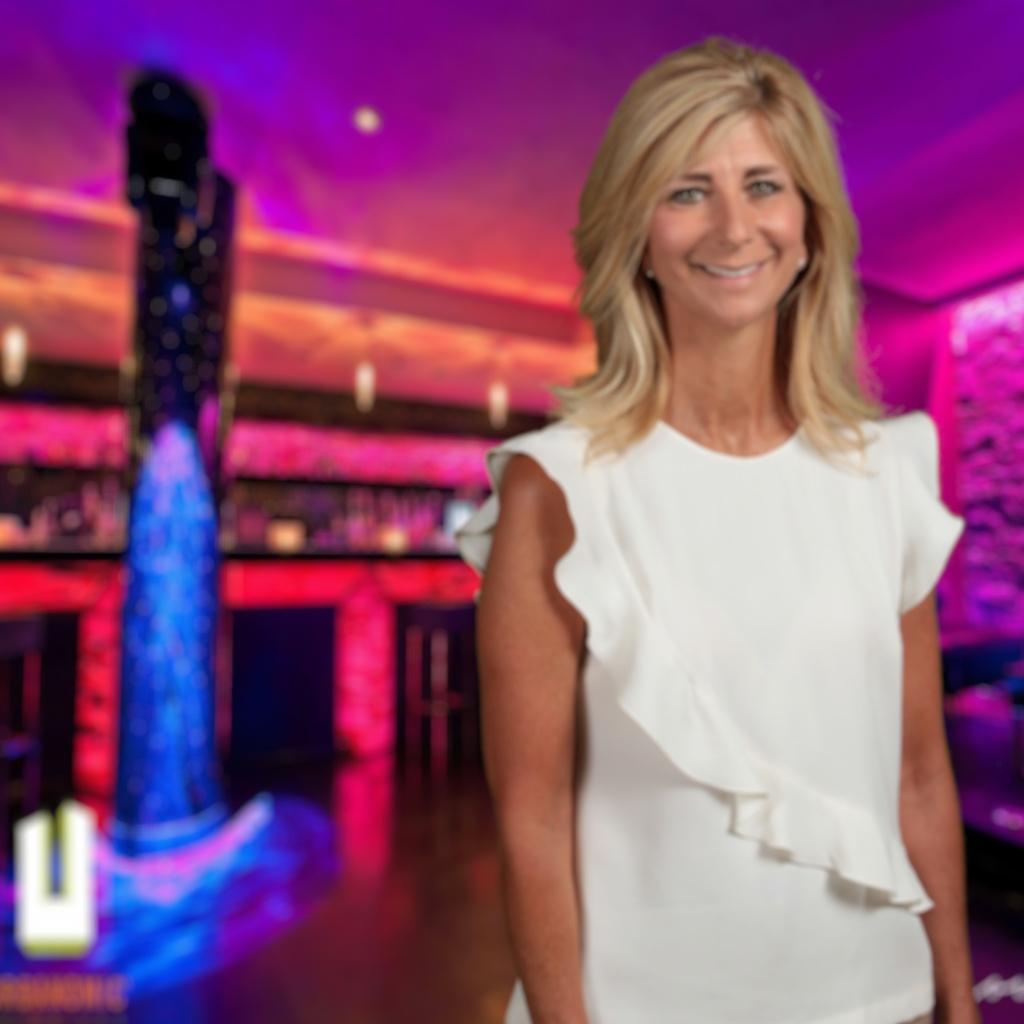} &
        \includegraphics[width=0.12\textwidth]{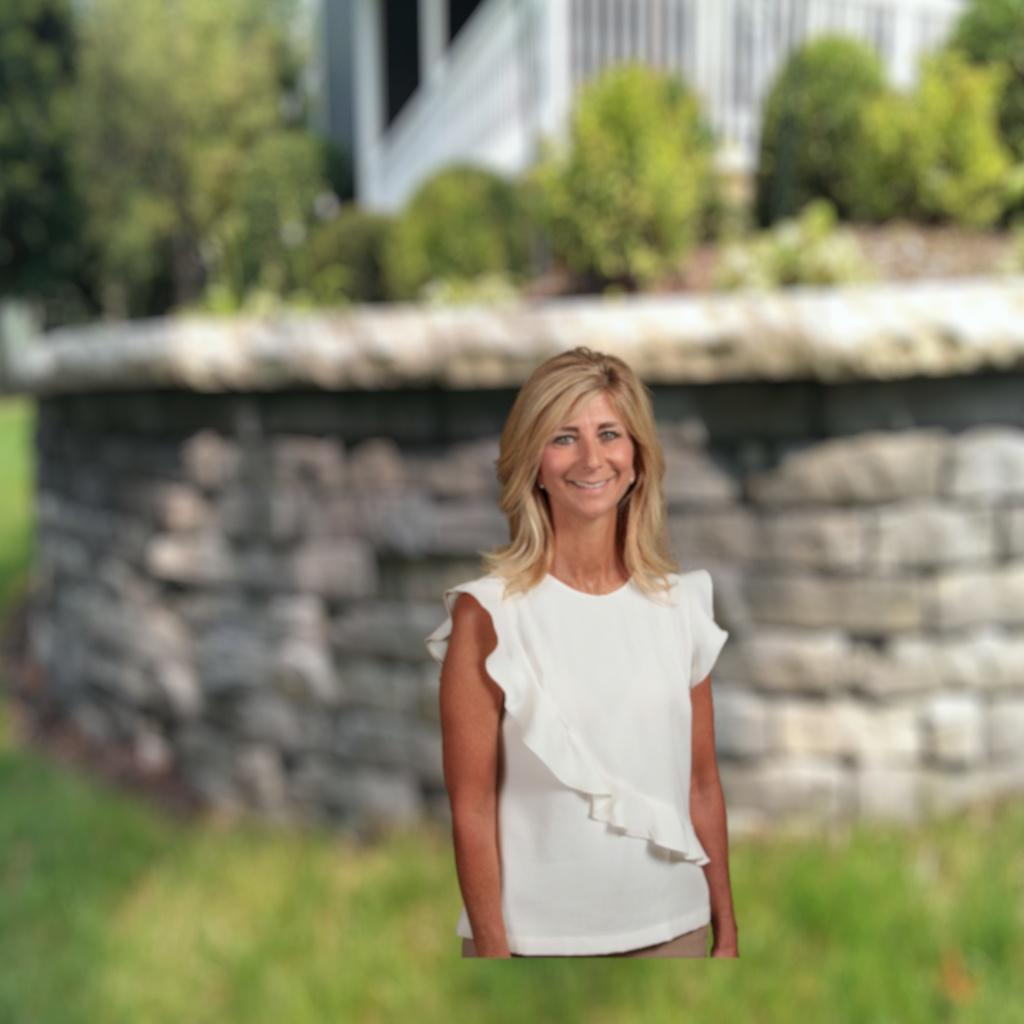} &
        \includegraphics[width=0.12\textwidth]{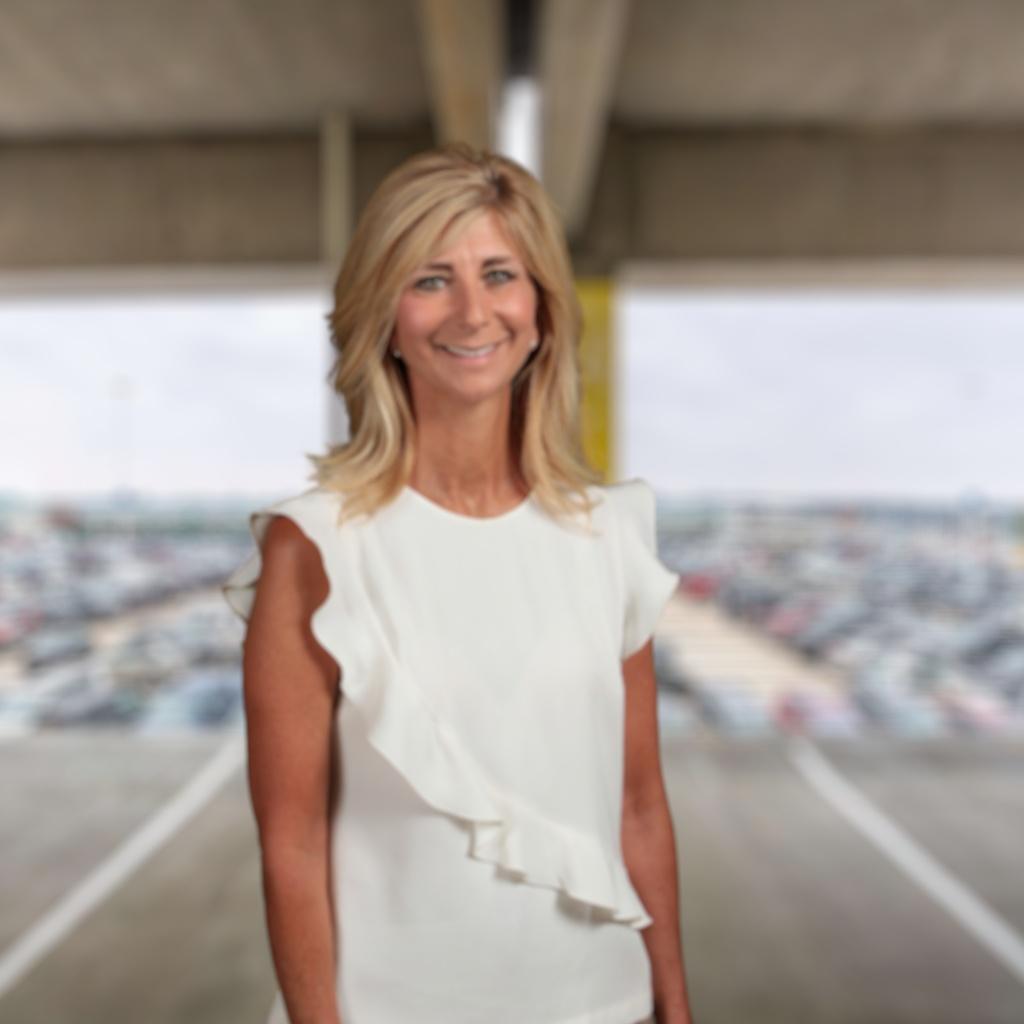} &
        \includegraphics[width=0.12\textwidth]{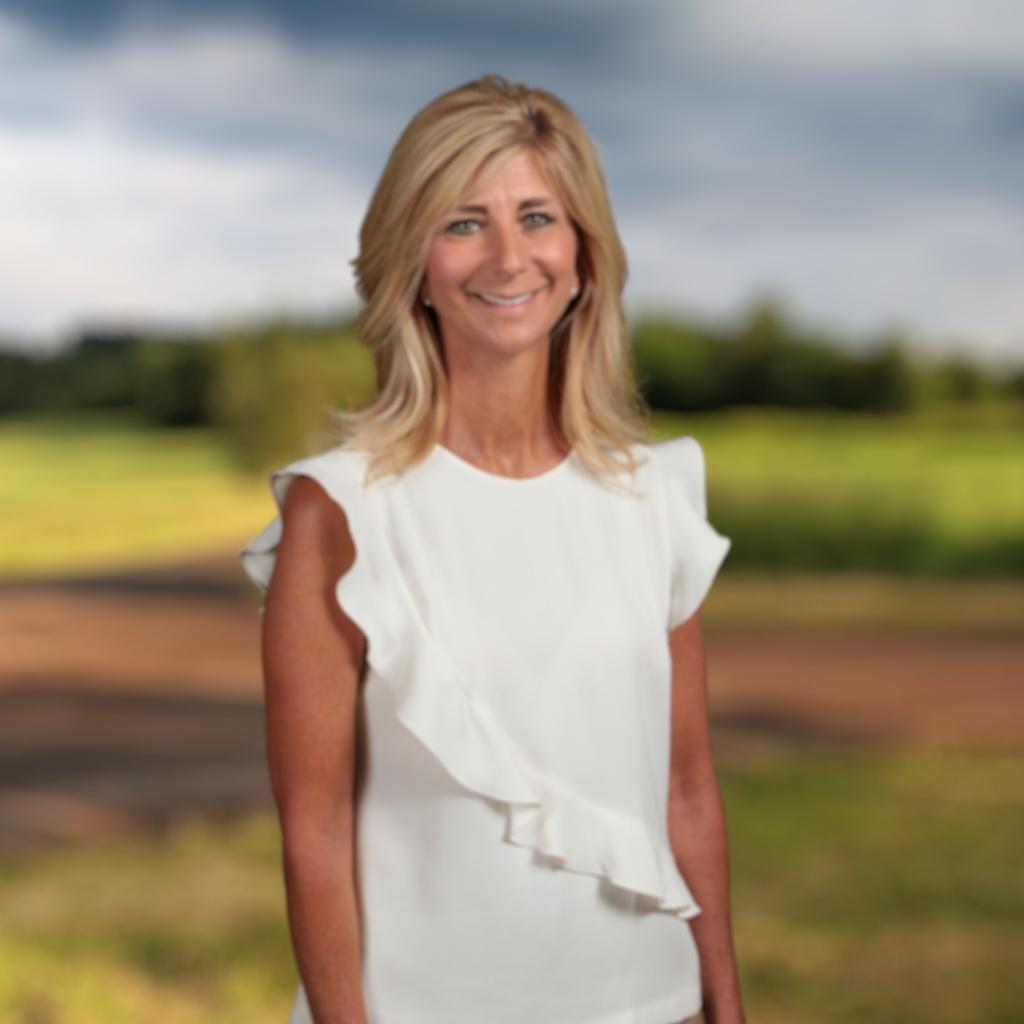} \\[2pt]

        \includegraphics[width=0.12\textwidth]{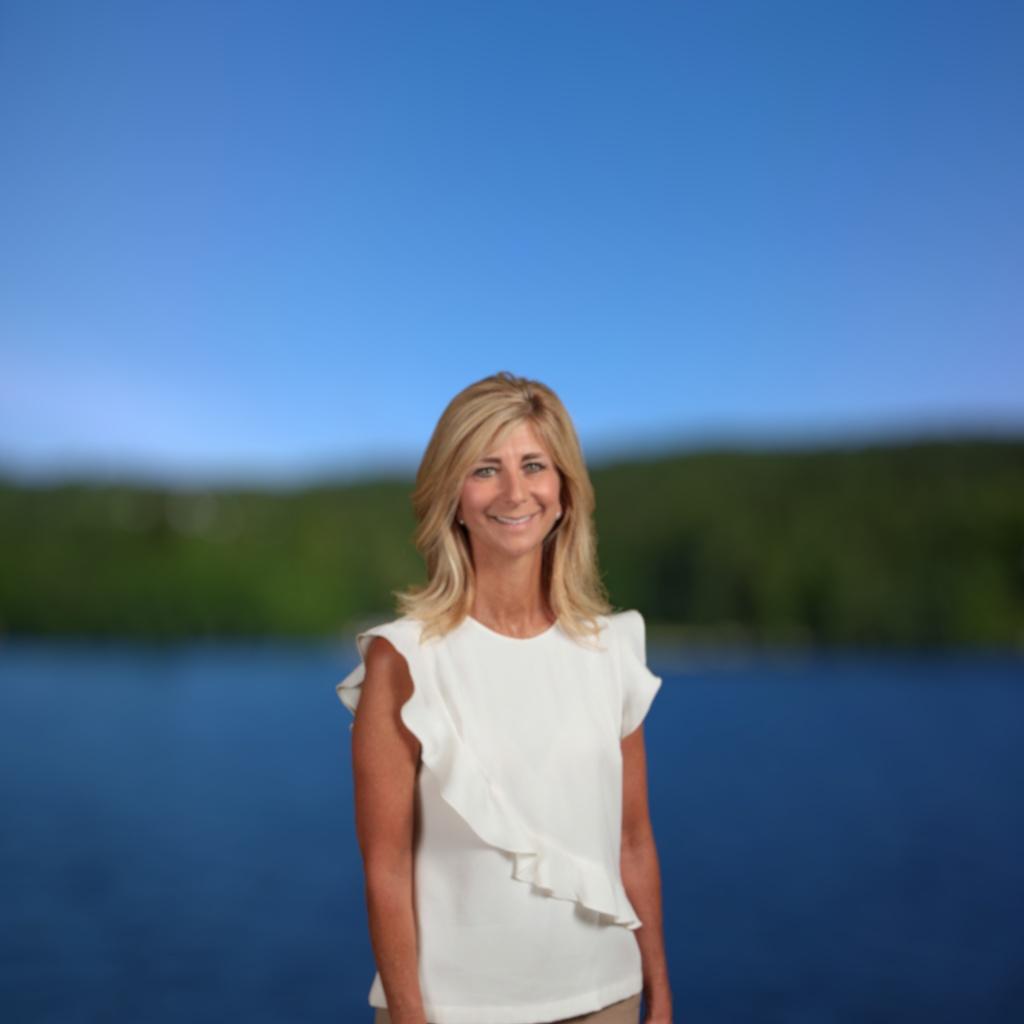} &
        \includegraphics[width=0.12\textwidth]{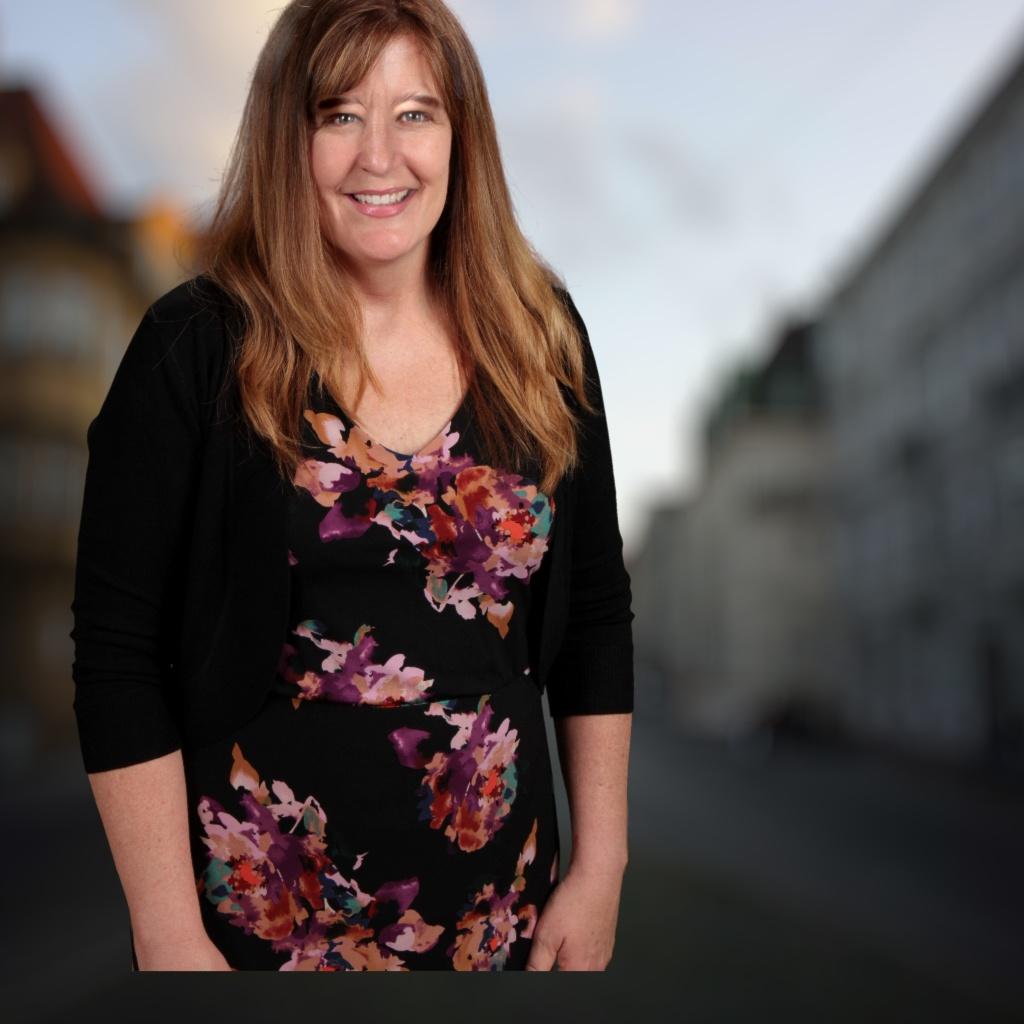} &
        \includegraphics[width=0.12\textwidth]{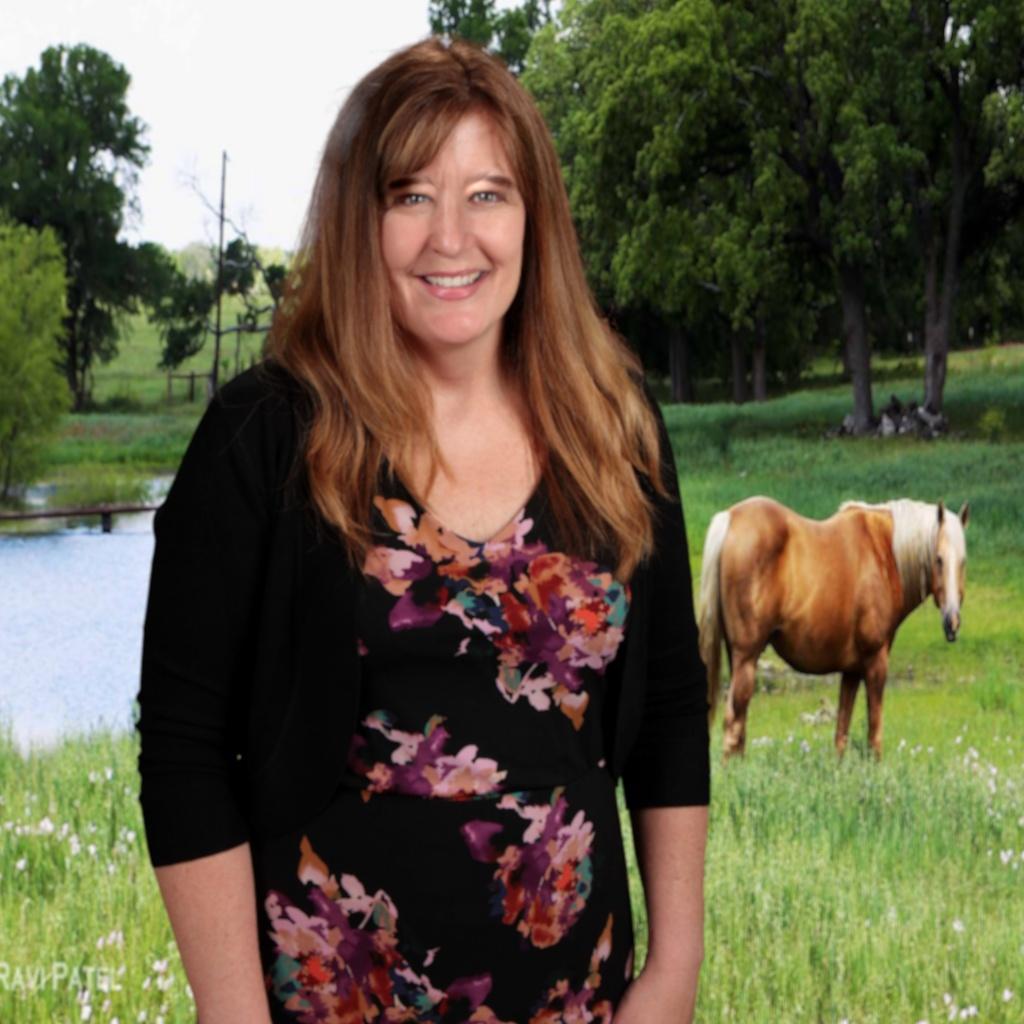} &
        \includegraphics[width=0.12\textwidth]{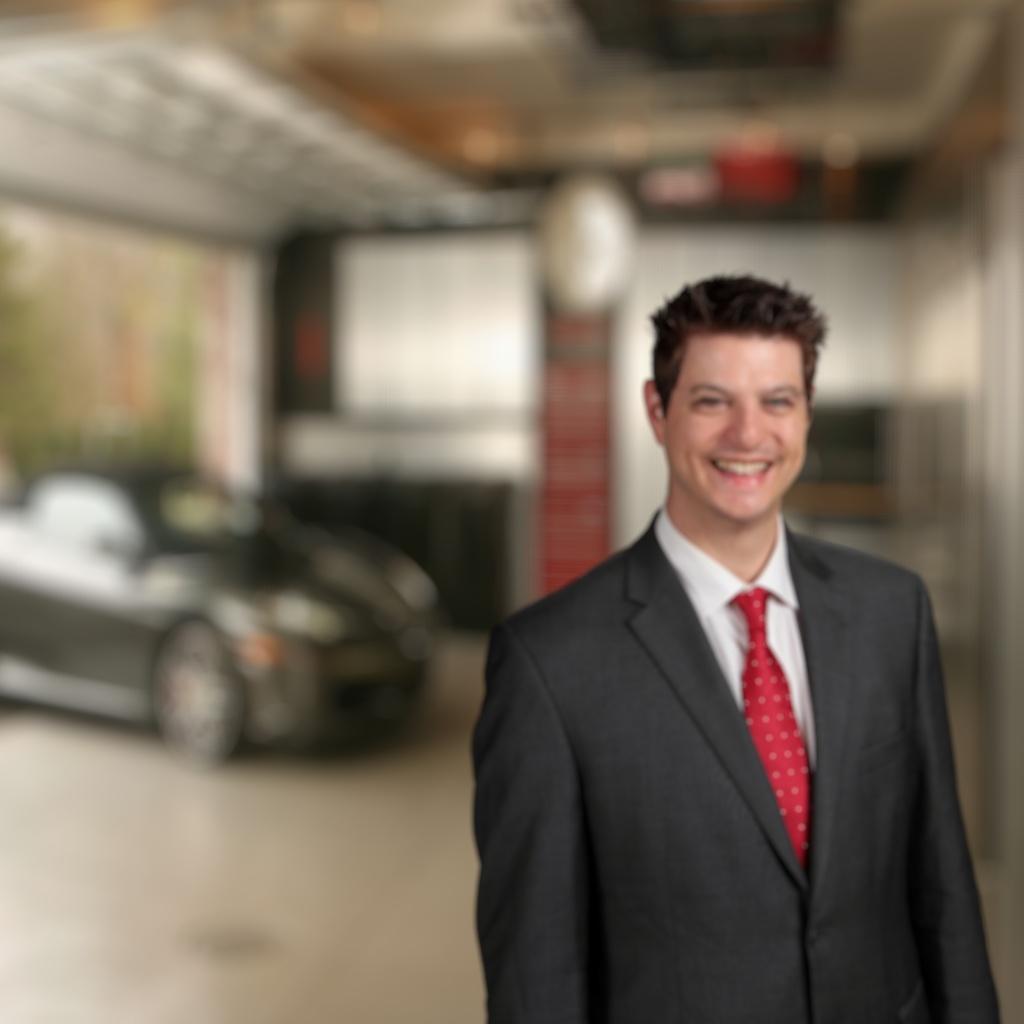} &
        \includegraphics[width=0.12\textwidth]{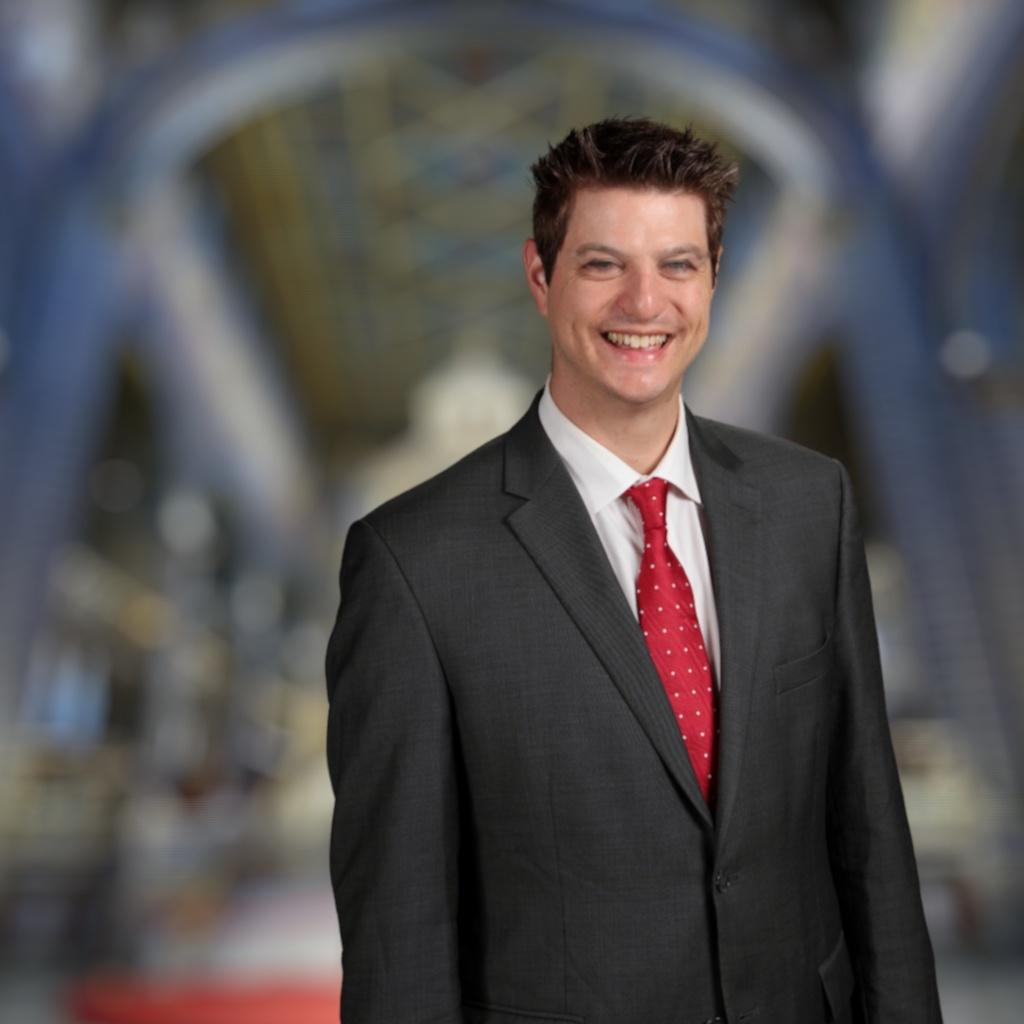} &
        \includegraphics[width=0.12\textwidth]{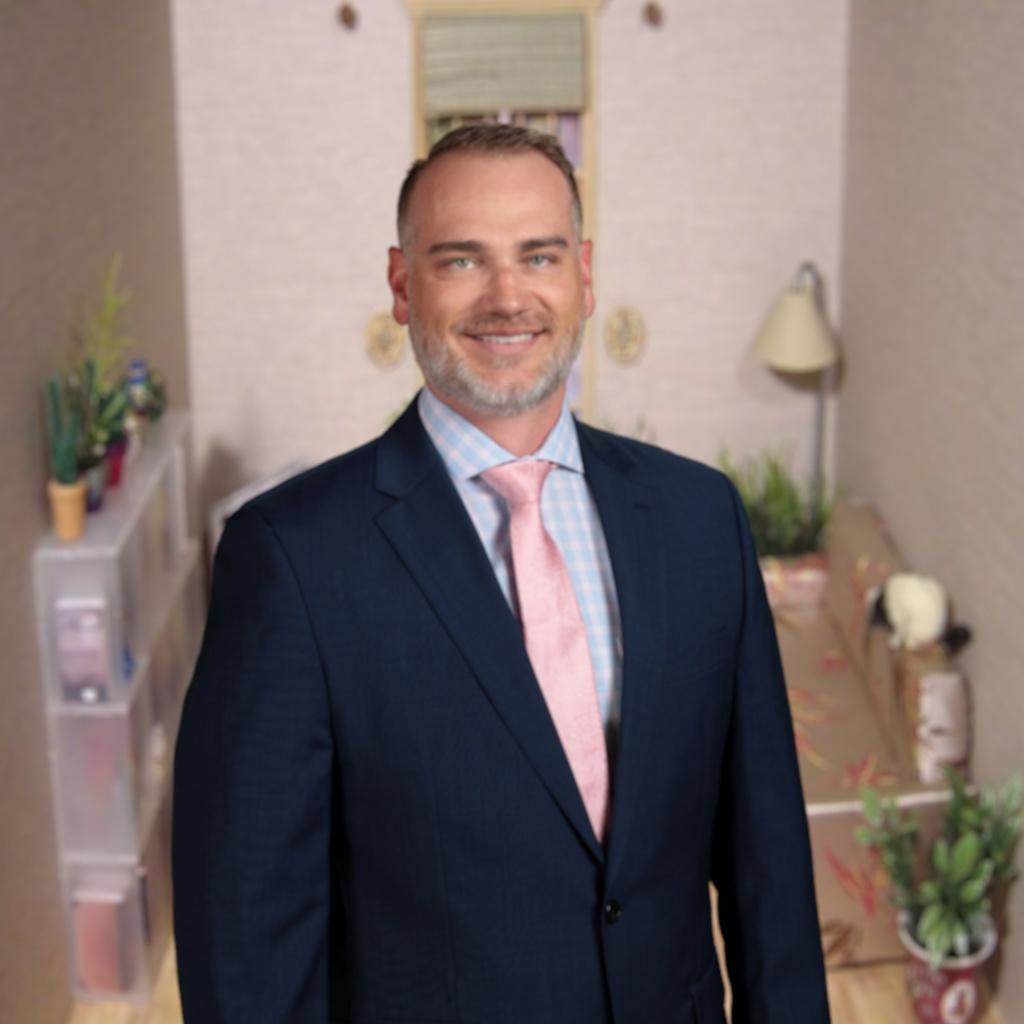} &
        \includegraphics[width=0.12\textwidth]{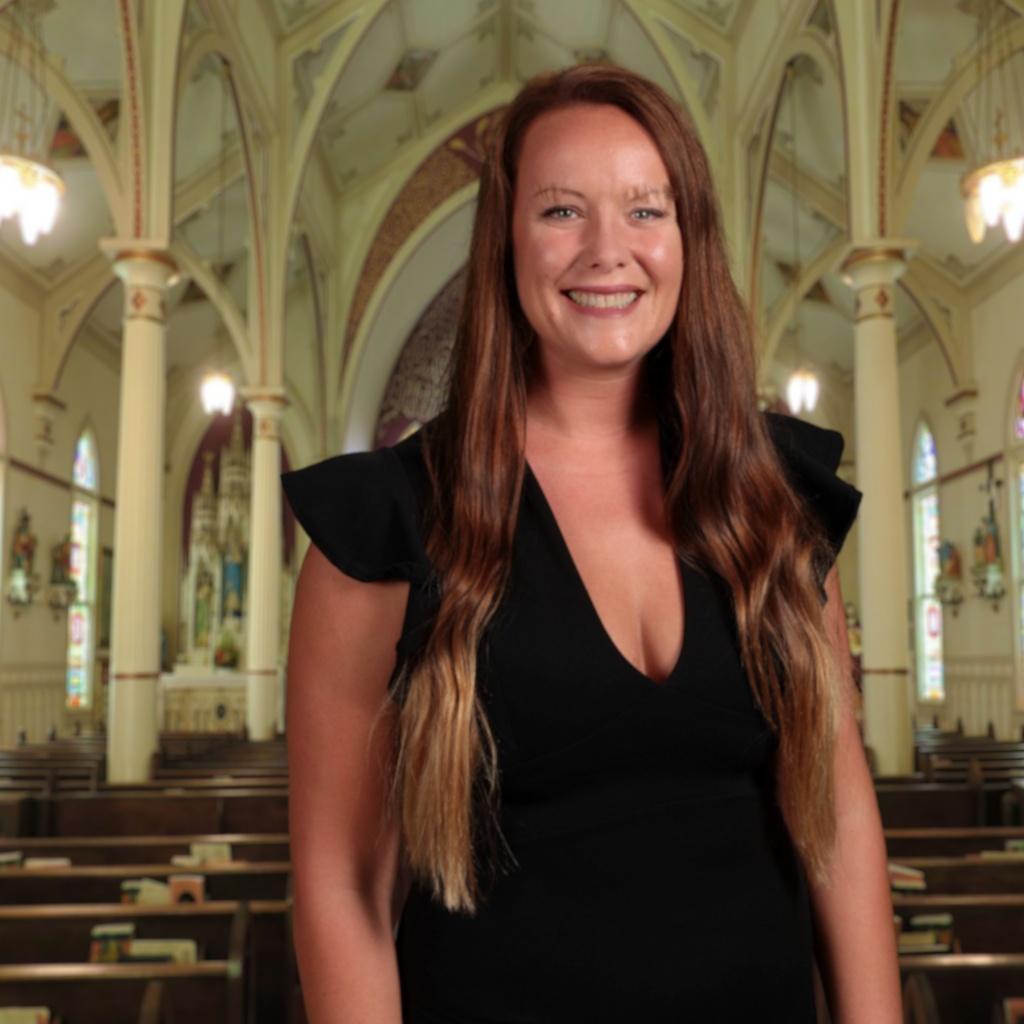} &
        \includegraphics[width=0.12\textwidth]{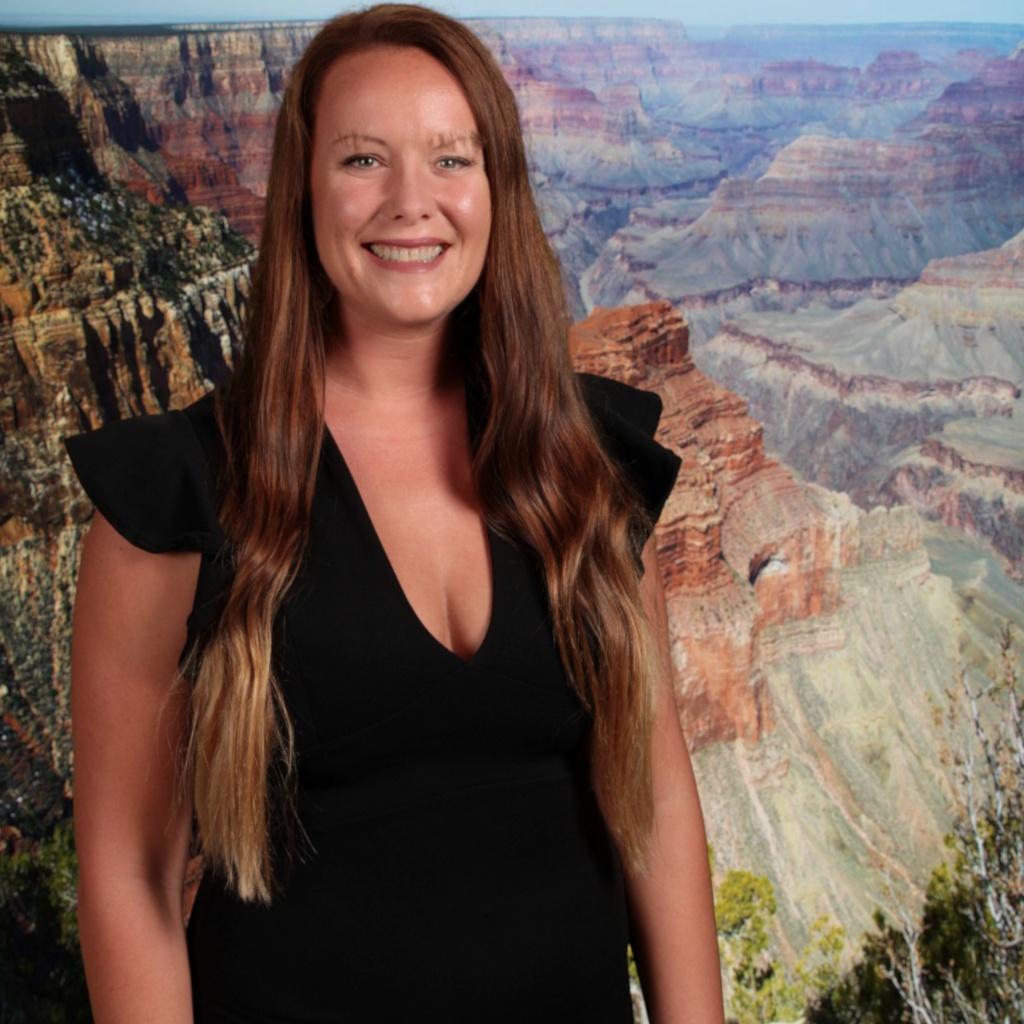} \\
    \end{tabular}

    \caption{Illustration of some samples from the \textsc{SYNTHEBOKEH300} validation set.}
    \label{fig:systhebokeh300_32samples}
\end{figure*}

\section{Stage-1 Dataset \& Unified Training Details}

\subsection{T2I Dataset Details}
\label{app:data_details}

We build on the hybrid dataset construction strategy introduced in Bokeh Diffusion~\cite{fortes2025bokeh} and adapt it to our T2I training setting. The first branch is a curated in-the-wild subset of roughly fifteen thousand Flickr-style photographs with permissive licenses. For each image, we record EXIF metadata capturing focal length and aperture, estimate dense metric depth using DepthPro~\cite{bochkovskii2024depthpro}, extract a high-resolution foreground mask~\cite{zhengBilateralReferenceHighResolution2024a}, and obtain descriptive captions from large vision-language models. We remove samples with unreliable EXIF metadata, degenerate depth, or trivial foreground regions. About ten percent of the images are nearly all-in-focus and serve as sharp anchors.

The second branch is a synthetic augmentation subset. For each nearly all-in-focus anchor we render bokeh using a physically motivated defocus renderer such as BokehMe~\cite{Peng2022BokehMe}. We estimate per-pixel disparity, sample a blur strength, and construct contrastive pairs that differ only in depth of field. These pairs teach the model that the same scene layout can appear either sharp or softly blurred, which is essential for controllable editing~\cite{fortes2025bokeh}.

To train with a single conditioning signal, we express both real and rendered blur using a common scalar parameter $K$. Starting from the thin-lens model, the circle-of-confusion diameter for a point at subject distance $S_2$ when focusing at $S_1$ with focal length $f$ and aperture $N$ is
\begin{equation}
    d = \frac{f^2}{N \left( S_1 - f \right)} \frac{\lvert S_2 - S_1 \rvert}{S_2} .
\end{equation}
Rewriting $d$ in terms of the disparity difference $\Delta \mathrm{disp} = \left\lvert \frac{1}{S_1} - \frac{1}{S_2} \right\rvert$ gives a radius proportional to $\Delta \mathrm{disp}$,
\begin{equation}
    K \approx \frac{f^2 S_1}{2 N \left( S_1 - f \right)} \times \mathrm{pixel\_ratio} ,
\label{eq:k_sup}
\end{equation}
where $\mathrm{pixel\_ratio}$ accounts for sensor pitch and target resolution. We compute \(S_1\) from the median depth of the salient foreground region because EXIF metadata seldom records focus distance, and we discard scenes with unstable inferred \(S_1\). This conversion assigns each real photograph a physically grounded \(K\) consistent with the synthetic rendering parameter used by BokehMe, allowing real and synthetic samples to be mixed within a single training batch~\cite{fortes2025bokeh, Peng2022BokehMe}.

Finally, we explain why we rely on this hybrid recipe instead of only using multi-aperture collections such as DPDD~\cite{abuolaim2020defocus}. DPDD provides around five hundred controlled DSLR scenes with paired blurred and all-in-focus captures. These scenes are mostly static tabletop or indoor arrangements and rarely contain people in motion or crowded outdoor environments, which limits coverage. Our in-the-wild branch covers dynamic subjects, complex human-centric shots, and diverse compositions, while the synthetic branch supplies precise focus and defocus supervision with continuous aperture control. Together they yield a dataset that supports physically grounded bokeh control and identity-preserving editing in our model~\cite{fortes2025bokeh}.

\subsection{Scene-consistent T2I Bokeh Generation}
\label{sec:scene}

To ensure that the T2I synthesized bokeh stack maintains consistent scene structure and subject identity across varying defocus levels, we adopt the generation pipeline proposed in Bokeh Diffusion~\cite{fortes2025bokeh}.
This framework employs a grounded self-attention mechanism to anchor the spatial layout to a pivot image and utilizes a color transfer step to harmonize illumination statistics.
Although our inference stage targets image-to-image editing, we implement part of the training process as a text-to-image task.
This strategy allows us to leverage extensive unpaired in-the-wild photography to learn robust optical properties rather than relying solely on limited paired supervision.

\subsection{Bokeh Strength Alignment Across I2I datasets}
\label{app:k_alignment_i2i}

\begin{figure*}[htbp]
    \includegraphics[width=\textwidth]{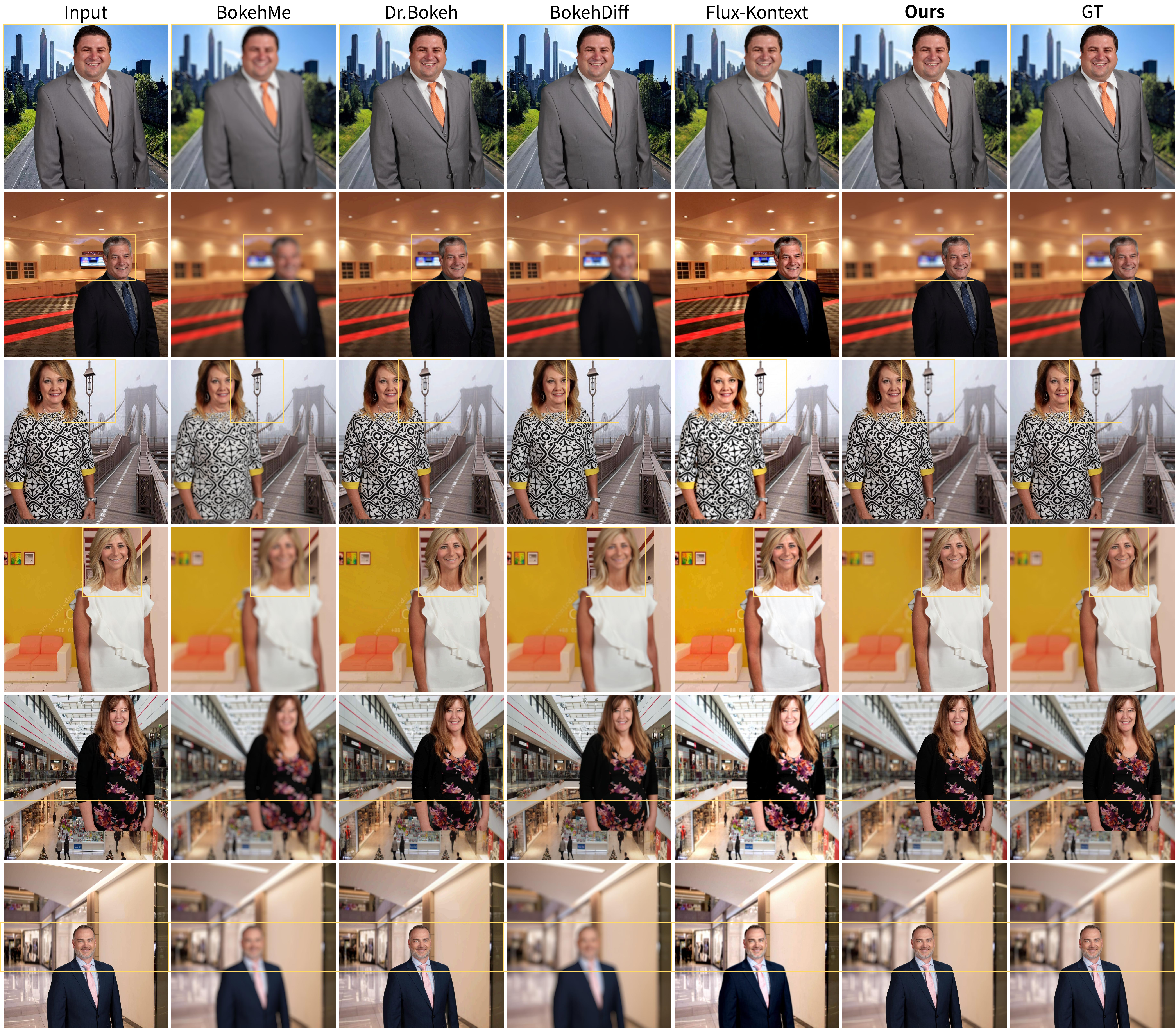}
    \vspace{-15pt}
    \caption{Qualitative comparisons for SYNTHEBOKEH300 (synthetic) between our Stage-1 model, BokehMe~\citep{Peng2022BokehMe}, Dr. Bokeh~\citep{sheng2024dr}, BokehDiff~\citep{Zhu2025BokehDiff}, FLUX.1-Kontext~\citep{blackforestlabsFLUXKontext2025}, and the ground truth. Our method more reliably preserves in-focus subjects while producing background blur that increases monotonically with depth. At depth discontinuities, it substantially reduces edge halos and color bleeding. Overall, the rendered bokeh is visually closer to the ground truth.
    }
    \label{fig:sys300_supp}
    \vspace{-5pt}
\end{figure*}

Stage-1 I2I training uses a single scalar bokeh strength $K$ to control the blur magnitude per unit-disparity, matching the thin-lens formulation in \Cref{eq:k_sup}. To mix real photographs, captured triplets, and synthetic renders in a single model, we convert all image-to-image bokeh datasets into a unified JSONL schema. Each row stores the input all-in-focus image, the target bokeh image, an optional depth map, and a \texttt{\small camera\_anns} dictionary that contains the aperture $N$, focal length $f_{\mathrm{mm}}$, $35$\,mm-equivalent focal length $f_{35\mathrm{mm}}$, sensor width, focus distance $S_1$, and a calibrated bokeh strength field \texttt{\small dof-cond}. A crop-corrected variant \texttt{\small dof-cond-crop} records the same quantity normalized to a full-frame field of view. All $K$ values are obtained by calling a single thin-lens utility \texttt{\small calc\_dof\_cond}, which implements \Cref{eq:k_sup} and returns the expected circle-of-confusion diameter in pixels for a given configuration, then normalizes it to a reference width of $512$.

\begin{figure*}[htbp]
    \includegraphics[width=\textwidth]{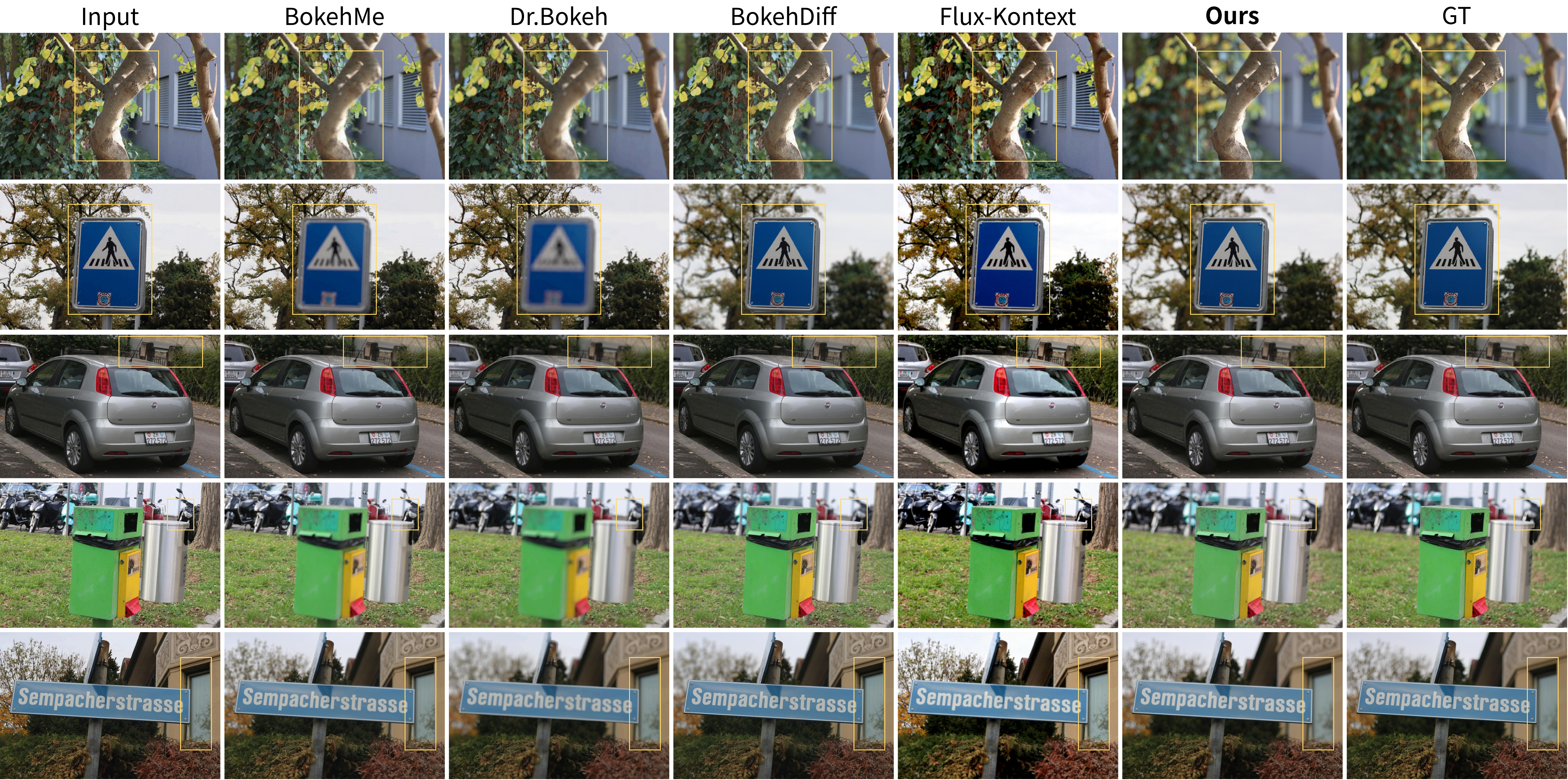}
    \vspace{-15pt}
    \caption{Qualitative comparisons for EBB! Val200 (real)~\cite{peng2023selective, Ignatov2020} between our Stage-1 model, BokehMe~\citep{Peng2022BokehMe}, Dr. Bokeh~\citep{sheng2024dr}, BokehDiff~\citep{Zhu2025BokehDiff}, FLUX.1-Kontext~\citep{blackforestlabsFLUXKontext2025}, and the ground truth. Our method more reliably preserves in-focus subjects while producing background blur that increases monotonically with depth. At depth discontinuities, it substantially reduces edge halos and color bleeding. Overall, the rendered bokeh is visually closer to the ground truth.
    }
    \label{fig:ebb_supp}
    \vspace{-5pt}
\end{figure*}

\paragraph{DPDD indoor Canon CR2.}
The DPDD indoor split contains Canon RAW image pairs captured with different \(f\)-numbers under fixed scene geometry. Our conversion script calls \texttt{\small exiftool} on all \texttt{\small .CR2} files and parses \texttt{\small CreateDate}, \texttt{\small FNumber}, \texttt{\small FocalLength}, \texttt{\small ApproximateFocusDistance}, \texttt{\small FocusDistanceUpper}, \texttt{\small FocusDistanceLower}, and the focal-plane resolution and image dimension fields. For each exposure, we estimate the focus distance \(S_1\) in meters by using \texttt{\small ApproximateFocusDistance} when present and otherwise averaging \texttt{\small FocusDistanceUpper} and \texttt{\small FocusDistanceLower}. We sort images by time, form candidate pairs within a short temporal window that match in focal length and focus distance, and designate within each group the smallest \(f\)-number as the bokeh target and the largest \(f\)-number as the all-in-focus source.

For every selected bokeh shot we estimate the sensor width in millimeters in a multi-step fashion. When focal-plane resolution and resolution units are available, we convert pixels to millimeters using the reported pitch. If this fails, we fall back to inverting $f_{\mathrm{mm}}$ and $f_{35\mathrm{mm}}$ through \texttt{\small calc\_sensor\_width}, and finally use camera-model heuristics for common Canon full-frame and APS-C bodies based on the EXIF \texttt{\small Model} string and image resolution. With $N$, $f_{\mathrm{mm}}$, $f_{35\mathrm{mm}}$, $S_1$, and the target image size $(W,H)$ in hand, we call \texttt{\small calc\_dof\_cond} to obtain a physically grounded blur slope $K$ at the native resolution. This value is rescaled to a reference width of $512$ by multiplying with $512 / W$ and stored in \texttt{\small dof-cond}. When both $f_{\mathrm{mm}}$ and $f_{35\mathrm{mm}}$ are known, we also compute the crop factor and divide \texttt{\small dof-cond} by this factor to obtain a full-frame aligned value \texttt{\small dof-cond-crop}. Finally, a relaxed depth-of-field test based on the near and far bounds in \Cref{eq:k_sup} and a sensor-size-dependent circle-of-confusion estimate marks each row with a boolean \texttt{\small foreground\_clear} flag, which indicates whether a near-foreground plane remains inside the in-focus region.

\paragraph{Aperture-Dataset.}
The Aperture-Dataset contains captured triplets with a fixed scene and focus setting and three apertures: a small aperture ($f/22$) that is close to pinhole imaging and two bokeh views ($f/8$ and $f/2$) that share a common metric depth map. We convert this dataset by treating the $f/22$ image as the input all-in-focus source and the $f/8$ or $f/2$ image as the target bokeh output. Since focus distance is not stored in EXIF, we estimate $S_1$ from the depth map. For each scene, we resize the depth map to match the $f/22$ view, compute image gradients on the sharp RGB image, and form a gradient-weighted median over depth values along high-frequency edges. This yields a robust estimate of the physical focus distance that concentrates on visually salient structures.

Camera parameters for Aperture-Dataset are set through explicit defaults. We assume a Canon EOS body with a $50$\,mm lens, full-frame sensor width of $36$\,mm, and crop factor $1.0$, while allowing these values to be overridden by command-line arguments. Given $N_{\text{bokeh}} \in \{8.0, 2.0\}$, $f_{\mathrm{mm}}$, $f_{35\mathrm{mm}}$, the recovered $S_1$, and the target image size, we compute $K$ with \texttt{\small calc\_dof\_cond} and again normalize it to a width of $512$. The resulting values are stored as \texttt{\small dof-cond} and \texttt{\small dof-cond-crop} in \texttt{\small camera\_anns} for each bokeh level. The same relaxed depth-of-field criterion as in DPDD is applied to label \texttt{\small foreground\_clear}. Although both bokeh targets share the same all-in-focus source and focus distance, their different $f$-numbers produce distinct $K$ values, which lets the model learn how bokeh strength changes as the physical aperture opens.

\paragraph{BLB synthetic renders.}
The BLB dataset provides fully rendered defocus stacks with known intrinsics and focus distances. Each scene directory contains a sharp RGB image, a disparity map, a set of bokeh renders indexed by focus and aperture, and an \texttt{\small info.json} file listing the physical parameters: focus distances, \(f\)-stop values, focal length, sensor width, and rendered resolution. Our pipeline converts \texttt{\small disparity.jpg} into an approximate metric depth map (\texttt{\small .npz}), treats \texttt{\small focal\_length} and \texttt{\small sensor\_width} as ground truth, converts them to millimeters, and computes a 35\,mm-equivalent focal length and crop factor.

The renderer supplies a normalized \(f\)-stop rather than a conventional \(f\)-number. We map each \(f\)-stop to a continuous \(f\)-number between roughly \(f/1.4\) and \(f/16\) using logarithmic interpolation over a standard aperture series, and use this value as \(N\) in \Cref{eq:k_sup}. For each valid focus–aperture pair with an available bokeh render, we compute \(K\) at two resolutions using the same thin-lens utility. We call \texttt{\small calc\_dof\_cond} at the native BLB resolution \((W_{\mathrm{orig}}, H_{\mathrm{orig}})\) for the FLUX branch. Outputs are clipped to \([0, 30]\) for stability and stored as \texttt{\small dof-cond} (native) and \texttt{\small dof-cond-crop} (512-normalized). We also compute a scalar \texttt{\small disp\_focus} capturing the scene’s relative focus depth and combine it with an analytic depth-of-field test, based on the near and far bounds from \Cref{eq:k_sup}, to produce a permissive \texttt{\small foreground\_clear} label.

\paragraph{Discussion.}
Across DPDD, Aperture-Dataset, and BLB, the concrete estimation route for $S_1$, sensor size, and aperture differs, yet all three pipelines reduce to the same thin-lens function \texttt{\small calc\_dof\_cond} and the same reference-width and crop-factor conventions. As a result, every I2I training pair carries a physically grounded and numerically comparable bokeh strength $K$ in its \texttt{\small camera\_anns}. This alignment lets Stage-1 share a single bokeh-attention branch across real DSLR photographs, captured depth-assisted triplets, and synthetic renders, while interpreting the requested bokeh strength on a consistent metric scale.

\section{Mathmatical Proofs}\label{app:math}
\propdfb*
\begin{proof}
\label{proof}
Consider a thin-lens camera of focal length $f$, held at a fixed viewpoint. Let the lens be focused at object distance $S_1>f$. For any pixel location $x$ in the image plane, let $S_2(x)$ denote the distance from the lens to the 3D scene point that projects to $x$. We define the per-pixel inverse-depth offset from the focal plane,
\begin{equation}
\Delta(x)
\;\coloneqq\;
\left|
\frac{1}{S_1}
-
\frac{1}{S_2(x)}
\right|.
\label{eq:delta_def}
\end{equation}

We now describe the bokeh sweep acquisition. We capture $m$ images
$\{I_{K_i}\}_{i=1}^m$ of the same static scene from the same pose and with
the same focus distance $S_1$, while sweeping only a calibrated bokeh
strength $K_i$ for each frame $i$. Concretely, frame $i$ is taken with a
(aperture) pupil diameter $A_i$ and $f$-number $N_i = f/A_i$, and changing
$A_i$ changes $K_i$ but leaves all other camera parameters, including the
camera pose and the focus distance $S_1$, fixed. At each setting $i$, the
observed image around pixel $x$ is modeled (locally, under a spatially
shift-invariant defocus approximation) as a convolution of an all-in-focus
radiance image $J$ with an isotropic pillbox PSF of radius $r_i(x)$, plus
zero-mean noise $\eta_i(x)$:
\begin{equation}
I_{K_i}(x) = (h_{r_i(x)} * J)(x) + \eta_i(x),
\label{eq:image_model}
\end{equation}
where $h_{r_i(x)}$ is a disk PSF of radius $r_i(x)$ in output pixels and
$\mathbb{E}[\eta_i(x)]=0$.

We next express $r_i(x)$ in terms of scene geometry and aperture. Under the paraxial thin-lens model, the circle of confusion (CoC) produced on the sensor by a scene point at distance $S_2(x)$, when the lens is focused at $S_1$, has diameter (in physical sensor-length units)
\begin{equation}
d_i(x)
=
\frac{\lvert S_2(x)-S_1\rvert}{S_2(x)}
\,
\frac{f^2}{N_i \,(S_1 - f)}.
\label{eq:CoC_phys}
\end{equation}
Equation~\eqref{eq:CoC_phys} follows directly from similar triangles and the thin-lens relation, and is exact in the paraxial regime for a thin lens with a circular aperture of diameter \(A_i = f/N_i\).

We rewrite the geometric factor in \eqref{eq:CoC_phys} using inverse distances. First note that
\begin{equation}
\left|
\frac{1}{S_1}
-
\frac{1}{S_2(x)}
\right|
=
\left|
\frac{S_2(x)-S_1}{S_1 S_2(x)}
\right|
=
\frac{\lvert S_2(x)-S_1\rvert}{S_1 S_2(x)}.
\label{eq:invdepth_expand}
\end{equation}
Multiplying both sides of \eqref{eq:invdepth_expand} by $S_1$ gives
\begin{equation}
S_1
\left|
\frac{1}{S_1}
-
\frac{1}{S_2(x)}
\right|
=
\frac{\lvert S_2(x)-S_1\rvert}{S_2(x)}.
\label{eq:invdepth_swap}
\end{equation}
Substituting \eqref{eq:invdepth_swap} into \eqref{eq:CoC_phys} yields
\begin{equation}
d_i(x)
=
\frac{f^2}{N_i\,(S_1-f)}
\,
S_1
\left|
\frac{1}{S_1}
-
\frac{1}{S_2(x)}
\right|
=
\frac{f^2 S_1}{N_i\,(S_1-f)} \,\Delta(x),
\label{eq:CoC_disp}
\end{equation}
where $\Delta(x)$ is the inverse-depth offset defined in \eqref{eq:delta_def}.

The pillbox PSF $h_{r_i(x)}$ is parameterized in terms of its radius $r_i(x)$ in \emph{output pixels} rather than physical sensor units. Let $\mathrm{pixel\_ratio}>0$ denote the known conversion factor from sensor-length units to output pixels, and recall that $r_i(x)$ is half the CoC diameter measured in pixels. Then
\begin{equation}
r_i(x)
=
\frac{1}{2} \, d_i(x) \, \mathrm{pixel\_ratio}
=
\frac{1}{2}
\left[
\frac{f^2 S_1}{N_i\,(S_1-f)} \,\Delta(x)
\right]
\mathrm{pixel\_ratio}.
\label{eq:r_from_d}
\end{equation}
All terms in brackets in \eqref{eq:r_from_d} that depend on the aperture index $i$ but not on scene depth at $x$ can be grouped into a known scalar $K_i>0$, which we call the calibrated defocus gain for aperture $i$:
\begin{equation}
K_i
\;\coloneqq\;
\frac{1}{2}
\,
\frac{f^2 S_1}{N_i\,(S_1-f)}
\,
\mathrm{pixel\_ratio}.
\label{eq:k_supi_def}
\end{equation}
This $K_i$ is exactly the per-frame calibrated bokeh strength mentioned
in Proposition~\ref{prop:dfb}: it folds known camera quantities
($f$, $N_i$), the fixed focus distance $S_1$, and the pixel scaling
factor into a single scalar. By construction, $K_i$ varies only because
we deliberately change the aperture for frame $i$, while the scene
geometry and $S_1$ stay fixed.

With the calibrated defocus gain for aperture \(i\), \eqref{eq:r_from_d} yields the exact per-pixel linear defocus law
\begin{equation}
r_i(x)
=
K_i \,\Delta(x)
\quad\text{for each } i=1,\dots,m.
\label{eq:r_linear}
\end{equation}
Equation~\eqref{eq:r_linear} implies that for a fixed pixel \(x\), all \(\bigl(K_i, r_i(x)\bigr)\) pairs fall on a single origin-passing line with slope \(\Delta(x)\).

In practice, the "measured bokeh radius" referred to in
Proposition~\ref{prop:dfb} is obtained from each captured frame rather
than taken as the ideal geometric radius $r_i(x)$ itself. We therefore
do not observe $r_i(x)$ directly; instead we estimate it from the
captured image $I_{K_i}$, e.g.\ by fitting a pillbox PSF radius. Let
$\widehat r_i(x)$ denote such a per-frame estimate, and assume it is
unbiased with finite variance:
\begin{equation}
\mathbb{E}[\widehat r_i(x)] = r_i(x), \qquad
\operatorname{Var}[\widehat r_i(x)] < \infty.
\label{eq:rhat_unbiased}
\end{equation}

We now form the ordinary least-squares (OLS) slope through the origin that regresses $\widehat r_i(x)$ against $K_i$:
\begin{equation}
\widehat{\Delta}(x)
\;\coloneqq\;
\frac{\sum_{i=1}^m K_i \,\widehat r_i(x)}{\sum_{i=1}^m K_i^2}.
\label{eq:Delta_hat}
\end{equation}
To show that $\widehat{\Delta}(x)$ is unbiased for $\Delta(x)$, we first substitute \eqref{eq:r_linear} into \eqref{eq:rhat_unbiased}, which implies
\begin{equation}
\mathbb{E}[\widehat r_i(x)]
=
r_i(x)
=
K_i \,\Delta(x).
\label{eq:Erhat_expand}
\end{equation}
Taking expectation of \eqref{eq:Delta_hat} and applying \eqref{eq:Erhat_expand} termwise in the numerator,
\begin{align}
\mathbb{E}[\widehat{\Delta}(x)]
&=
\mathbb{E}
\!\left[
\frac{\sum_{i=1}^m K_i \,\widehat r_i(x)}{\sum_{i=1}^m K_i^2}
\right]
=
\frac{\sum_{i=1}^m K_i \,\mathbb{E}[\widehat r_i(x)]}{\sum_{i=1}^m K_i^2}
\\
&=
\frac{\sum_{i=1}^m K_i \,(K_i \,\Delta(x))}{\sum_{i=1}^m K_i^2}
=
\frac{\Delta(x) \sum_{i=1}^m K_i^2}{\sum_{i=1}^m K_i^2}
=
\Delta(x).
\label{eq:Delta_hat_unbiased}
\end{align}
Hence $\widehat{\Delta}(x)$ is an unbiased estimator of the inverse-depth offset $\Delta(x)$.

We now examine its variance. Assume that the random errors across different aperture settings are uncorrelated, i.e.\ $\widehat r_i(x)$ and $\widehat r_j(x)$ are independent for $i\neq j$, and that each $\widehat r_i(x)$ has finite variance. Then from \eqref{eq:Delta_hat},
\begin{equation}
\begin{aligned}
\operatorname{Var}[\widehat{\Delta}(x)]
&= 
\operatorname{Var}\!\left[
\frac{\sum_{i=1}^m K_i\,\widehat{r}_i(x)}
     {\sum_{i=1}^m K_i^2}
\right] \\[4pt]
&=
\frac{1}{\bigl(\sum_{i=1}^m K_i^2\bigr)^2}
\sum_{i=1}^m
K_i^2\,\operatorname{Var}[\widehat{r}_i(x)] .
\end{aligned}
\label{eq:Delta_hat_var}
\end{equation}

Because the denominator in \eqref{eq:Delta_hat_var} grows as $\bigl(\sum_{i=1}^m K_i^2\bigr)^2$, the variance decays on the order of $1/\sum_{i=1}^m K_i^2$. In particular, if $\sum_{i=1}^m K_i^2 \to \infty$ as $m\to\infty$, then $\operatorname{Var}[\widehat{\Delta}(x)] \to 0$, so $\widehat{\Delta}(x)$ is consistent for $\Delta(x)$.

Finally, we recover metric depth. From \eqref{eq:delta_def}, we have
\begin{equation}
\Delta(x)
=
\left|
\frac{1}{S_1}
-
\frac{1}{S_2(x)}
\right|
\quad\Longrightarrow\quad
\frac{1}{S_2(x)}
=
\frac{1}{S_1}
\pm
\Delta(x).
\label{eq:depth_inversion}
\end{equation}
The $\pm$ corresponds to the front/back ambiguity of defocus: a point in front of the focal plane and a symmetric point behind it yield the same absolute offset $\Delta(x)$. Replacing $\Delta(x)$ by the unbiased, consistent estimator $\widehat{\Delta}(x)$ from \eqref{eq:Delta_hat} gives the per-pixel metric depth estimate
\begin{equation}
\frac{1}{S_2(x)}
=
\frac{1}{S_1}
\pm
\widehat{\Delta}(x),
\qquad
S_2(x)
=
\left(
\frac{1}{S_1}
\pm
\widehat{\Delta}(x)
\right)^{-1}.
\label{eq:depth_est}
\end{equation}
Taken together, Equations~\eqref{eq:r_linear}, \eqref{eq:Delta_hat_unbiased}, and \eqref{eq:depth_est} show that a calibrated sweep of $\{K_i\}_{i=1}^m$ induces a per-pixel linear bokeh–versus–inverse-depth relationship $r_i(x)=K_i\,\Delta(x)$ whose slope is precisely the inverse-depth offset $\Delta(x)$, and that, under the finite-variance and independence assumptions stated above, the OLS slope $\widehat{\Delta}(x)$ in~\eqref{eq:Delta_hat} is an unbiased and consistent estimator of $\Delta(x)$ as $m$ increases (since $\operatorname{Var}[\widehat{\Delta}(x)] \to 0$ when $\sum_i K_i^2 \to \infty$); substituting this estimator into~\eqref{eq:depth_est} then yields a per-pixel metric depth estimate, subject only to the standard in-front / behind-focus sign ambiguity of defocus.

\end{proof}

\section{Definition of Evaluation Metrics}
\label{sec:metrics}

We evaluate both stages of our pipeline using established metrics in image synthesis and monocular depth estimation. Below we summarize the definitions adopted in this work.

\paragraph{Stage-1: Bokeh synthesis quality.}
Given a reference bokeh image \(B\) and a synthesized bokeh image \(\hat{B}\) of size \(H \times W\), we first measure distortion with peak signal-to-noise ratio (PSNR) and structural similarity (SSIM). PSNR is defined as
\begin{equation}
  \text{PSNR}(B,\hat{B}) = 10 \log_{10} \left( \frac{L^2}{\text{MSE}(B,\hat{B})} \right),
\end{equation}
where \(L\) is the maximum possible pixel value and
\begin{equation}
  \text{MSE}(B,\hat{B}) = \frac{1}{HW} \sum_{p} \left\| B(p) - \hat{B}(p) \right\|_2^2
\end{equation}
is the mean squared error over all pixels \(p\). SSIM measures local luminance, contrast and structural consistency between \(B\) and \(\hat{B}\). Following~\citep{wang2004image}, we compute the SSIM index over sliding windows as
\begin{equation}
    \mathrm{SSIM}(B,\hat{B})
    =
    \frac{\left(2\mu_B \mu_{\hat{B}} + C_1\right)\left(2\sigma_{B\hat{B}} + C_2\right)}
         {\left(\mu_B^2 + \mu_{\hat{B}}^2 + C_1\right)\left(\sigma_B^2 + \sigma_{\hat{B}}^2 + C_2\right)},
\end{equation}
where \(\mu_B\) and \(\mu_{\hat{B}}\) are local means, \(\sigma_B^2\) and \(\sigma_{\hat{B}}^2\) are local variances, \(\sigma_{B\hat{B}}\) is the local covariance, and \(C_1, C_2\) are small constants that stabilize the ratio. Higher PSNR and SSIM indicate better bokeh reconstruction quality.

To better capture perceptual fidelity, we also report LPIPS~\citep{zhang2018unreasonable} and DISTS~\citep{ding2020image}. LPIPS compares deep features extracted by a pretrained network and averages the channel-wise distance over spatial locations
\begin{equation}
  \mathrm{LPIPS}(B,\hat{B}) = \sum_{\ell} w_{\ell} \frac{1}{|\Omega_{\ell}|} \sum_{p \in \Omega_{\ell}} \left\| \phi_{\ell}(B)_p - \phi_{\ell}(\hat{B})_p \right\|_2^2,
\end{equation}
where \(\phi_{\ell}(\cdot)\) denotes features at layer \(\ell\), \(\Omega_{\ell}\) is the corresponding spatial grid, and \(w_{\ell}\) are learned scalar weights. DISTS computes a weighted combination of structure and texture similarity in deep feature space
\begin{equation}
  \mathrm{DISTS}(B,\hat{B}) = \sum_{\ell} \bigl[ \alpha_{\ell} \bigl(1 - S_{\ell}(B,\hat{B})\bigr) + \beta_{\ell} \bigl(1 - T_{\ell}(B,\hat{B})\bigr) \bigr],
\end{equation}
where \(S_{\ell}\) measures structural similarity between normalized features, \(T_{\ell}\) measures texture similarity through feature magnitudes, and \(\alpha_{\ell}, \beta_{\ell}\) are learned nonnegative weights. Lower LPIPS and DISTS indicate that synthesized bokeh images are closer to the reference in a feature space that correlates with human perception. All image quality metrics are computed on linear RGB images, cropped to the valid field of view of each dataset.

\paragraph{Stage-2: Metric depth estimation.}
For Stage-2, we evaluate predicted metric depth maps against ground-truth depth using standard monocular depth metrics. Let \(D\) be the predicted depth map and \(D^{*}\) the ground-truth depth, defined on the set of valid pixels \(\mathcal{V}\). The absolute relative error (AbsRel) is defined as
\begin{equation}
  \text{AbsRel} = \frac{1}{|\mathcal{V}|} \sum_{p \in \mathcal{V}} \frac{\left| D(p) - D^{*}(p) \right|}{D^{*}(p)}.
\end{equation}
We treat lower AbsRel as better and report it as our main scalar error metric in tables and ablations.

We also report the common threshold accuracy \(\delta_1\), which measures the fraction of pixels where the prediction is close to the ground truth up to a multiplicative factor
\begin{equation}
  \delta_1 = \frac{1}{|\mathcal{V}|} \left| \left\{ p \in \mathcal{V} \;\middle|\; \max \left( \frac{D(p)}{D^{*}(p)}, \frac{D^{*}(p)}{D(p)} \right) < 1.25 \right\} \right|.
\end{equation}
Higher \(\delta_1\) indicates better agreement between predicted and true depths. For completeness, we also monitor squared relative error (SqRel), root mean squared error (RMSE), and RMSE in log space on the validation sets. Following standard depth-estimation benchmarks, SqRel for a prediction \(\hat{D}\) and ground-truth depth \(D\) over pixels \(\Omega\) is defined as
\begin{equation}
    \mathrm{SqRel} = \frac{1}{|\Omega|} \sum_{p \in \Omega} \frac{\bigl(D_p - \hat{D}_p\bigr)^2}{D_p},
\end{equation}
and RMSE is defined as
\begin{equation}
    \mathrm{RMSE} = \sqrt{\frac{1}{|\Omega|} \sum_{p \in \Omega} \bigl(D_p - \hat{D}_p\bigr)^2},
\end{equation}
with RMSE\(_{\log}\) computed analogously in log space. We omit some of these metrics from the main tables when their trends are consistent with the primary ones.
 All depth metrics are computed in metric units on the valid depth range of each dataset without scale alignment, following the standard protocol used in recent monocular metric depth work.

\section{Details of \textit{SYNTHEBOKEH300} Dataset}
\label{sec:systhebokeh}

\textit{SYNTHEBOKEH300}, shown in \Cref{fig:systhebokeh300_32samples}, is a synthetic benchmark that exposes fine-grained control over defocus strength and focal distance under fully known geometry. The dataset is built on a two-layer multi-plane image representation rendered with a ray-traced thin-lens model. For each scene we provide a sharp all-in-focus image, a floating point disparity map and multiple bokeh images rendered under different aperture and focus settings. All images are RGB at a resolution of \(1024\times1024\).

We construct each scene by compositing a foreground RGBA matte over a natural background photograph. Foreground assets are PNG images with transparency that capture objects with complex silhouettes such as people, plants and everyday items. The generator first crops the foreground to the tight alpha bounding box, then randomly rescales it so that the projected area occupies roughly \(30\%\)–\(80\%\) of the final frame. The resized foreground is placed near the image center with a small random offset while we ensure that it remains fully inside the background canvas. Background images are resized to a slightly larger canvas than the final resolution to absorb boundary effects introduced by lens blur. After rendering we crop a central \(1024\times1024\) window which defines a single all-in-focus reference image per foreground–background pair.

Depth in \textit{SYNTHEBOKEH300} is defined in disparity space to match the thin-lens formulation used for camera calibration in our real-image datasets. For the background we sample a random planar disparity field
\begin{equation}
d_{\mathrm{bg}}(x,y) = \frac{c}{1 - a x - b y},
\end{equation}
where \((a,b,c)\) are normalized so that \(d_{\mathrm{bg}}\) remains within a bounded range over the image grid. The foreground is assigned a separate planar disparity band whose values lie closer to the camera than the background disparity within the alpha support, ensuring smooth disparity in both layers and a consistent occlusion order. We store the final per-pixel disparity map \(d(x,y)\) for each scene as a single-channel \texttt{\small float32} array, providing ground-truth metric depth up to a global scale factor.

Given the layered scene representation we render defocus using a reverse ray-tracing module. For each scene the renderer takes as input the linear RGB foreground and background layers, their opacity masks and the corresponding disparity coefficients and simulates a thin lens with a finite aperture. We parameterize defocus by a dimensionless blur strength \(K\) and a normalized focus disparity \(d_f \in [0,1]\). The renderer uses \(K\) to scale the circle-of-confusion radius in disparity space as
\begin{equation}
\Delta d(x,y) = K \frac{d(x,y) - d_f}{s},
\end{equation}
with \(s\) a fixed defocus scale factor. The module integrates multiple rays per pixel to obtain a bokeh image in linear RGB, which is then converted back to display gamma with exponent \(1/\gamma\) with \(\gamma = 2.2\).

For each foreground–background combination we keep the geometry fixed and vary only the lens parameters. The generator samples \(K\) from a wide range that spans both subtle and strong blur, and restricts \(d_f\) to the foreground-focused regime so that synthesized views keep the main subject sharp while varying the background blur. In our default configuration we draw three values of \(K\) and one value of \(d_f\), which yields three distinct bokeh renderings per scene while sharing a single all-in-focus image and disparity map. We also map each sampled \(K\) to an equivalent \(f\)-number in the range \([1.4, 22]\) in order to align the synthetic lens settings with typical DSLR cameras. The sampling policy is encoded in the metadata and matches the distribution of lens parameters used when training the editing model on real-camera datasets.

The final dataset is organized under four top-level directories: \texttt{\small aif} for all-in-focus inputs, \texttt{\small images} for bokeh renderings, \texttt{\small depth} for disparity maps and \texttt{\small metadata} for per-image JSON descriptors. Each bokeh image is stored as an 8-bit JPEG in \texttt{\small images} and has a corresponding JSON file in \texttt{\small metadata} that records its identifier, the shared all-in-focus and depth file paths, the sampled \(K\) and \(d_f\), the equivalent \(f\)-number and basic renderer settings. 

During evaluation \textit{SYNTHEBOKEH300} provides photorealistic yet fully controllable test cases in which we can measure both image reconstruction quality and depth-aware consistency across changes in aperture and focus under ground-truth geometry.

\section{Other Ablation Studies}

\subsection{Ablation on Stage-1 Inference Steps}

\begin{figure}[htbp]
    \centering
    \setlength{\tabcolsep}{1pt}
    \renewcommand{\arraystretch}{0}

    \begin{tabular}{cc}
        \includegraphics[width=0.48\columnwidth]{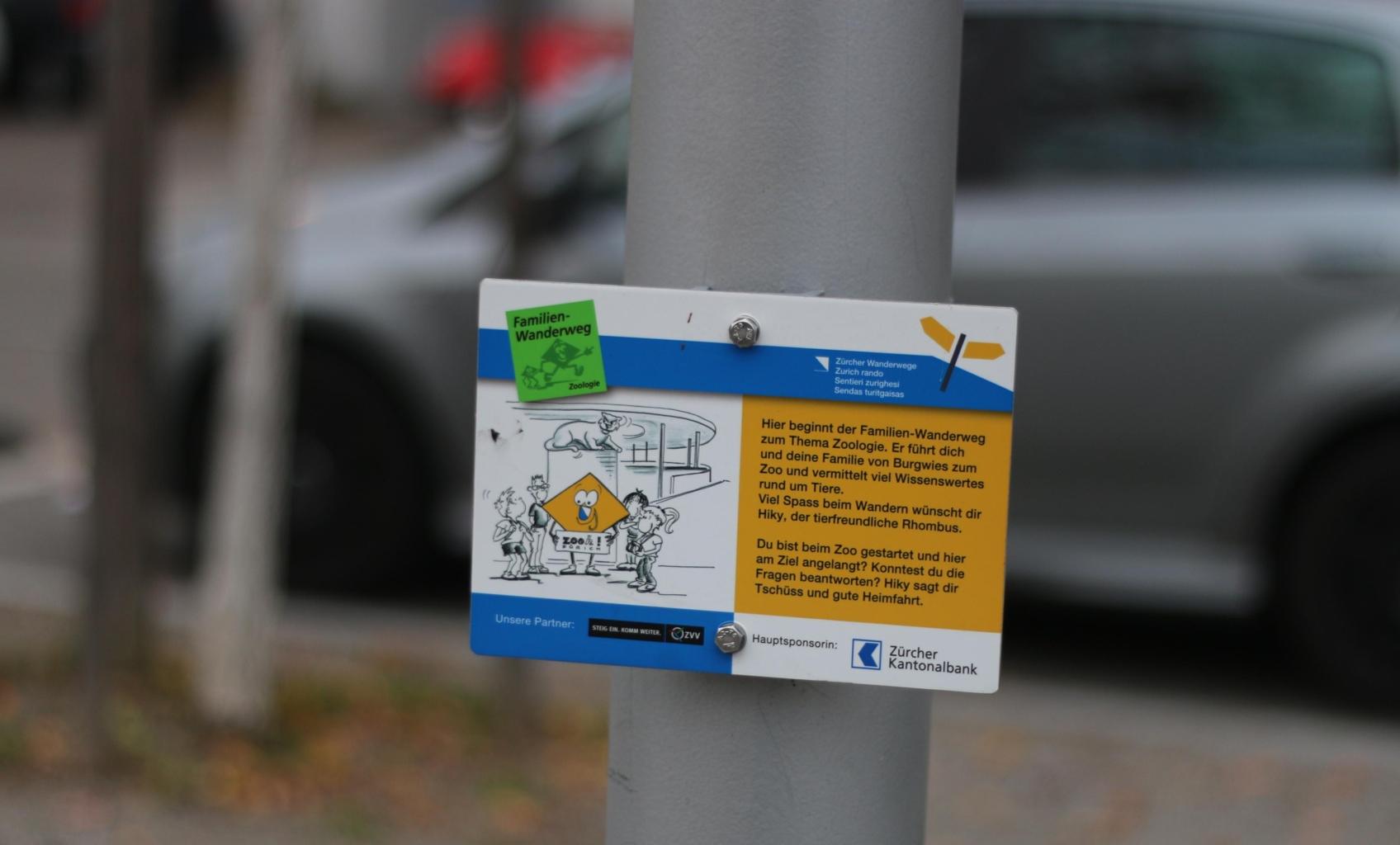} &
        \includegraphics[width=0.48\columnwidth]{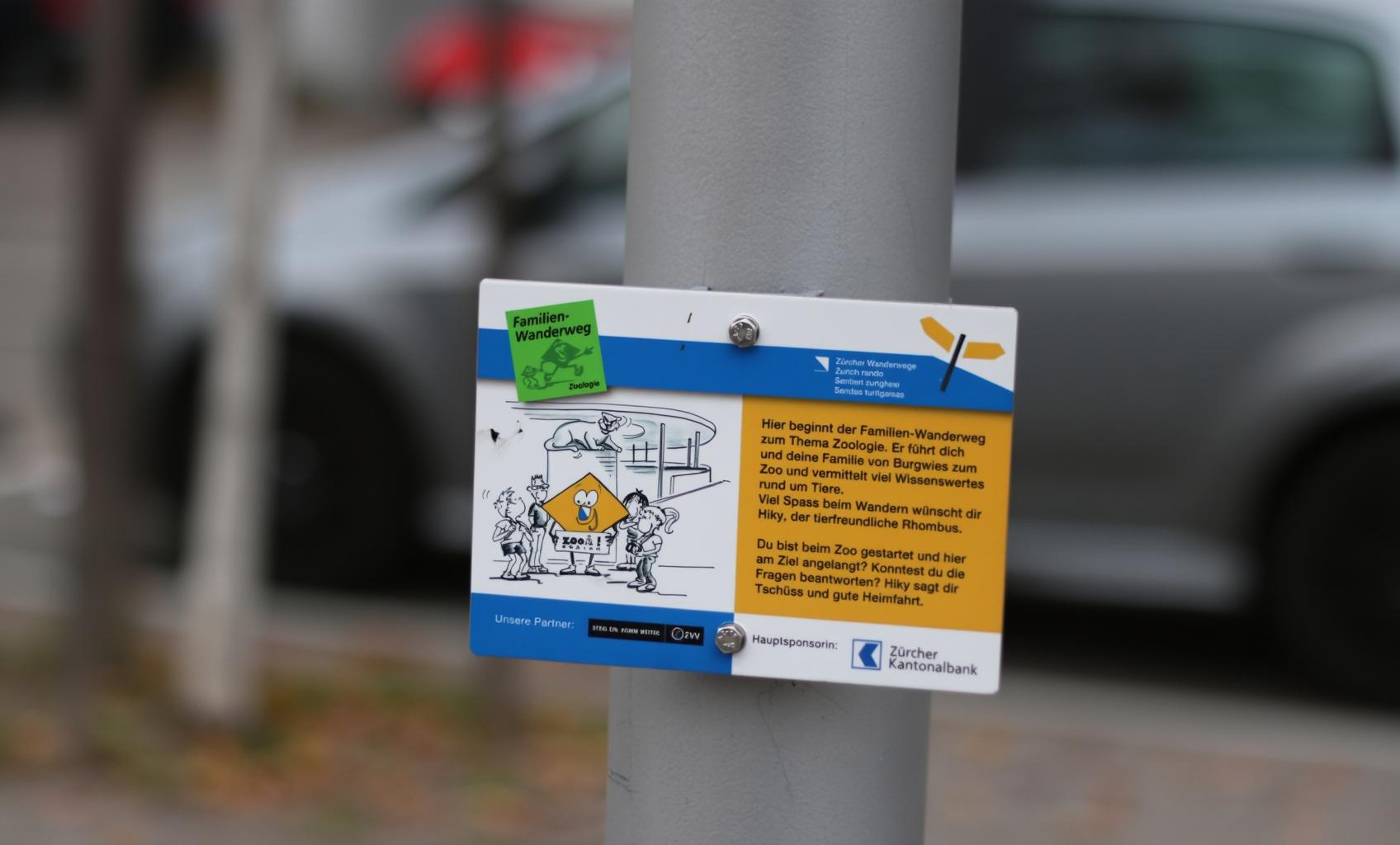} \\
        \includegraphics[width=0.48\columnwidth]{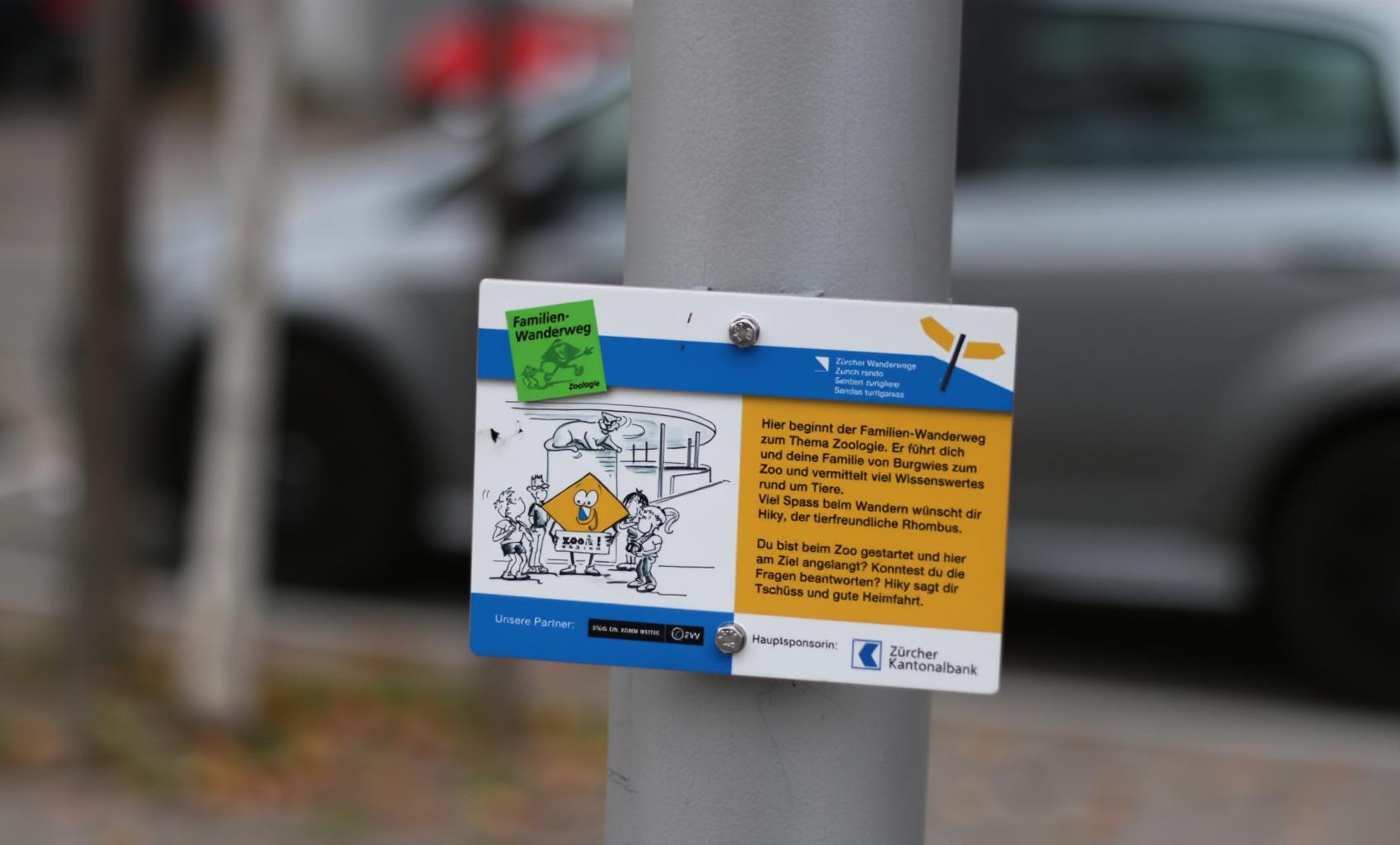} &
        \includegraphics[width=0.48\columnwidth]{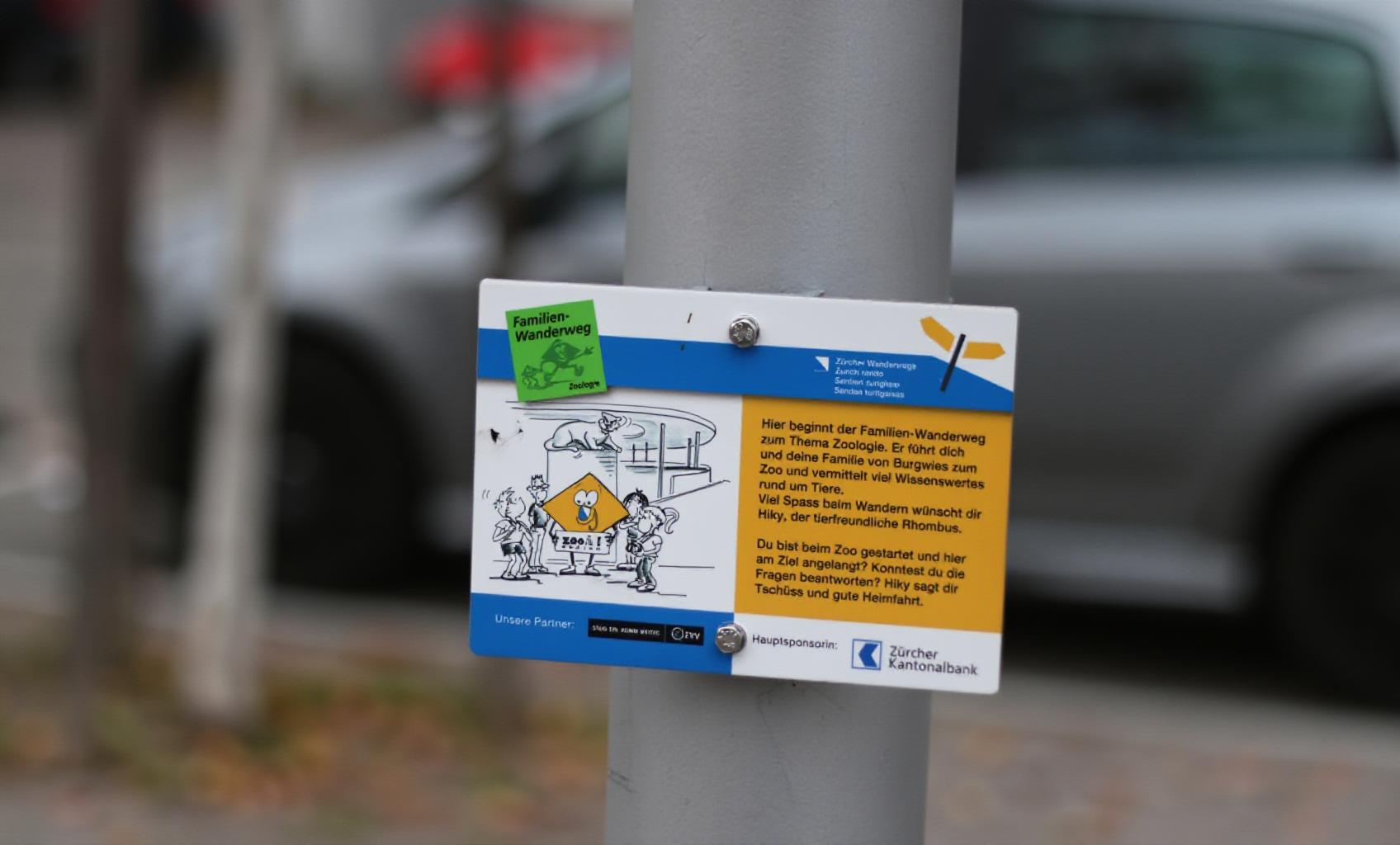} \\
        \includegraphics[width=0.48\columnwidth]{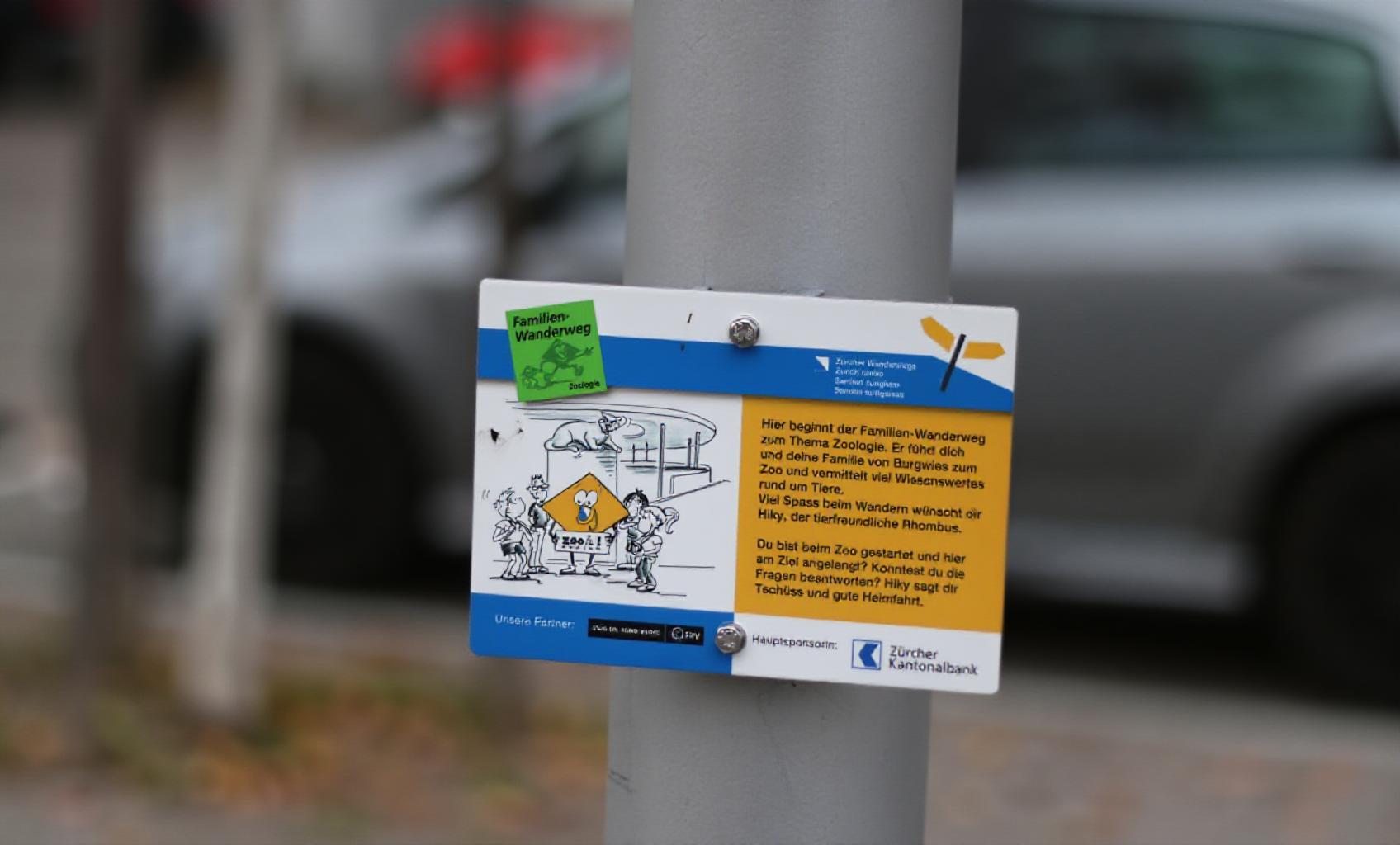} &
        \includegraphics[width=0.48\columnwidth]{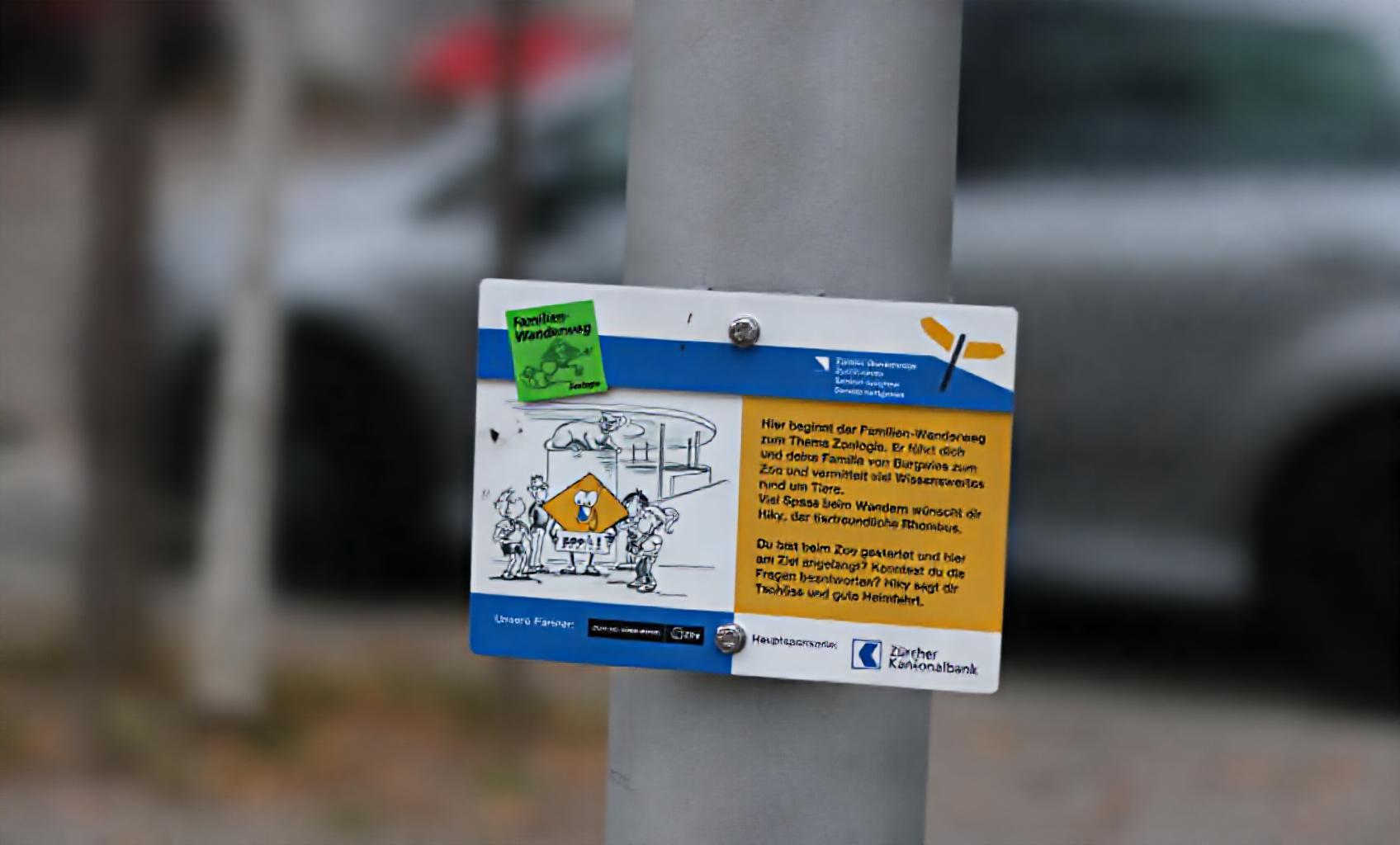} \\
    \end{tabular}

    \caption{Qualitative results of the ablation on the number of Stage-1 inference steps. From left to right and top to bottom we show the ground-truth bokeh image, followed by \ours results with 50, 20, 10, 5, and 1 inference step on the same scene.}
    \label{fig:ablation_stage1_steps_qual}
\end{figure}

To understand how the number of reverse diffusion steps in Stage-1 affects bokeh synthesis quality, we evaluate \oursbf on the \textsc{SYNTHEBOKEH300} validation set under different numbers of inference steps. Table~\ref{tab:stage1_steps} reports standard distortion and perceptual metrics for 50, 10, 5, and 1 sampling steps.

\begin{table}[htbp]
    \centering
    \begin{tabular}{ccccc}
        \toprule
        \textbf{Steps} & \textbf{PSNR$\uparrow$} & \textbf{SSIM$\uparrow$} & \textbf{LPIPS$\downarrow$} & \textbf{DISTS$\downarrow$} \\
        \midrule
        50 & 29.1215 & 0.9011 & 0.1385 & 0.0725 \\
        10 & 27.8744 & 0.8900 & 0.1444 & 0.0751 \\
        5  & 28.2572 & 0.8829 & 0.1584 & 0.0824 \\
        1  & 24.8304 & 0.8539 & 0.1671 & 0.0946 \\
        \bottomrule
    \end{tabular}
    \caption{Ablation on the number of Stage-1 inference steps on the \textsc{SYNTHEBOKEH300} validation set.}
    \label{tab:stage1_steps}
\end{table}

Using 50 steps yields the best overall reconstruction quality, but the gap between 50 and 10 steps is modest in both PSNR and SSIM, and the perceptual metrics remain close. Comparing with the results in \Cref{fig:ablation_stage1_steps_qual}, reducing the step count further to 5 or 1 leads to clear degradation across almost all metrics, which indicates insufficient convergence of the diffusion sampler and more noticeable artifacts in the synthesized bokeh.

Considering the cost of generating a full bokeh stack for Stage-2, we adopt 10 diffusion steps for all Stage-1 bokeh stack generation. This configuration achieves a balanced trade off between fidelity and efficiency and keeps the training and evaluation pipeline computationally practical.

\subsection{Ablation on Bokeh Stack Size and Blur Range}
\label{sec:ablation_nk}

We ablate two key hyperparameters of the defocus stack: the number of rendered bokeh frames $N$ and the blur-kernel range $K\in[K_{\min},K_{\max}]$. All variants are evaluated on the KITTI Eigen split under the same Stage-2 setting as~\Cref{tab:dsfa_ablation}.

\noindent\textbf{Depth accuracy.}
Table~\ref{tab:ablation_nk} reports results across configurations. Two trends emerge.
\textit{(i)~More frames help, with diminishing returns.}
A single defocused frame ($N{=}1$) is clearly insufficient; moving to $N{=}2$ yields a substantial gain (\eg, $\delta_1$ improves from $0.918$ to $0.932$), and $N{=}3$ improves further ($\delta_1{=}0.943$). However, the marginal gain from $N{=}3$ to $N{=}5$ is small ($\delta_1$: $0.943{\to}0.943$, AbsRel: $0.0842{\to}0.0838$).
\textit{(ii)~Spread of $K$ matters more than its absolute location.}
Under fixed $N{=}2$ with a narrow range, $K{\in}[10,20]$ and $K{\in}[20,30]$ perform almost identically, while the wider range $K{\in}[10,30]$ consistently outperforms both (\eg, RMSE drops from $2.881$ to $2.846$). This indicates that the effective spread of the calibrated defocus sweep, rather than simply shifting toward larger blur kernels, is the primary driver of accuracy.

Both observations align with~\Cref{prop:dfb}, which models depth recovery as a per-pixel regression on $K$. Adding observations (larger $N$) and increasing their spread (wider $[K_{\min},K_{\max}]$) improve the conditioning of this regression, reducing the OLS variance at rate $\mathcal{O}\!\bigl(\frac{1}{N\,\mathrm{Var}(K)}\bigr)$ (\Cref{eq:Delta_hat_var}).

\begin{table}[t]
\centering
\caption{Ablation on stack size $N$ and blur range $K$ (KITTI Eigen split). Best in \textbf{bold}, second best \underline{underlined}.}
\label{tab:ablation_nk}
\resizebox{\linewidth}{!}{%
\begin{tabular}{cc cccc cccc c}
\toprule
$N$ & $K$ range & $\delta_1{\uparrow}$ & $\delta_2{\uparrow}$ & $\delta_3{\uparrow}$ & AbsRel${\downarrow}$ & SqRel${\downarrow}$ & RMSE${\downarrow}$ & RMSE\textsubscript{log}${\downarrow}$ & log$_{10}{\downarrow}$ & SiLog${\downarrow}$ \\
\midrule
1 & $[10,20]$ & 0.918 & 0.987 & 0.997 & 0.0950 & 0.370 & 3.029 & 0.1291 & 0.0401 & 0.1252 \\
1 & $[20,30]$ & 0.918 & 0.987 & 0.997 & 0.0951 & 0.370 & 3.026 & 0.1291 & 0.0400 & 0.1249 \\
\midrule
2 & $[10,20]$ & 0.928 & 0.988 & 0.997 & 0.0860 & 0.342 & 2.881 & 0.1200 & 0.0375 & 0.1250 \\
2 & $[20,30]$ & 0.928 & 0.988 & 0.997 & 0.0859 & 0.341 & 2.875 & 0.1200 & 0.0374 & 0.1249 \\
2 & $[10,30]$ & 0.932 & 0.989 & 0.997 & 0.0854 & 0.332 & 2.846 & 0.1189 & 0.0374 & 0.1224 \\
\midrule
3 & $[10,30]$ & \underline{0.943} & \underline{0.992} & \underline{0.998} & \underline{0.0842} & \underline{0.296} & \underline{2.725} & \underline{0.1163} & \underline{0.0374} & \underline{0.1142} \\
5 & $[10,30]$ & \textbf{0.943} & \textbf{0.993} & \textbf{0.998} & \textbf{0.0838} & \textbf{0.290} & \textbf{2.691} & \textbf{0.1156} & \textbf{0.0370} & \textbf{0.1130} \\
\bottomrule
\end{tabular}}
\end{table}

\noindent\textbf{Inference cost.}
Table~\ref{tab:ablation_cost} profiles the wall-clock time and peak GPU memory as a function of $N$ (Stage-1 uses 30 diffusion steps). Stage-1 dominates the runtime and scales nearly linearly with $N$, while Stage-2 adds negligible overhead ($<0.2$\,s for all $N$). Memory grows modestly with $N$ due to the additional latent caching in Stage-1. Given that accuracy largely saturates after $N{=}3$ while cost continues to grow linearly, we adopt $N{=}3$ with $K{\in}[10,30]$ as the default configuration throughout all other experiments.

\begin{table}[t]
\centering
\caption{Inference cost \vs\ stack size $N$ ($K{\in}[10,30]$, Stage-1 uses 30 diffusion steps).}
\label{tab:ablation_cost}
\footnotesize
\setlength{\tabcolsep}{4.5pt}
\renewcommand{\arraystretch}{0.95}
\begin{tabular}{@{}c rrr rr@{}}
\toprule
\multirow{2}{*}{$N$} 
& \multicolumn{3}{c}{Time (s)}
& \multicolumn{2}{c}{Memory (GB)} \\
\cmidrule(lr){2-4} \cmidrule(l){5-6}
& Stage-1 & Stage-2 & Total & Stage-1 & Stage-2 \\
\midrule
1 & 16.76 & 0.08 & 16.84 & 33.08 & 26.20 \\
2 & 32.54 & 0.10 & 32.64 & 33.70 & 26.23 \\
3 & 48.66 & 0.12 & 48.78 & 34.32 & 26.33 \\
4 & 64.46 & 0.14 & 64.60 & 34.94 & 26.53 \\
5 & 79.50 & 0.16 & 79.66 & 35.56 & 26.80 \\
\bottomrule
\end{tabular}
\end{table}

\subsection{Ablation on Defocus Cues and Generator Artifacts}
\label{sec:ablation_defocus_artifacts}

We further analyze whether the proposed DSFA module learns physically meaningful defocus cues or merely exploits appearance artifacts introduced by the Stage-1 generator. To disentangle these factors, we expand the Stage-2 ablation in Table~\ref{tab:stage2_ablation_extended}. All variants are evaluated on the KITTI Eigen split under the same setting as~\Cref{tab:dsfa_ablation}. We compare five configurations: the monocular depth baseline, DSFA fed with BokehMe renderings from sparse LiDAR depth, DSFA fed with BokehMe renderings from dense predicted depth, DSFA fed with raw FLUX.1-Kontext outputs, and our full calibrated Stage-1 bokeh stack.

\begin{table*}[t]
\centering
\caption{
Extended Stage-2 ablation on the KITTI Eigen split. 
The results show that DSFA benefits from bokeh stacks only when the defocus cues are dense, spatially coherent, and physically calibrated.
}
\label{tab:stage2_ablation_extended}
\footnotesize
\setlength{\tabcolsep}{3.5pt}
\begin{tabular}{lccccccccc}
\toprule
Method 
& $\delta_1 \uparrow$ 
& $\delta_2 \uparrow$ 
& $\delta_3 \uparrow$ 
& AbsRel $\downarrow$ 
& SqRel $\downarrow$ 
& RMSE $\downarrow$ 
& RMSE(log) $\downarrow$ 
& log10 $\downarrow$ 
& SiLog $\downarrow$ \\
\midrule
DAv2 
& 0.914 & 0.987 & 0.997 & 0.097 & 0.379 & 3.030 & 0.130 & 0.041 & 0.130 \\
GT-Sparse-Depth + BokehMe + DSFA 
& 0.914 & 0.988 & 0.997 & 0.095 & 0.378 & 3.030 & 0.129 & 0.040 & 0.129 \\
Pred-Dense-Depth + BokehMe + DSFA 
& 0.918 & 0.989 & 0.997 & 0.094 & 0.370 & 3.022 & 0.128 & 0.040 & 0.125 \\
FLUX.1-Kontext + DSFA 
& 0.608 & 0.884 & 0.958 & 0.227 & 1.387 & 5.860 & 0.477 & 0.112 & 0.475 \\
Ours (Stage-1 + DSFA) 
& \textbf{0.943} & \textbf{0.992} & \textbf{0.998} & \textbf{0.084} & \textbf{0.296} & \textbf{2.725} & \textbf{0.116} & \textbf{0.037} & \textbf{0.114} \\
\bottomrule
\end{tabular}
\end{table*}

The results lead to two observations. First, multiple bokeh patterns are useful only when the rendered defocus stack is spatially usable. The variant using BokehMe with sparse LiDAR depth improves only marginally over the no-bokeh baseline, from 0.097 to 0.095 in AbsRel. This does not imply that defocus cues are ineffective. Instead, it reflects a limitation of using sparse KITTI LiDAR samples as the rendering geometry. Sparse depth provides incomplete per-pixel structure, especially around object boundaries and thin regions, where defocus rendering is most sensitive. As a result, the rendered stack contains unstable blur boundaries and limited spatial continuity, making it difficult for DSFA to extract reliable defocus-to-depth evidence. When the sparse depth input is replaced with a dense predicted depth map, the performance improves consistently, reaching 0.094 AbsRel and 0.125 SiLog. This indicates that DSFA can benefit from multiple bokeh patterns, but the stack must provide dense and coherent spatial cues.

Second, multiple generated bokeh images alone are not sufficient. If DSFA were mainly exploiting visual artifacts or style biases from a generative backbone, then feeding it raw FLUX.1-Kontext bokeh outputs should still provide useful cues. However, this variant performs substantially worse, with AbsRel increasing to 0.227 and $\delta_1$ dropping to 0.608. This failure shows that DSFA does not benefit from arbitrary generated blur patterns. Rather, it requires the blur changes across the stack to follow a consistent and calibrated defocus axis. This is the key distinction of our Stage-1 design: instead of producing unrelated stylized bokeh images, it synthesizes a multi-strength bokeh stack whose blur variation is explicitly controlled by the physical parameter $K$. The DSFA module can therefore aggregate features along a meaningful defocus sweep, rather than over unconstrained appearance changes.

Overall, this ablation suggests that the gain of our method is not caused by overfitting to FLUX.1-Kontext artifacts. The decisive factor is whether the bokeh stack is dense, cross-frame consistent, and physically calibrated. BokehMe with sparse LiDAR depth satisfies this condition only partially, dense-depth BokehMe improves the spatial usability of the stack, raw FLUX.1-Kontext satisfies none of these requirements, and our Stage-1 satisfies all three. This explains why the full model achieves the best performance, improving AbsRel from 0.097 to 0.084 and SiLog from 0.130 to 0.114 over the DAv2 baseline.

\section{Stage-2 Depth Range analysis}
We aggregate errors over all samples and all valid pixels to perform a pixel-weighted global analysis. This evaluation quantifies how much DSFA improves over the baseline in overall accuracy, and it also reveals where the gains concentrate across different depth ranges and around image edges. We report two standard regression metrics, mean absolute error (MAE) and root mean squared error (RMSE), both computed on metric depth values. For each valid pixel $p$, we measure the absolute error as $e_p=\lvert d_p-\hat d_p\rvert$ and the squared error as $(d_p-\hat d_p)^2$, where $d_p$ is the ground-truth depth and $\hat d_p$ is the predicted depth. Global MAE and RMSE are then obtained by summing these per-pixel errors across all images and dividing by the total number of valid pixels. In addition, we report the improved-pixel fraction, defined as the percentage of valid pixels whose absolute error decreases under DSFA, namely $\mathbb{1}[e^{\text{base}}_p - e^{\text{DSFA}}_p > \tau]$ with threshold $\tau$. To probe boundary quality, we compute an edge mask from the RGB gradient and evaluate MAE on the edge subset only. Finally, we provide depth-stratified MAE by grouping pixels into bins based on ground-truth depth and averaging the absolute error within each bin, which helps diagnose whether improvements come from near-range geometry, far-range structure, or both.

\begin{table}[htbp]
\centering
\caption{Depth Anything V2~\citep{yang2024depthanythingv2} results on IBims-1~\citep{koch2018evaluation} before and after DSFA. Gains are computed as \textbf{Baseline $-$ DSFA}, so positive values indicate improvement.}
\label{tab:ibims_global_edges}
\begin{tabular}{llrrrr}
\toprule
Split & Metric & Baseline & DSFA & Gain & Count \\
\midrule
Global & MAE  & 0.3518 & 0.1925 & +0.1593 & 29,293,761 \\
Global & RMSE & 0.6097 & 0.3559 & +0.2538 & 29,293,761 \\
Global & Improved pixel fraction & -- & 0.6565 & -- & 29,293,761 \\
Edges  & MAE  & 0.4376 & 0.2510 & +0.1866 & 4,297,906 \\
\bottomrule
\end{tabular}
\end{table}

\begin{table}[htbp]
\centering
\caption{Depth Anything V2~\citep{yang2024depthanythingv2}'s Depth-stratified MAE on IBims-1~\citep{koch2018evaluation}. Each bin reports the pixel-weighted MAE computed over pixels whose ground-truth depth falls within the specified range.}
\label{tab:ibims_depth_bins}
\begin{tabular}{lrrrr}
\toprule
Depth range & Baseline MAE & DSFA MAE & Gain (Base $-$ DSFA) & Valid pixels \\
\midrule
$[0,0.5)$        & 1.9363 & 1.4429 & +0.4934 & 509 \\
$[0.5,1)$        & 0.2033 & 0.1065 & +0.0968 & 618,520 \\
$[1,2)$          & 0.1985 & 0.1082 & +0.0902 & 7,940,167 \\
$[2,4)$          & 0.3120 & 0.1624 & +0.1496 & 14,546,559 \\
$[4,6)$          & 0.5294 & 0.3031 & +0.2263 & 4,183,729 \\
$[6,10)$         & 0.9227 & 0.5401 & +0.3826 & 2,004,102 \\
$[10,+\infty)$   & 2.0448 & 0.7526 & +1.2922 & 175 \\
\bottomrule
\end{tabular}
\end{table}

As shown in \Cref{tab:ibims_global_edges}, DSFA delivers a substantial and consistent improvement over the baseline. Importantly, the gain is not limited to global metrics such as MAE and RMSE, but also extends to the more challenging edge pixels. These edge regions often correspond to occlusion boundaries and thin structures, which are particularly vulnerable to the over-smoothing bias of monocular priors. DSFA achieves a clear reduction in edge MAE, indicating that it effectively leverages the additional cues to correct boundary geometry rather than merely improving easy interior areas.

The depth-range analysis in \Cref{tab:ibims_depth_bins} further supports this conclusion. DSFA yields positive improvements across all depth intervals, and the margin becomes more pronounced at larger depths. This trend aligns with the intuition that monocular depth estimation increasingly relies on semantic scale assumptions in the far range, while focal stacks provide complementary and directly observable constraints. Taken together, these results suggest that DSFA offers a systematic improvement, affecting a large fraction of pixels and enhancing boundary fidelity, instead of reflecting incidental fluctuations in averaged scores.

\section{Additional Qualitative Results}

We provide additional qualitative results for Stage-1 in \Cref{fig:sys300_supp} and \Cref{fig:ebb_supp}, and for the full pipeline in \Cref{fig:teaser_supp}, \Cref{fig:stage2_dav2_1}, and \Cref{fig:stage2_dav2_2}.

We visualize DSFA improvements using two complementary maps. The $\Delta$Error map is a diverging visualization of the error difference. Given $|\text{err}_{\text{base}}| = |\text{pred}_{\text{base}} - \text{gt}|$ and $|\text{err}_{\text{dsfa}}| = |\text{pred}_{\text{dsfa}} - \text{gt}|$, we define $\Delta = |\text{err}_{\text{base}}| - |\text{err}_{\text{dsfa}}|$. Red regions indicate larger baseline errors and smaller DSFA errors, which means DSFA improves prediction accuracy in those areas. 

For the overlay visualization, we define an improvement mask as $\text{improvement\_mask} = (\Delta > 0.001) \wedge \text{mask}$. Pixels satisfying this condition are rendered in green on top of the original RGB image. Green regions indicate that DSFA strictly reduces the absolute error by at least $0.001$ meters at those pixels.

\begin{figure*}[t]
    \centering
    \includegraphics[width=\textwidth,keepaspectratio]{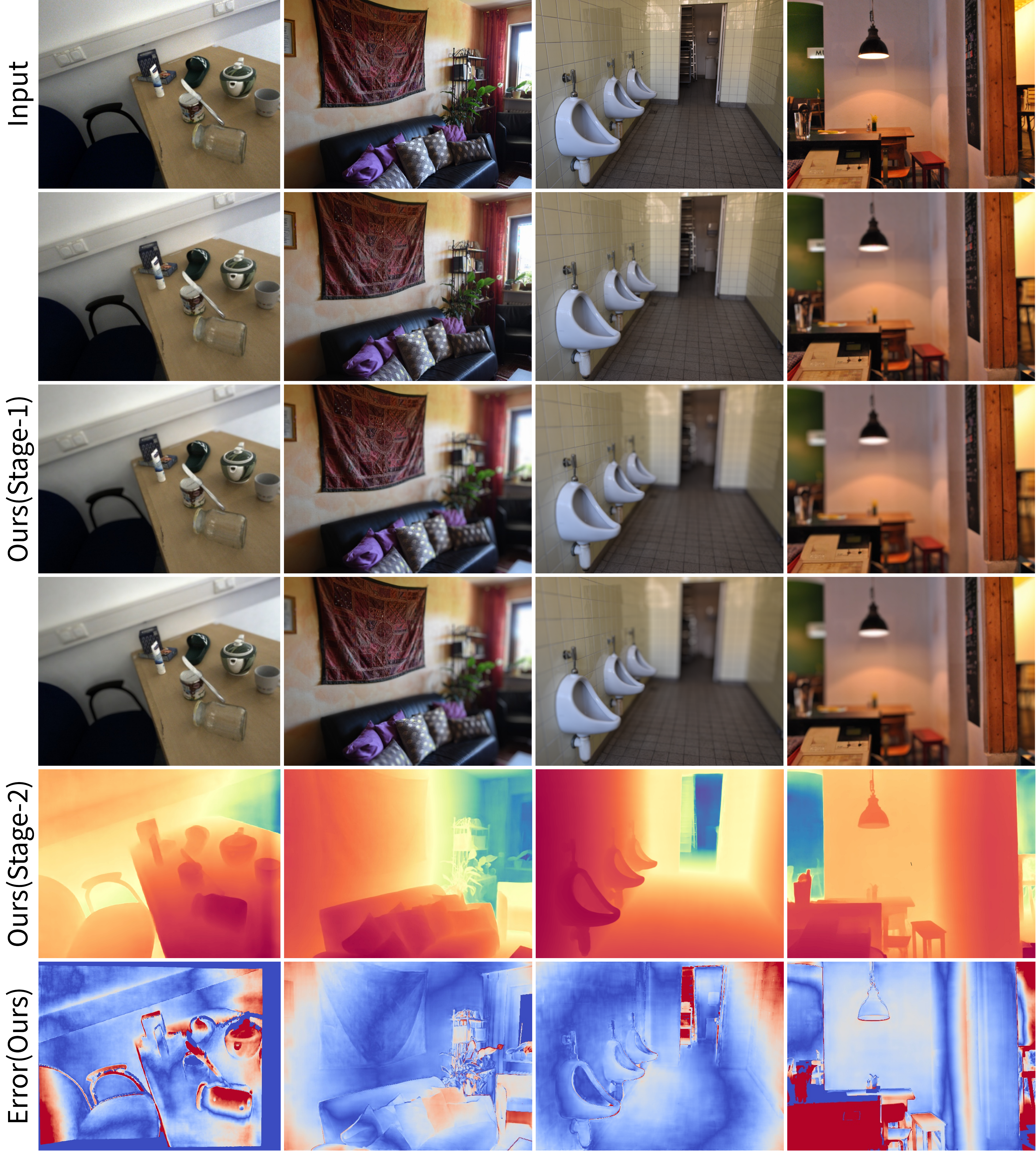}
    \caption{Qualitative results of \oursbf using the Depth Anything V2~\citep{yang2024depthanythingv2} backbone. From top to bottom: the input image, three representative frames from the Stage-1 bokeh stack, the Stage-2 depth prediction, and the error map of \ours.}
    \label{fig:stage2_dav2_1}
\end{figure*}

\begin{figure*}[t]
    \centering
    \includegraphics[width=\textwidth,keepaspectratio]{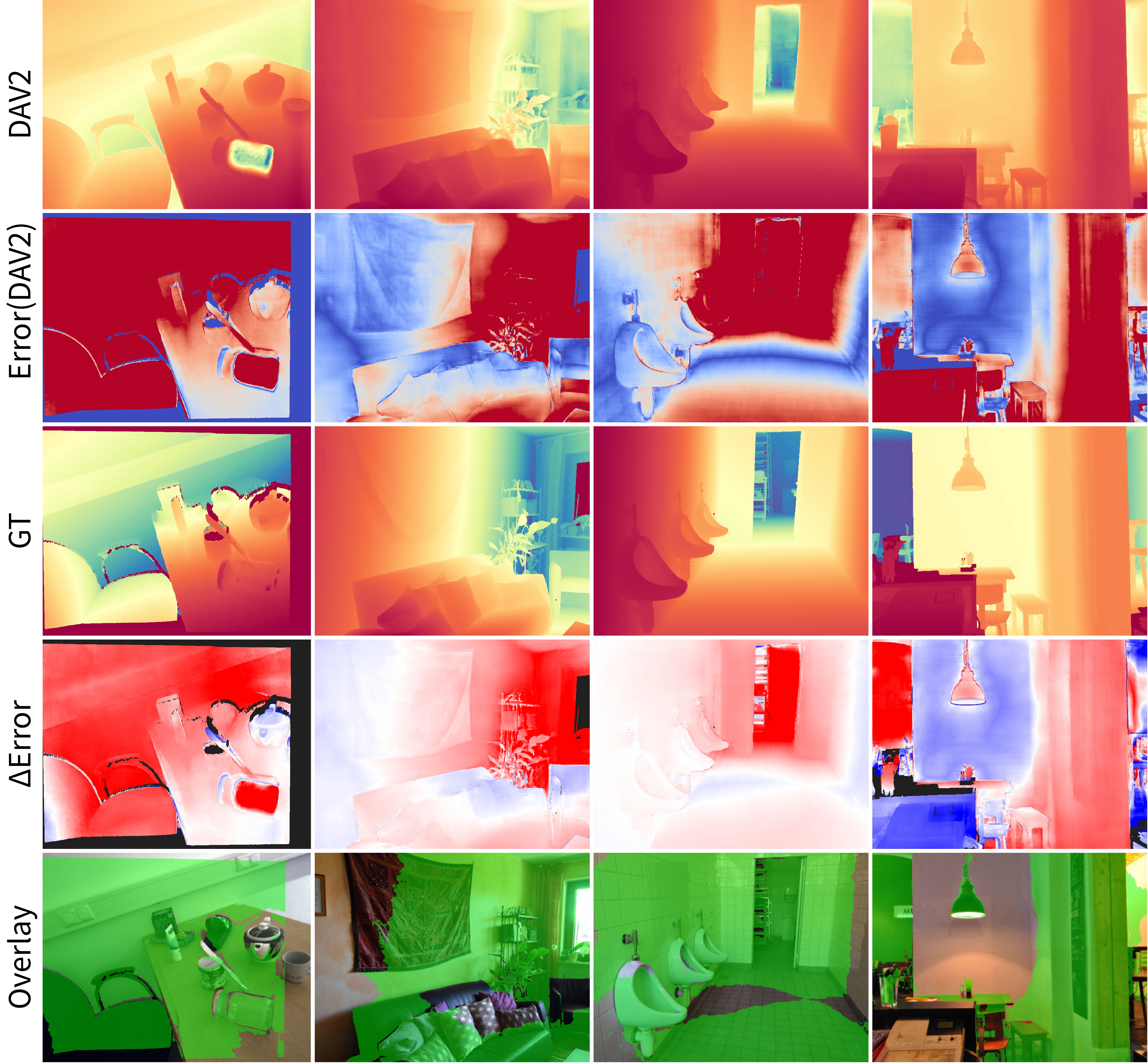}
    \caption{Qualitative results of \oursbf using the Depth Anything V2~\citep{yang2024depthanythingv2} backbone (continued from \Cref{fig:stage2_dav2_1}). From top to bottom: the Depth Anything V2 prediction, the corresponding error map, the ground truth depth, the $\Delta$Error map that reports the per-pixel reduction in absolute depth error of \ours over the base model, and the RGB image overlaid with green regions that mark where our method produces notable improvements.  \ours lowers depth errors on fine structures, weakly-textured walls and distant background regions, offering more distinct layer separation and steadier metric depth across varied scenes.}
    \label{fig:stage2_dav2_2}
\end{figure*}

\section{Limitations.} The core evaluation of our method focuses on single-image monocular metric depth estimation. However, real-world deployment often requires video-level temporal consistency, robustness to dynamic scenes with moving objects, and stable performance under practical imaging factors such as exposure variation, rolling shutter, and focus drift. In addition, our bokeh stack is produced by editing a single viewpoint into multiple defocus levels, which does not fully reflect the temporal imaging process of a real camera. As a result, it remains unclear whether the proposed defocus cues can consistently improve depth estimation in video settings, and validating this extension is an important direction for future work.

\end{document}